\newtheorem{theorem}{Theorem}
\newtheorem{assumption}{Assumption}
\newcommand{\idop}{\textbf{id}}
\newcommand{\diff}{\,\text{d}}
\newcommand{\dth}{\diff{\theta}}
\newcommand{\half}{{\frac 12}}
\newcommand{\Rd}{\mathbb{R}^d}
\newcommand{\transp}{^\top}
\newcommand{\invtransp}{^{-\top}}
\newcommand{\inv}{^{-1}}
\newcommand{\bfalpha}{{\boldsymbol{\alpha}}}
\newcommand{\lipsobo}{{W^{1,\infty}(\Omega)^d}}
\newcommand{\meshskel}{\mathcal{E}_{h}}
\newcommand{\meshskelint}{\mathring{\mathcal{E}}_{h}}
\newcommand{\kalmanp}{\mathfrak{K}_{\mathbf P}}
\newcommand{\cov}{\text{Cov}}
\newcommand{\diffomega}{\text{Diff}(\Omega)}
\newcommand{\diffplusS}{\text{Diff}_+(S^1)}
\newcommand{\emb}{\text{Emb}(S^1,\,\mathbb{R}^d)}
\newcommand{\embomega}{\text{Emb}(S^1,\,\Omega)}
\newcommand{\momentum}{\mathbf{p}}
\newcommand{\AvgMomentum}{\bar{\momentum}}
\newcommand{\bq}{\boldsymbol{\tau}}
\newcommand{\AvgPoint}{\bar{\bq}}
\newcommand{\qtarget}{\bq_\textnormal{target}}
\newcommand{\bqtarget}{\bq_\textnormal{target}}
\newcommand{\MomentumSpace}{\mathbf{P}}
\newsavebox{\@brx}
\newcommand{\llangle}[1][]{\savebox{\@brx}{\(\m@th{#1\langle}\)}%
  \mathopen{\copy\@brx\kern-0.5\wd\@brx\usebox{\@brx}}}
\newcommand{\rrangle}[1][]{\savebox{\@brx}{\(\m@th{#1\rangle}\)}%
  \mathclose{\copy\@brx\kern-0.5\wd\@brx\usebox{\@brx}}}
\newcommand{\bfv}{\mathbf{v}}
\def \mcp{\mathcal{P}}
\def \bfe{\mathbf{e}}
\def \bfn{\mathbf{n}}
\def \bfs{\mathbf{s}}
\def \bft{\mathbf{t}}
\def \bfv{\mathbf{v}}
\def \bfx{\mathbf{x}}
\def \bfy{\mathbf{y}}
\def \wx{\mathcal{W}}
\def \wxr{\mathcal{W}_r}
\journal{Computers \& Mathematics with Applications}
\begin{document}
\begin{frontmatter}

\title{Planar Curve Registration using Bayesian Inversion}

\author[dtu]{Andreas Bock\corref{cor1}}
\ead{aasbo@dtu.dk}
\affiliation[dtu]{organization={Department of Applied Mathematics and Computer Science, Technical University of Denmark},
             addressline={Richard Petersens Plads, Building 324},
             city={Kongens Lyngby},
             postcode={2800},
             country={Denmark}}
\cortext[cor1]{Corresponding author}
\author[imperial]{Colin J. Cotter}
\ead{colin.cotter@imperial.ac.uk}
\affiliation[imperial]{organization={Department of Mathematics, Imperial College London},
             addressline={180 Queen's Gate, South Kensington},
             city={London},
             postcode={SW72RH},
             country={United Kingdom}}
\author[baylor]{Robert C.~Kirby}
\ead{robert_kirby@baylor.edu}
\affiliation[baylor]{organization={Department of Mathematics, Baylor University},
             addressline={1410 S.4th Street, Sid Richardson Science Building},
             city={Waco},
             postcode={76706},
             state={Texas},
             country={United States of America}}

\begin{abstract}
We study parameterisation-independent closed planar curve matching
as a Bayesian inverse problem. The motion of the curve is modelled 
via a curve on the diffeomorphism group acting on the ambient space,
leading to a \emph{large deformation diffeomorphic metric mapping} 
(LDDMM) functional penalising the kinetic energy of the deformation.
We solve Hamilton's equations for the curve matching problem using
the Wu-Xu element  [S. Wu, J. Xu, Nonconforming finite element spaces
for $2m^{\text{th}}$ order partial differential equations on $\mathbb{R}^n$ simplicial 
grids when $m = n + 1$, Mathematics of Computation 88 (316) (2019)
 531–551] which provides 
mesh-independent Lipschitz constants for the forward motion of the
curve, and solve the inverse problem for the momentum using Bayesian
inversion.  Since this element is not affine-equivalent
we provide a \emph{pullback theory} which expedites the implementation
and efficiency of the forward map. We adopt ensemble Kalman inversion using
a negative Sobolev norm mismatch penalty to measure the discrepancy
between the target and the ensemble mean shape. We provide
several numerical examples to validate the approach.
\end{abstract}

\begin{keyword}
Closed curve matching \sep Nonconforming finite element method \sep Bayesian inverse problem
\PACS
87.57.N
\MSC
65M60   
\sep 65P10  
\sep 65M32  
\end{keyword}

\end{frontmatter}

\section{Introduction}

Closed curve matching is a central problem in shape analysis where the goal is
to bring into alignment two closed curves in $\emb$ called the \emph{template} 
and the \emph{target} \cite{younes2010shapes}. For unparameterised curves,
the shape space for these objects is 
$Q = \emb\setminus\diffplusS$ \cite{michor2007overview,bauer2014overview}.
This quotient space disassociates the curve from arbitrary reparameterisation
since they do not affect the range of the curves in question.
This gives rise to studying the commuting left and right actions of two Lie groups,
$G=\text{Diff}_+(\mathbb{R}^2)$ and  $H=\text{Diff}_+(S^1)$ as in  \cite{cotter2009geodesic}:
\begin{equation}\label{eq:group_actions}
GQ = \text{Emb}(S^1, G.\mathbb{R}^2),\qquad HQ = \text{Emb}(H.S^1,\mathbb{R}^2).
\end{equation}
In the context of developing algorithms for planar curve matching, these group
actions must be explicitly discretised. In this paper we our shape space with the
so-called \emph{outer} metric inherited by $G$ which acts on the ambient space.
This is in contrast to inner metrics intrinsically defined on the embedded shape
\cite{bauer2011new}, see  \cite{michor2007overview} for a comparison. To treat
the parameterisation, one can parameterise elements of $H$ using its Lie
algebra and exploit its vector space structure. In this paper we consider a mismatch
penalty that eliminates the need to treat $H$ explicitly. Instead we note that two
closed curves $c_1$ and $c_2$ are similar when the difference between the indicator 
function $\mathbb{1}$ evaluated on their interiors is small. For some linear differential
operator $\mathcal{C}$ we therefore we define the mismatch, or \emph{misfit}, between
them as:
\begin{equation}\label{eq:mismatch}
\mathfrak{E}(c_1, c_2) = \| \mathbb{1}_{c_1} -  \mathbb{1}_{c_2} \|_{\mathcal{C}}^2,
\end{equation}
where $\| f \|_{\mathcal{C}}^2 = \langle\mathcal{C}^{-1}f,\mathcal{C}^{-1}f\rangle_{L^2}$ 
over some computational domain described later. For the outer metric we take the LDDMM
approach \cite{glaunes2008large} and consider a one-parameter family of velocities
$t\mapsto u_t$ encoding the motion of the ambient space (and therefore the 
shape) which simultaneously provides a distance measure.\\

We discretise the velocity field using finite elements, specifically the Wu-Xu element
\cite{wu2019nonconforming}. This element provides a nonconforming discretisation for
sixth order operators; sixth order is necessary for the diffeomorphism to be sufficiently
smooth for the computations that we undertake. The implementation of this element in
Firedrake \cite{rathgeber2016firedrake} is made possible by applying the theory of 
\cite{kirby2018general} and techniques for code generation in~\cite{kirby2019code}. Given certain assumptions on the structure of our 
problem we can identify this entire family of velocities with a single
initial momentum defined as a function over the template. We eliminate its
evolution equation by using the analytical solution, and restrict the initial
conditions to only generate geodesics in the space of unparameterised curves.
This results in a \emph{forward map}, taking as input the momentum and 
providing the diffeomorphism whose action maps the template to 
the target curve. After obtaining a finite element discretisation
of this map we apply massively parallel and derivative-free ensemble Kalman
inversion which we use to invert the forward map for the initial momentum 
determining the geodesic motion of the curve.

\subsection{Previous work}

Diffeomorphic registration has enjoyed a rich literature since the
seminal works \cite{grenander1998computational,dupuis1998variational}.
For curves specifically, \cite{glaunes2004diffeomorphic,vaillant2005surface}
present the first algorithms for modelling curve matching via gradient
descent methods. \cite{glaunes2008large} represents curves as measures onto
which a Hilbert structure is endowed, and computations of both the outer
metric and the curves are done via radial reproducing kernels producing
$C^\infty$ velocities. In particular, curves were represented as geometric
currents.
\cite{bauer2019relaxed} studies such a varifold-based loss function
for elastic metrics, see also \cite{bauer2015curve,bauer2017numerical,hartman2022elastic}
for numerical frameworks for $H^2$ metrics. \cite{bharath2020analysis}
contains a review of methods related to elastic curves.\\

In this paper we are concerned with higher-order metrics using
finite elements. While there is typically a loss of regularity incurred
by these methods, they offer more computationally efficient methods than
e.g. kernel methods. Finite elements also benefit from spatial 
adaptivity allowing for local refinement e.g. close to embedded curves.
Closest to our approach in terms of discretisation are
\cite{cotter2008variational,cotter2012reparameterisation} where a 
\emph{particle-mesh} method is employed for curve matching where the
curve was discretised into a finite set of particles, acted on by an
outer metric. However, we consider instead an outer metric \emph{finite element}
discretisation (as opposed to the intrinsic metric in 
\cite{bauer2011new}). \cite{gunther2013flexible} presents an adaptive
Eulerian FEM discretisation of the velocity field for LDDMM using $C^1$
cubic Hermite elements and compares the deformations generated using
$C^\infty$ fields to assess the effect of the loss of regularity.
Smooth mesh deformations are also of interest in shape optimisation 
where the aim is to transform a mesh such that some functional is minimised.
Finite element methods are also adopted here, with deformation fields being
discretised using B-splines \cite{hollig2003finite}, harmonic polynomials 
or Lagrange finite elements depending the desired resolution or order 
\cite{paganini2018higher}.  Using the finite element space introduced
in \cite{wu2019nonconforming} we can guarantee that the Lipschitz
norm remains bounded under mesh refinement without resorting to spline
or kernel discretisations. As mentioned, we use Firedrake 
\cite{rathgeber2016firedrake} for all our numerical experiments, see 
also \cite{paganini2021fireshape} for an extension of this package for
shape optimisation.\\

Our formulation eliminates the need to integrate the momentum equation 
via its analytical solution thereby improving on the typically larger cost
of Hamiltonian shooting based methods \cite{vialard2012diffeomorphic} 
compared to an LDDMM formulation \cite{glaunes2008large}. We
only need to solve an elliptic equation to obtain the velocity
and use a simple variational Euler scheme to
evolve the diffeomorphism. Traditional approaches in numerical shape
analysis often apply a shooting procedures to determine the initial
momentum transporting the image or landmarks to the desiderata, see
e.g. \cite{bock2019selective,miller2006geodesic}. Bayesian approaches
have been employed before in the context of shape analysis, see e.g.
\cite{cotter2013bayesian} where function space Markov Chain Monte Carlo
is used to characterise the posterior density of momenta generating a
given shape. Similar to our approach is \cite{bock2021learning}
in which  ensemble Kalman inversion
\cite{iglesias2013ensemble,iglesias2016regularizing} is applied to
recover the momentum for landmark matching.

\subsection{Organisation}

Section \ref{sec:diffeo_reg} contains an introduction to
diffeomorphic curve matching and the associated Hamiltonian systems,
We also discuss the application of the finite element approach
using the Wu-Xu element from \cite{wu2019nonconforming} and the discretisation
of the velocity equation. Section \ref{sec:pullback_theory} contains
the transformation theory for the Wu-Xu element, and Section
\ref{sec:discretisation} contains details of the discretisation of the
Hamiltonian equations. Next, Section \ref{sec:inverse_problem} discusses the
Bayesian inverse problem, and Section \ref{sec:applications} contains
numerical results. Section \ref{sec:conclusion} contains a summary.

\section{Diffeomorphic registration}\label{sec:diffeo_reg}

Let $\Omega$ be a 
connected convex subset of $\Rd$, $d=2$, with polygonal boundary $\partial\Omega$.
We study maps $q\in Q=H^1(S^1,\Rd)$ from a template curve $\Gamma_0\in\embomega$ to
a target curve $\Gamma_1\in\embomega$ whose motion is restricted by the differential
equation:
\begin{equation}\label{eq:qmotion}
\dot{q}_t = u_t\circ q_t\,,
\end{equation}
where $u_t$, $t \in [0,1]$ is a family of time-dependent vector fields on
$\Omega$ with some prescribed spatial smoothness. A \emph{geodesic path} between
two such parameterised curves $\Gamma_0$ and $\Gamma_1$ is defined as a path
minimising the associated kinetic energy in $u$:
\begin{equation}\label{eq:kinetic_energy}
\frac{1}{2}\int_0^1\|u_t\|^2 \diff t,
\end{equation}
where $\|\cdot\|$ dominates the Lipschitz norm.
In fact, since $u_t$ is supported on $\Omega$ it generates a curve on
$\diffomega$ \cite{younes2010shapes} of the entire ambient space via:
\begin{equation}\label{eq:varphi}
\dot{\varphi}_t = u_t\circ \varphi_t,\quad\varphi_0 = \idop,
\end{equation}
whose motion restricted to the curve $q_0 \circ S^1$ equals the $q_t \circ S^1$
at time $t \in [0,1]$. As the kinetic energy measures distances between two
elements of $\emb$ via velocity defined over the entire field $\Omega$, we refer
to this associated distance measure as an \emph{outer} metric on the shape space
$\emb$.

\subsection{Hamiltonian system}\label{sec:hamiltonian}

Here we take a Hamiltonian approach \cite{glaunes2006modeling} and introduce
the momentum $p_t\in T^*Q$ occupying the linear cotangent space, which we assume has enough 
regularity so that it has a Fr\'echet-Riesz representer in $L^2(S^1)$ (also
denoted $p_t$, with some abuse of notation). We extremise the following the
functional:
\begin{align}\nonumber
S = \int_0^1 \frac{1}{2}\|u_t\|^2 + \langle p_t, \dot{q}_t- u_t\circ q_t\rangle\diff t,
\end{align}
where $\langle h, g \rangle= \int_{S^1} h\cdot g \dth$. Taking variations i.e.
$\delta S = 0$ leads to Hamilton's equations for curve matching for
$t\in [0,1]$:
\begin{subequations}\label{eq:hamiltonseqs}
\begin{align}
&  \int_0^1\langle\delta p, \dot{q}_t - u_t\circ q_t\rangle \diff t = 0,
&  \forall \delta p\in L^2(S^1), \label{eq:hamiltonseqs:q_update}\\
&  \int_0^1\langle\dot{p}_t - \nabla u_t\transp\circ q_t p_t, \delta q\rangle \diff t = 0,
& \forall \delta q \in Q, \label{eq:hamiltonseqs:p_update}\\
&  \half \frac{\delta \| u_t\|^2}{\delta u}  - \langle p_t, \delta u\circ q_t\rangle = 0.\label{eq:hamiltonseqs:u_update} &
\end{align}
\end{subequations}
where $\delta p$, $\delta u$ and $\delta q$ are space-time test functions.
The following theorem shows that we can solve \eqref{eq:hamiltonseqs:p_update}
analytically:
\begin{theorem}\label{thm:momchar}
The solution $p_t$ to \eqref{eq:hamiltonseqs:p_update} is at all times $t\geq 0$ given by $p_t = \nabla
\varphi_t\invtransp\circ q_0 p_0$.
\end{theorem}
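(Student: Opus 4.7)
The plan is to interpret \eqref{eq:hamiltonseqs:p_update} in strong form and then verify the closed-form ansatz by direct differentiation. Since the weak identity must hold for every space-time variation $\delta q \in Q$, a fundamental-lemma-of-the-calculus-of-variations argument yields the pointwise-in-$\theta$ linear ODE obtained by stripping the test function, with initial datum $p_0$. The assumed regularity on $u_t$ guarantees a Lipschitz flow $\varphi_t$ via \eqref{eq:varphi}, so Picard--Lindel\"of gives uniqueness of solutions to that ODE; consequently it suffices to exhibit one solution with the correct value at $t=0$.

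To execute this, I would first check that the ansatz $p_t = \nabla\varphi_t\invtransp\circ q_0\, p_0$ satisfies the initial condition: at $t=0$ the flow equation \eqref{eq:varphi} yields $\varphi_0 = \idop$, hence $\nabla\varphi_0\invtransp = I$ and the ansatz collapses to $p_0$. For the time evolution I would set $A(t,\theta) = \nabla\varphi_t(q_0(\theta))$ and differentiate \eqref{eq:varphi} in $t$, applying the chain rule to obtain $\dot A = \nabla u_t(\varphi_t \circ q_0)\, A$. At this point one substitutes $\varphi_t \circ q_0 = q_t$, which follows from \eqref{eq:qmotion} and \eqref{eq:varphi} because both sides solve the same ODE with the same initial data. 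The final calculation uses the matrix identity $\tfrac{d}{dt}A\invtransp = -A\invtransp \dot A\transp A\invtransp$ to compute $\tfrac{d}{dt}(\nabla\varphi_t\invtransp\circ q_0)$, and multiplication by the $t$-independent factor $p_0$ then recovers the strong form of \eqref{eq:hamiltonseqs:p_update}.

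The main obstacle is the notational bookkeeping around composition and the inverse-transpose: one must keep straight the distinction between $(\nabla\varphi_t)\invtransp\circ q_0$ (the Jacobian matrix $\nabla\varphi_t$ evaluated at the template point $q_0(\theta)$, then inverted and transposed) and $\nabla(\varphi_t^{-1})$ (the Jacobian of the inverse diffeomorphism, evaluated at a transported point). The two are linked by differentiating $\varphi_t^{-1}\circ\varphi_t = \idop$ in space, giving $\nabla(\varphi_t^{-1})(\varphi_t(x)) = (\nabla\varphi_t(x))\inv$. A secondary technicality is ensuring enough regularity of $t\mapsto p_t$ for $\dot p_t$ to exist classically for the verification, which is inherited from the smoothness of the flow provided by the Wu--Xu-based discretisation of $u_t$ discussed in the introduction.
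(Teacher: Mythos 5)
Your proposal is correct and follows essentially the same route as the paper's proof: passing to the strong form of \eqref{eq:hamiltonseqs:p_update}, writing $J_t = \nabla\varphi_t\circ q_0$ with $\dot J_t = \nabla u_t\circ q_t\, J_t$, and verifying the ansatz via the identity $\tfrac{d}{dt}J_t\invtransp = -J_t\invtransp \dot J_t\transp J_t\invtransp$. The extra care you take with the initial condition, the weak-to-strong passage, and uniqueness only adds rigour to what the paper presents as a direct verification.
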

\begin{proof}
See~\ref{app:momchar}.    
\end{proof}

To generate \emph{parameterisation-independent} geodesics as in
\cite{cotter2009geodesic} we replace the initial condition $q_0$ by
$q_0\circ\eta$, where $\eta\in\text{Diff}_+(S^1)$ in the case of planar curves
is an arbitrary reparameterisation. As a result of this quotient representation
$\emb\setminus\text{Diff}_+(S^1)$ of curves we minimise over all $\eta$ leading to the
\emph{horizontality condition} on the momentum. This means that the
momentum $p_0$ has no tangential component and can therefore
be described by a one-dimensional signal, $\tilde{p}_0: S^1\mapsto\mathbb{R}$:
\begin{align}\nonumber
p_0 = \mathbf{n}_{q_0} \tilde{p}_0
\end{align}
where $\mathbf{n}_{q_0}:S^1\rightarrow\mathbb{R}^2$ is the outward normal
of the template. Thus, along with Theorem \ref{thm:momchar} we have the 
following characterisation,
\begin{equation}\label{eq:reduced_momentum}
p_t = \varphi_t\invtransp\circ q_0\invtransp \mathbf{n}_{q_0} \tilde{p}_0.
\end{equation}
This generates trajectories of geodesics between unparameterised 
curves. The entire geodesic motion of the curve can therefore be determined 
by a  one-dimensional signal along the initial curve $q_0$.
To summarise this section we are concerned with integration of the 
following reduced Hamiltonian system for $t\in [0,1]$:
\begin{subequations}\label{eq:hamiltonseqs_reduced}
\begin{align}
&  \half \frac{\delta \| u_t\|^2}{\delta u}  = \langle\varphi_t\invtransp\circ q_0\mathbf{n}_{q_0} \tilde{p}_0, \delta u\circ q_t\rangle,
& \label{eq:hamiltonseqs_reduced:u}\\
&  \dot{q}_t = u_t\circ q_t,\label{eq:hamiltonseqs_reduced:q}
\end{align}
\end{subequations}
with $q_0$ and $\tilde{p}_0$ fixed and boundary conditions $u_t|_{\partial\Omega}=0$
for all $t\in [0,1]$. Next we discuss a discretisation of \eqref{eq:hamiltonseqs_reduced}.

\subsection{Outer metric via finite elements}\label{sec:outer_metric}

From Picard-Lindelh\"of analysis it is clear that the Banach space 
ordinary differential equation (ODE) \eqref{eq:hamiltonseqs_reduced:q} require 
a pointwise Lipschitz condition on $u_t$. As such, $u_t$ must occupy at least 
$\lipsobo$ when $q_0 \in L^\infty(S^1)$, see \cite[Theorem
5]{cotter2013bayesian} (see also Corollary 7 in this reference for other host
spaces). Dupuis \cite{dupuis1998variational} establishes
sufficient conditions accomplishing the same in a Hilbertian setting. The
Hilbertian setting is better suited to finite element methods. 
This is in contrast with $\lipsobo$ which is only a Banach space and, to the
best of the authors' ability, is not easy to approximate
numerically\footnote{\cite{lakkis2015adaptive} approximates by means of a fixed
point linearisation solutions to the nonlinear $\infty$-harmonic equation
\cite{barron2008infinity}.}. We therefore request a norm $\|\cdot\|$
such a way that a solution to \eqref{eq:hamiltonseqs_reduced:u} ensures that 
this condition is met, which in turn implies global existence and uniqueness of
\eqref{eq:hamiltonseqs_reduced:q} by the references above. For $d=2,\,3$,
$H_0^3(\Omega)$ is contained in
$\mathsf{C}^1(\bar{\Omega})$ and so is Lipschitz on the interior \cite[Theorem
2.5.1]{ziemer2012weakly}. As such, we want to describe a discretisation of
\eqref{eq:hamiltonseqs_reduced:u} ensuring a type of $H^3$ regularity as the
follow theorem shows.

\begin{theorem}\label{thm:H3Lip}
Let $O$ be a convex bounded Lipschitz domain in $\mathbb R^d$ with polygonal
boundary and $O_h$ a shape-regular, quasi-uniform triangulation thereof
\cite{ErnGuermond2013} for some mesh size $h>0$. Suppose further that 
$u$ is continuous on $\bar{O}$, $u|_K \in H^3(K)^d$ for $K\in O_h$ and that there exists an operator $B$
inducing the norm $\|u\|_B^2 = \sum_{K\in O_h} \|u\|_{B(K)}^2$, where we define
$\|u\|_{B(K)}^2 = \int_K Bu\cdot u\diff x$ such that $\|u\|_{H^3(K)^d}\lesssim \|u\|_{B(K)}$. Then $u\in W^{1,\infty}(O)^d$.
\end{theorem}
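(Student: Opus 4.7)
The strategy splits naturally into a local Sobolev-embedding step and a global gluing step that exploits the convexity of $O$.

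First, for each element $K \in O_h$, I would invoke the Sobolev embedding $H^3(K)^d \hookrightarrow C^1(\bar K)^d$, which holds for $d=2,3$ since $3 > 1 + d/2$. This yields a classical gradient $\nabla(u|_K) \in C(\bar K)^{d\times d}$ together with the bound $\|\nabla(u|_K)\|_{L^\infty(K)} \lesssim C_K \|u\|_{H^3(K)^d}$, where $C_K$ comes from the usual affine change of variable from a reference element and depends only on the shape-regularity constant. Because the triangulation is finite, the number $L := \max_{K \in O_h} \|\nabla(u|_K)\|_{L^\infty(K)}$ is finite.

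Second, I would show that $u$ is globally Lipschitz on $\bar O$ with constant $L$. Given $x,y\in \bar O$, the convexity of $O$ guarantees that the segment $[x,y]$ lies in $\bar O$ and intersects only finitely many elements, yielding a partition $x = x_0, x_1, \ldots, x_m = y$ with each sub-segment $[x_{i-1},x_i]$ contained in some $\bar K_i$. On every such sub-segment $u|_{K_i}\in C^1(\bar K_i)^d$ is classically Lipschitz with constant at most $L$, so a telescoping estimate together with the global continuity of $u$ at the junctions $x_i$ gives $|u(x) - u(y)| \le L\,|x-y|$. Hence $u \in C^{0,1}(\bar O)^d$, and by the standard identification on bounded Lipschitz domains this coincides with $W^{1,\infty}(O)^d$.

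The main obstacle is precisely the gluing in step two: the piecewise $C^1$ regularity is not sufficient on its own, and one genuinely needs the hypothesis $u \in C(\bar O)^d$ to combine the one-sided classical Lipschitz estimates coherently across element interfaces; without it, jumps at interior facets would obstruct even an $L^\infty$ bound for the distributional gradient. I note that the norm-equivalence hypothesis $\|u\|_{H^3(K)^d} \lesssim \|u\|_{B(K)}$ is not strictly needed for this pure regularity conclusion; its role is rather to let the subsequent variational problem, posed in the energy norm $\|\cdot\|_B$, guarantee Lipschitz control of the discrete velocity field in terms of quantities that are natural for the Wu-Xu element.
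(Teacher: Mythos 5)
Your proof is correct and follows essentially the same route as the paper: a per-element Sobolev embedding of $H^3(K)^d$ into a $C^1$-type space, combined with the assumed global continuity of $u$, upgrades the piecewise gradient bound to a genuine Lipschitz estimate on all of $\bar{O}$. You are in fact more explicit than the paper on the gluing step --- the paper simply asserts that continuity plus a.e.\ bounded piecewise first derivatives implies a Lipschitz condition, whereas your segment-and-telescope argument makes the necessary use of convexity visible --- and you correctly note that the hypothesis $\|u\|_{H^3(K)^d}\lesssim\|u\|_{B(K)}$ is only needed for the quantitative bound $\|u\|_{W^{1,\infty}(O)^d}\lesssim\|u\|_{B}$, which the paper's proof records even though the stated conclusion is purely qualitative.
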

\begin{proof}
The embedding theorem for homogeneous Sobolev spaces (i.e. with zero traces)
into the space $\mathsf{C}^j(\bar{O})$ are well-known. However, since the trace
$\gamma_K u$ of $u$ on $\partial K$, $K\in O_h$ may not be zero. By \cite[Theorem 4.12]{adams2003sobolev}, $H^3(K) \hookrightarrow \mathsf{C}_B^1(K)$, where:
\begin{align}\nonumber
\mathsf{C}_B^1(K) = \{ u \in \mathsf{C}^1(K)\;|\; D^\bfalpha u\textnormal{ is
bounded on } K,\; |\bfalpha|\leq 1\}. 
\end{align}
This means any $H^3(K)$ function has a continuous representative with almost
everywhere bounded first derivatives on $K$. Since
$u\in\mathsf{C}^0(\bar{O})$, $u$ is a continuous function with its first
derivative a.e. bounded, implying a Lipschitz condition. To summarise:
\begin{align*}
\| u \|_{W^{1,\infty}(K)^d}^2 & \lesssim \| u \|_{H^3(K)^d}^2 \lesssim \| u \|_{B(K)}^2
\end{align*}
Summing over the elements $K\in\Omega$ and squaring:
\begin{align*}
\| u \|_{W^{1,\infty}(O)^d}^2\lesssim \| u \|_B^2.
\end{align*}
where we have used that $u$ is a continuous function with essentially bounded
gradient.
\end{proof}
In light of this theorem we approximate the space of velocity fields by a 
nonconforming finite element space (see e.g. \cite[Section 10.3]{brenner2008mathematical})
This way we can guarantee the necessary Lipschitz properties of our functions
without having to impose higher-order \emph{global} continuity of the finite-dimensional solution spaces.\\

In Section \ref{sec:discretisation} we use the $H^3$-nonconforming finite element
space presented in \cite[Section 4]{wu2019nonconforming} in a discretisation
of \eqref{eq:hamiltonseqs_reduced}. We choose the operator $B=(\idop - \alpha\Delta)^{2m}$
for a given positive constant $\alpha$ leading to the following bilinear form:
\begin{equation}\label{eq:bilinear_form}
a_\Omega(u, v) = \sum_{i=1}^d\int_{\Omega} \sum_{j=0}^m \alpha^j \binom{m}{j} D^j u^i\cdot
D^j v^i \diff x = \int_\Omega Bu\cdot v\diff x,
\end{equation}
where $x\cdot y$ is the Euclidean inner product, $D^0 = \idop$, and
\[
D^j =\begin{cases} \nabla D^{j-1} & j\text{ is odd},\\
\nabla\cdot D^{j-1} & j\text{ is even}. \end{cases}
\]

\section{A pullback theory for the Wu-Xu element}\label{sec:pullback_theory}

The Wu-Xu element provides an opportunity to tackle this problem in a (nonconforming) $H^3$ setting, but it presents challenges for implementation.  Although we can construct its basis on a reference element, say, using the FIAT package~\cite{Kirby:2004}, the Wu-Xu elements do not form an affine equivalent family~\cite{brenner2008mathematical} under pullback.
Consequently, we apply the theory developed in~\cite{kirby2018general}, which gives a generalization of techniques developed for the $C^1$ conforming Argyris element~\cite{argyris1968tuba,dominguez2008algorithm}.\\

To fix ideas, put a reference triangle $\widehat{K}$ with vertices by $\{ \widehat{\bfv}_i \}_{i=1}^3$.
For any nondegenerate triangle $K$ with vertices $\{ \bfv_i \}_{i=1}^3$, we
let $F:T \rightarrow \widehat{K}$ denote the affine mapping sending
each $\bfv_i$ to the corresponding $\widehat{\bfv}_i$ and $J_T$ its
Jacobian matrix.

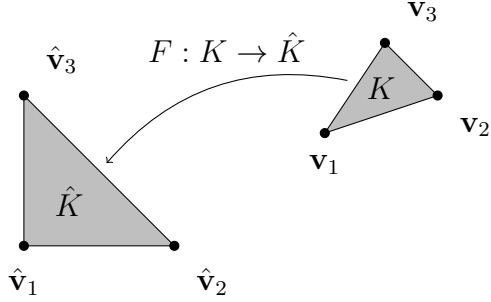
\begin{figure}[h!]
  \begin{center}
  \begin{tikzpicture}
    \draw[fill=lightgray] (0,0) coordinate (vhat1)
    -- (2,0) coordinate(vhat2)
    -- (0,2) coordinate(vhat3)--cycle;
    \foreach \pt\labpos\lab in {vhat1/below/\hat{\mathbf{v}}_1, vhat2/below right/\hat{\mathbf{v}}_2, vhat3/above right/\hat{\mathbf{v}}_3}{
      \filldraw (\pt) circle(.6mm) node[\labpos=1.5mm, fill=white]{$\lab$};
    }
    \draw[fill=lightgray] (4.0, 1.5) coordinate (v1)
    -- (5.5, 2.0) coordinate (v2)
    -- (4.8, 2.7) coordinate (v3) -- cycle;
    \foreach \pt\labpos\lab in {v1/below/\mathbf{v}_1, v2/below right/\mathbf{v}_2, v3/above right/\mathbf{v}_3}{
      \filldraw (\pt) circle(.6mm) node[\labpos=1.5mm, fill=white]{$\lab$};
    }
    \draw[<-] (1.1, 1.1) to[bend left] (4.3, 2.2);
    \node at (2.7, 2.65) {$F:K\rightarrow\hat{K}$};
    \node at (0.6,0.6) {$\hat{K}$};
    \node at (4.75, 2.1) {$K$};
  \end{tikzpicture}
  \end{center}
  \caption{Affine mapping to a reference cell \(\hat{K}\) from a
    typical cell \( K \).  Note that here $F$ maps from the physical
    cell $K$ to the reference cell $\hat{K}$ rather than the other way
  around.}
    \label{fig:affmap}
\end{figure}

We adopt the ordering convention used in \cite{rognes2010efficient}, where edge
$e_i$ of any triangle connects the vertices other than
$i$.  We take the unit tangent $\bft_i = \begin{bmatrix} t_i^x
  & t_i^y \end{bmatrix}^T$ to from the vertex of lower number to the
higher one.  The normal to edge $i$ is defined by counterclockwise
rotation of the tangent, so that $\bfn_i = R \bft_i$, where
$R = \left[ \begin{smallmatrix} 0 & 1 \\ -1 & 0 \end{smallmatrix} \right]$.
The normals,
tangents, and edge midpoints for the reference element $\widehat{K}$
will include hats: $\widehat{\bfn}_i$, $\widehat{\bft}_i$, and
$\widehat{\bfe}_i$. The pull-back of any function $\widehat{f}$ defined 
on $\widehat{K}$ is given by
\begin{equation}
  F^*(\widehat{f}) = \widehat{f} \circ F,
\end{equation}
and the push-forward of functionals $n$ acting on functions defined over
$K$ is
\begin{equation}
  F_*(n) = n \circ F^*,
\end{equation}
so that
\begin{equation}
  F_*(n)(\widehat{f}) = (n \circ F^*)(\widehat{f}) = n(\widehat{f}
  \circ F)
\end{equation}

Finite element implementation requires local shape functions $\{ \psi_i^K \}_{i=1}^N$  that are restrictions of the global basis to cell $K$.
These are taken dual to a set of \emph{nodes} or \emph{degrees of freedom} $\{ n_i^K \}_{i=1}^N$ in the sense that
\[
n_i^K(\psi_j^K) = \delta_{ij}.
\]

In practice, one typically computes the basis $\{ \hat{\psi}_i \}_{i=1}^N$ dual to some nodes $\{\hat{n}_i\}_{i=1}^N$ over the reference element $\hat{K}$. For affine equivalent families (like the Lagrange basis), the physical basis functions are the pullbacks of reference element shape functions, so that
\[
\psi_i^K = F^*(\widehat{\psi}_i).
\]
Equivalently, the nodes are preserved under push-forward, with
\[
F_*(n^K_i) = \hat{n}_i.
\]

We may express these relations in a kind of vector-notation.  
If $\hat{\Psi}$ is a vector whose entries are $\widehat{\psi}_i$, then in the affine equivalent case, $F^*(\hat{\Psi})$ contains the basis on cell $K$, and also $F_*(\mathcal{N}) = \widehat{\mathcal{N}}$.
For non-equivalent families, these relations fail, but we can hope to construct a matrix $M$ such that
\begin{equation}
  \Psi = M F^*(\hat{\Psi})
\end{equation}
contains the correct vector of basis functions on $T$.
The matrix $M$ will depend on the particular geometry of each cell, but if it is sparse this amounts to a considerable savings over directly constructing the basis on each triangle.
Our theory in~\cite{kirby2018general} proceeds by transforming the actions of the functionals on the finite element space.
The finite element functionals are defined on some infinite-dimensional space (e.g. twice-continuously differentiable functions), and we let $\pi$ denote the restriction of functionals to the finite-element space and $\hat{\pi}$ the corresponding restriction on the reference element.  Then, we look for a matrix $V$ such that
\begin{equation}
  \label{eq:V}
V F_*(\pi \mathcal{N}) = \widehat{\pi}\widehat{\mathcal{N}},
\end{equation}
and can prove~\cite[Theorem~3.1]{kirby2018general} that
\begin{equation}
  M = V^T.
\end{equation}


For any triangle $K$ and integer $k\geq 0$, we let $\mcp^k(K)$ denote the
space of polynomials of degree no greater than $k$ over $K$.  Letting
$\lambda_i$ be the barycentric coordinates for $K$ (equivalently, the
Lagrange basis for $\mcp^1(K)$), we let
$b_K = \lambda_1 \lambda_2 \lambda_3$ be the standard cubic bubble
function over $K$. We also need notation for the linear functionals defining 
degrees of freedom.  We let $\delta_{\bfx}$ denote pointwise evaluation of some
(continuous) function:
\begin{equation}
  \delta_\bfx(p) = p(\bfx).
\end{equation}
We let $\delta_\bfx^\bfx$ denote the derivative in some
direction $\bfs$ at a point $\bfx$:
\begin{equation}
    \delta^\bfs_\bfx(p) = \bfs^T \nabla p(\bfx)
\end{equation}
Repeated superscripts will indicate higher derivatives.
We use block notation will for gradients and sets of
second-order derivatives, such as
\begin{equation}
  \nabla_\bfx = \begin{bmatrix} \delta_\bfx^\bfx &
    \delta_{\bfx}^\bfy \end{bmatrix}^T
\end{equation}
for the gradient in Cartesian coordinates at a point $\bfx$, and
\begin{equation}
  \bigtriangleup_\bfx =
  \begin{bmatrix} \delta_{\bfx}^{\bfx\bfx} & \delta_{\bfx}^{\bfx\bfy} & \delta_{\bfx}^{\bfy\bfy}
  \end{bmatrix}^T
\end{equation}
for the unique components of the Hessian matrix.  We will use
superscripts in the block notation to indicate the derivatives taken
in other directions than the Cartesian ones, such as
$\nabla^{\bfn\bft}$ containing the derivatives with respect to a
normal vector $\bfn$ and tangent vector $\bft$ for some part of the
boundary.  Similarly, $\bigtriangleup^{\bfn\bft}$ will contain the
second partials in each direction and the mixed partial in both directions.\\

The Wu-Xu elements also utilise integral moments of normal derivatives, and we shall also need averages tangential and mixed derivatives over edges to perform the transformations. Given any directional vector $\bfs$, we define the moment of the derivative in the direction $\bfs$ over edge $\bfe$ by:

\begin{equation}
  \mu^\bfs_\bfe(f) = \int_\bfe \bfs \cdot \nabla f \, ds, \\
\end{equation}

Similarly, we let $\mu^{\bfs_1\bfs_2}_\bfe$ to denote the  functionals computing moments of second (possibly mixed) directional derivatives over an edge. Now, we define the pair of $H^3$ nonconforming triangles considered in~\cite{wu2019nonconforming}. 
Note that there are two spaces given: a space compatible with sixth-order problems,
and a \emph{robust} space that is stable for second, fourth and sixth-order problems.
We define function space $\wx(K)$ over some triangle $K$ by
\begin{equation}\label{eq:wxK}
  \wx(K) = \mcp^3 + b_K \mcp^1,
\end{equation}
and the function space for the robust element will be
\begin{equation}
  \wxr(K) = \mcp^3 + b_K \mcp^1 + b_K^2 \mcp^1,
\end{equation}
where $\mcp^k$ is the standard space of polynomials of degree $k$.
Note that we have $\dim \wx(K)= 12$ and $\dim \wxr(K) = 15$ since $b_K \in \mcp^3 \cap b_K \mcp^1$.  
The degrees of freedom for the two elements are quite similar.
We can parametrise $\wxr(K)$ by
\begin{equation}
  \label{eq:nodes}
  \mathcal{N} =
  \begin{bmatrix}
    \delta_{\bfv_1} & \nabla^T_{\bfv_1} &
    \delta_{\bfv_2} & \nabla^T_{\bfv_2} &
    \delta_{\bfv_3} & \nabla^T_{\bfv_3} &
    \mu^{\bfn_1\bfn_1}_{\bfe_1} & \mu^{\bfn_2\bfn_2}_{\bfe_2} &
    \mu^{\bfn_3\bfn_3}_{\bfe_3}
  \end{bmatrix}^T.
\end{equation}
That is, the degrees of freedom consist of point values and gradients at each vertex, together with moments of the second normal derivative along edges. For the robust element, we also use the moments of the first normal derivatives, so that
\begin{equation}
  \label{eq:robustnodes}
  \mathcal{N}_r =
  \begin{bmatrix}
    \delta_{\bfv_1} & \nabla^T_{\bfv_1} &
    \delta_{\bfv_2} & \nabla^T_{\bfv_2} &
    \delta_{\bfv_3} & \nabla^T_{\bfv_3} &
    \mu^{\bfn_1}_{\bfe_1} & \mu^{\bfn_2}_{\bfe_2} &
    \mu^{\bfn_3}_{\bfe_3} &
    \mu^{\bfn_1\bfn_1}_{\bfe_1} & \mu^{\bfn_2\bfn_2}_{\bfe_2} &
    \mu^{\bfn_3\bfn_3}_{\bfe_3}
  \end{bmatrix}^T.
\end{equation}

Wu and Xu actually define the degrees of freedom as average of these moments over the relevant facets, although this does not affect unisolvence or other essential properties.
For the reference element, it will be helpful to use their original definition.
For some edge $\bfe$ of $\hat{K}$, define
\begin{equation}
  \hat{\mu}^{\hat{\bfs}}_{\hat{\bfe}}(f) = \tfrac{1}{|\hat{\bfe}|}\int_{\hat{\bfe}} \hat{\bfs} \cdot \hat{\nabla} f \, d\hat{s},
\end{equation}
and similarly define moments second directional derivatives over reference element edges.
The reference element nodes for $\wx(\hat{K})$ will be taken as
\begin{equation}
  \label{eq:refnodes}
  \widehat{\mathcal{N}} =
  \begin{bmatrix}
    \delta_{\hat{\bfv}_1} & \hat{\nabla}^T_{\hat{\bfv}_1} &
    \delta_{\hat{\bfv}_2} & \hat{\nabla}^T_{\hat{\bfv}_2} &
    \delta_{\hat{\bfv}_3} & \hat{\nabla}^T_{\hat{\bfv}_3} &
    \hat{\mu}^{\hat{\bfn}_1\hat{\bfn}_1}_{\hat{\bfe}_1} &
    \hat{\mu}^{\hat{\bfn}_2\hat{\bfn}_2}_{\hat{\bfe}_2} &
    \hat{\mu}^{\hat{\bfn}_3\hat{\bfn}_3}_{\hat{\bfe}_3}
  \end{bmatrix}^T,
\end{equation}
and for $\wxr(\hat{K})$ we will use
\begin{equation}
  \label{eq:refnodesrobust}
  \widehat{\mathcal{N}} =
  \begin{bmatrix}
    \delta_{\hat{\bfv}_1} & \hat{\nabla}^T_{\hat{\bfv}_1} &
    \delta_{\hat{\bfv}_2} & \hat{\nabla}^T_{\hat{\bfv}_2} &
    \delta_{\hat{\bfv}_3} & \hat{\nabla}^T_{\hat{\bfv}_3} &
    \hat{\mu}^{\hat{\bfn}_1}_{\hat{\bfe}_1} &
    \hat{\mu}^{\hat{\bfn}_2}_{\hat{\bfe}_2} &
    \hat{\mu}^{\hat{\bfn}_3}_{\hat{\bfe}_3} &
    \hat{\mu}^{\hat{\bfn}_1\hat{\bfn}_1}_{\hat{\bfe}_1} &
    \hat{\mu}^{\hat{\bfn}_2\hat{\bfn}_2}_{\hat{\bfe}_2} &
    \hat{\mu}^{\hat{\bfn}_3\hat{\bfn}_3}_{\hat{\bfe}_3}
  \end{bmatrix}^T
\end{equation}
Note that this redefinition has no effect in the case of an equilateral reference triangle with unit edge length.
For the more common case of a right isosceles reference triangle, however, this will eliminate the need for logic indicating to which reference element edges the edges of each triangle correspond.\\

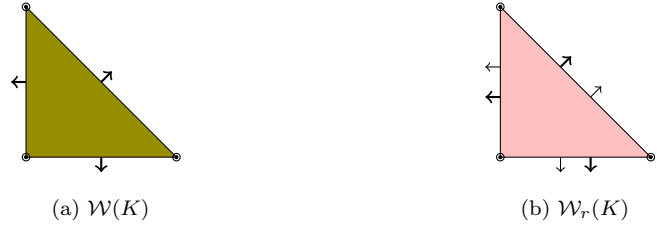
\begin{figure}
  \centering
    \begin{subfigure}[t]{0.45\textwidth}    
      \centering
      \begin{tikzpicture}[scale=1.0]
        \draw[fill=olive] (0,0) -- (2, 0) -- (0, 2) -- cycle;
        \foreach \i/\j in {0/0, 2/0, 0/2}{
          \draw[fill=black] (\i, \j) circle (0.02);
          \draw (\i, \j) circle (0.05);
        }
        \foreach \i/\j/\n/\t in {1/0.0/0.0/-1, 1/1/0.707/0.707, 0.0/01/-1/0}{
          \draw[thick,-{>[scale=0.5]}] (\i, \j) -- (\i+\n/5, \j+\t/5);
        }
      \end{tikzpicture}
      \label{regwx}
      \caption{$\wx(K)$}
    \end{subfigure}
    \begin{subfigure}[t]{0.45\textwidth}
      \centering
      \begin{tikzpicture}[scale=1.0]
        \draw[fill=pink] (0,0) -- (2, 0) -- (0, 2) -- cycle;
        \foreach \i/\j in {0/0, 2/0, 0/2}{
          \draw[fill=black] (\i, \j) circle (0.02);
          \draw (\i, \j) circle (0.05);
        }
        \foreach \i/\j/\n/\t in {0.8/0.0/0.0/-1, 1.2/0.8/0.707/0.707, 0.0/1.2/-1/0}{
          \draw[->] (\i, \j) -- (\i+\n/5, \j+\t/5);
        }
        \foreach \i/\j/\n/\t in {1.2/0.0/0.0/-1, 0.8/1.2/0.707/0.707, 0.0/0.8/-1/0}{
          \draw[thick,-{>[scale=0.5]}] (\i, \j) -- (\i+\n/5, \j+\t/5);
        }    
      \end{tikzpicture}
      \label{robustwx}
      \caption{$\wxr(K)$}      
    \end{subfigure}
    \caption{Degrees of freedom for the Wu-Xu (left) and robust Wu-Xu (right) elements.
      Point values are given by dots, gradients by cirles, while averages of first and second  normal derivatives are given by thin and thick arrows, respectively.}
  \label{fig:wuxudofs}
\end{figure}

The derivative degrees of freedom in both Wu-Xu elements are not preserved under push-forward, and since we have only normal derivatives on the edges, we cannot immediately obtain the correct nodes by taking linear combinations.
Consequently, we must develop a \emph{compatible nodal completion}~\cite[Definition~3.4]{kirby2018general}.
For the Wu-Xu elements, this contains all the original degrees of freedom plus the integrals of tangential and mixed normal/tangential derivatives.
Such a completion is shown for the standard Wu-Xu element in Figure~\ref{wxcomp}. A completion for the robust element includes the first normal moments and tangential moments as well, as showin in Figure~\ref{rwxcomp}.\\

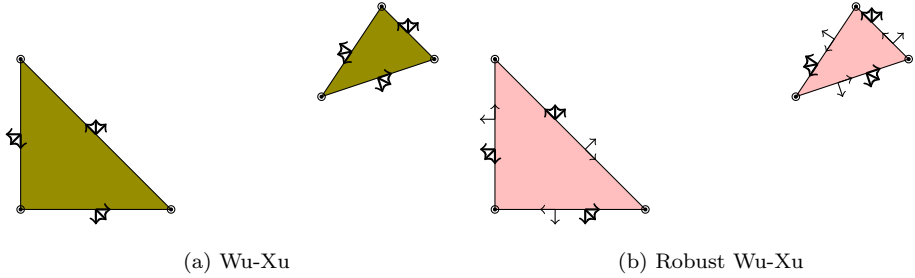
\begin{figure}[h!]
  \centering
  \begin{subfigure}[t]{0.45\textwidth}
    \begin{tikzpicture}[scale=1.0]
      \draw[fill=olive] (0,0) -- (2, 0) -- (0, 2) -- cycle;
      \foreach \i/\j in {0/0, 2/0, 0/2}{
        \draw[fill=black] (\i, \j) circle (0.02);
        \draw (\i, \j) circle (0.05);
      }
      \foreach \i/\j/\n/\t in {1/0.0/0.0/-1, 1/1/0.707/0.707, 0/1/-1/0}{
        \draw[thick,-{>[scale=0.5]}] (\i, \j) -- (\i+\n/5, \j+\t/5);
        \draw[thick,-{>[scale=0.5]}] (\i, \j) -- (\i-\t/5, \j+\n/5);
        \draw[thick,-{>[scale=0.5]}] (\i, \j) -- (\i-0.1414*\t+.1414*\n, \j+.1414*\t+.1414*\n);      
      }
      
      \draw[fill=olive] (4.0, 1.5) -- (5.5, 2.0) -- (4.8, 2.7) -- cycle;
      \foreach \x\y in {4/1.5, 5.5/2, 4.8/2.7}{
        \filldraw (\x,\y) circle(.02);
        \draw (\x,\y) circle(.05);
      }
      \foreach \x\y\nx\ny\tx\ty in {5.15/2.35/.707/.707/-.707/.707,4.4/2.1/-0.8321/0.5547/-.5547/-.8321,4.75/1.75/.3163/-.9487/.9487/.3163}{
        \draw[thick,-{>[scale=0.5]}] (\x,\y) -- (\x+0.2*\nx,\y+0.2*\ny);
        \draw[thick,-{>[scale=0.5]}] (\x,\y) -- (\x+0.2*\tx,\y+0.2*\ty);
        \draw[thick,-{>[scale=0.5]}] (\x,\y) --
        (\x+0.1414*\tx+0.1414*\nx, \y+0.1414*\ty+0.1414*\ny);
      }
    \end{tikzpicture}
    \caption{Wu-Xu}
    \label{wxcomp}
  \end{subfigure}
    \begin{subfigure}[t]{0.45\textwidth}
      \begin{tikzpicture}[scale=1.0]
        \draw[fill=pink] (0,0) -- (2, 0) -- (0, 2) -- cycle;
        \foreach \i/\j in {0/0, 2/0, 0/2}{
          \draw[fill=black] (\i, \j) circle (0.02);
          \draw (\i, \j) circle (0.05);
        }
        \foreach \i/\j/\n/\t in {0.8/0.0/0.0/-1, 1.2/0.8/0.707/0.707, 0.0/1.2/-1/0}{
          \draw[->] (\i, \j) -- (\i+\n/5, \j+\t/5);
          \draw[->] (\i, \j) -- (\i+\t/5, \j-\n/5);
        }
        \foreach \i/\j/\n/\t in {1.2/0.0/0.0/-1, 0.8/1.2/0.707/0.707, 0.0/0.8/-1/0}{
          \draw[thick,-{>[scale=0.5]}] (\i, \j) -- (\i+\n/5, \j+\t/5);
          \draw[thick,-{>[scale=0.5]}] (\i, \j) -- (\i-\t/5, \j+\n/5);
          \draw[thick,-{>[scale=0.5]}] (\i, \j) -- (\i-0.1414*\t+.1414*\n, \j+.1414*\t+.1414*\n);      
        }
  
        \draw[fill=pink] (4.0, 1.5) -- (5.5, 2.0) -- (4.8, 2.7) -- cycle;
        \foreach \x\y in {4/1.5, 5.5/2, 4.8/2.7}{
          \filldraw (\x,\y) circle(.02);
          \draw (\x,\y) circle(.05);
        }
        \foreach \x\y\nx\ny\tx\ty in {5.15/2.35/.707/.707/-.707/.707,4.4/2.1/-0.8321/0.5547/-.5547/-.8321,4.75/1.75/.3163/-.9487/.9487/.3163}{
          \draw[->] (\x-0.2*\tx,\y-0.2*\ty) -- (\x-0.2*\tx+0.2*\nx,\y-0.2*\ty+0.2*\ny);
          \draw[->] (\x-0.2*\tx,\y-0.2*\ty) -- (\x-0.2*\tx+0.2*\tx,\y-0.2*\ty+0.2*\ty);
        }
        \foreach \x\y\nx\ny\tx\ty in {5.15/2.35/.707/.707/-.707/.707,4.4/2.1/-0.8321/0.5547/-.5547/-.8321,4.75/1.75/.3163/-.9487/.9487/.3163}{
          \draw[thick,-{>[scale=0.5]}] (\x+0.2*\tx,\y+0.2*\ty) -- (\x+0.2*\tx+0.2*\nx,\y+0.2*\ty+0.2*\ny);
          \draw[thick,-{>[scale=0.5]}] (\x+0.2*\tx,\y+0.2*\ty) -- (\x+0.2*\tx+0.2*\tx,\y+0.2*\ty+0.2*\ty);
          \draw[thick,-{>[scale=0.5]}] (\x+0.2*\tx,\y+0.2*\ty) --
          (\x+0.2*\tx+0.1414*\tx+0.1414*\nx, \y+0.2*\ty+0.1414*\ty+0.1414*\ny);
        }    
      \end{tikzpicture}
      \caption{Robust Wu-Xu}
    \label{rwxcomp}      
    \end{subfigure}
    \caption{Compatible nodal completions for the Wu-Xu and robust Wu-Xu elements}
  \label{fig:wuxucompletion}
\end{figure}

We define
\begin{equation}
  \mathcal{M}_{1,i} =
  \begin{bmatrix}
    \mu^{\bfn_i}_{\bfe_i} &
    \mu^{\bft_i}_{\bfe_i} \end{bmatrix}^T
\end{equation}
to be the vector of the moments of the normal and tangential
derivatives on a particular edge.
We also let $\widehat{\mathcal{M}_{1, i}}$ contain the corresponding reference element nodes.
We only need $\mathcal{M}_{1,i}$ and $\widehat{\mathcal{M}_{1, i}}$ for the robust element.  Both elements require
\begin{equation}
  \mathcal{M}_{2,i} =
  \begin{bmatrix}
    \mu^{\bfn_i\bfn_i}_{\bfe_i} &
    \mu^{\bft_i\bft_i}_{\bfe_i} &
    \mu^{\bfn_i\bft_i}_{\bfe_i} \end{bmatrix}^T
\end{equation}
containing the unique second derivative moments on each edge.
We similarly define $\widehat{\mathcal{M}_{2, i}}$ to contain the reference element integral averages.
The compatible nodal completion for $(K, \mcp(K), \mathcal{N})$ is
\begin{equation}
  \label{eq:completenodes}
  \mathcal{N}^C =
  \begin{bmatrix}
    \delta_{\bfv_1} & \nabla^T_{\bfv_1} &
    \delta_{\bfv_2} & \nabla^T_{\bfv_2} &
    \delta_{\bfv_3} & \nabla^T_{\bfv_3} &
    \mathcal{M}_{2,1}^T &
    \mathcal{M}_{2,2}^T &
    \mathcal{M}_{2,3}^T
  \end{bmatrix}^T,
\end{equation}
with the hatted equivalents comprising $\hat{\mathcal{N}}^C$ on the reference cell.
The completed set of nodes for the robust element is
\begin{equation}
{
\scriptsize
  \label{eq:completerobustnodes}
  \mathcal{N}_r^C =
  \begin{bmatrix}
    \delta_{\bfv_1} & \nabla^T_{\bfv_1} &
    \delta_{\bfv_2} & \nabla^T_{\bfv_2} &
    \delta_{\bfv_3} & \nabla^T_{\bfv_3} &
    \mathcal{M}_{1,1}^T &
    \mathcal{M}_{1,2}^T &
    \mathcal{M}_{1,3}^T &
    \mathcal{M}_{2,1}^T &
    \mathcal{M}_{2,2}^T &
    \mathcal{M}_{2,3}^T
  \end{bmatrix}^T,
 }
\end{equation}

Now, the matrix $V$ from~\eqref{eq:V} will be obtained in factored form
\begin{equation}
  V = E V^c D,
\end{equation}
where each matrix plays a particular role.  $D$ is a rectangular matrix expressing the completed nodes in terms of the given physical nodes.  $V^c$ is a block diagonal matrix relating the push-forward of the reference nodal completion to the physical nodal completion, and $E$ is a Boolean matrix selecting actual finite element nodes from the completion.  For the Wu-Xu element, $D$ is $18 \times 12$, $V^c$ is $18 \times 18$, and $E$ is $12 \times 18$.  For the robust element, $D$ is $24 \times 15$, $V^c$ is $24 \times 24$, and $E$ is $15 \times 24$.\\

Now, we define the matrix $D$, which expresses the members of $\mathcal{N}^C$ as linear combinations of the members of $\mathcal{N}$.  
Clearly, the rows corresponding to members of $\mathcal{N}^C$ also appearing in $\mathcal{N}$
will just have a single nonzero in the appropriate column.  
For the Wu-Xu element, the remaining nodes are all integrals of quantities over 
edges, and we can use the Fundamental Theorem of Calculus to perform this task.
Let $\bfe$ be an edge running from vertex $\bfv_a$ to $\bfv_b$ with unit tangent and normal $\bft$ and $\bfn$,
respectively.  We have
\begin{equation}
\label{eq:mut}
  \mu^{\bft}_{\bfe}(f) = \int_{\bfe} \bft^T \nabla f ds
  = f(\bfv_b)-f(\bfv_a) = \delta_{\bfv_b}(f) - \delta_{\bfv_a}(f).
\end{equation}
In a similar way, the moments of the second tangential and mixed
derivatives on $\bfe$ can be expressed as differences between
components of the gradients at endpoints by:

\begin{equation}
\label{eq:mustuff}
  \begin{split}
    \mu^{\bfn\bft}_{\bfe}(f) & = \bfn^T \left(\nabla_{\bfv_b} f -
    \nabla_{\bfv_a} f\right), \\
    \mu^{\bft\bft}_{\bfe}(f) & = \bft^T \left(\nabla_{\bfv_b} f -
    \nabla_{\bfv_a} f\right),
  \end{split}
\end{equation}
and we have that $\mathcal{N}^C = D \mathcal{N}$, or
\begin{equation}
\label{eq:NCDNrobust}
{\tiny  \begin{bmatrix}
    \delta_{\bfv_1} \\
    \delta^\bfx_{\bfv_1} \\
    \delta^\bfy_{\bfv_1} \\
    \delta_{\bfv_2} \\
    \delta^\bfx_{\bfv_2} \\
    \delta^\bfy_{\bfv_2} \\
    \delta_{\bfv_3} \\ 
    \delta^\bfx_{\bfv_3} \\
    \delta^\bfy_{\bfv_3} \\ \hline
    \mu^{\bfn_1\bfn_1}_{\bfe_1} \\
    \mu^{\bfn_1\bft_1}_{\bfe_1} \\
    \mu^{\bft_1\bft_1}_{\bfe_1} \\
    \mu^{\bfn_2\bfn_2}_{\bfe_2} \\
    \mu^{\bfn_2\bft_2}_{\bfe_2} \\
    \mu^{\bft_2\bft_2}_{\bfe_2} \\
    \mu^{\bfn_3\bfn_3}_{\bfe_3} \\
    \mu^{\bfn_3\bft_3}_{\bfe_3} \\
    \mu^{\bft_3\bft_3}_{\bfe_3}
  \end{bmatrix}
  =
  \begin{bmatrix}
    1 & 0 & 0 & 0 & 0 & 0 & 0 & 0 & 0 & 0 & 0 & 0\\
    0 & 1 & 0 & 0 & 0 & 0 & 0 & 0 & 0 & 0 & 0 & 0\\
    0 & 0 & 1 & 0 & 0 & 0 & 0 & 0 & 0 & 0 & 0 & 0\\
    0 & 0 & 0 & 1 & 0 & 0 & 0 & 0 & 0 & 0 & 0 & 0\\
    0 & 0 & 0 & 0 & 1 & 0 & 0 & 0 & 0 & 0 & 0 & 0\\
    0 & 0 & 0 & 0 & 0 & 1 & 0 & 0 & 0 & 0 & 0 & 0\\ 
    0 & 0 & 0 & 0 & 0 & 0 & 1 & 0 & 0 & 0 & 0 & 0\\
    0 & 0 & 0 & 0 & 0 & 0 & 0 & 1 & 0 & 0 & 0 & 0\\
    0 & 0 & 0 & 0 & 0 & 0 & 0 & 0 & 1 & 0 & 0 & 0\\ \hline
    0 & 0 & 0 & 0 & 0 & 0 & 0 & 0 & 0 & 1 & 0 & 0\\
    0 & 0 & 0 & 0 & -n_{1,x} & -n_{1,y} & 0 & n_{1,x} & n_{1,y} &
    0 & 0 & 0 \\
    0 & 0 & 0 & 0 & -t_{1,x} & -t_{1,y} & 0 & t_{1,x} & t_{1,y} &
    0 & 0 & 0 \\
    0 & 0 & 0 & 0 & 0 & 0 & 0 & 0 & 0 & 0 & 1 & 0 \\
    0 & -n_{2,x} & -n_{2,y} & 0 & 0 & 0 & 0 & n_{2,x} & n_{2,y} &
    0 & 0 & 0 \\
    0 & -t_{2,x} & -t_{2,y} & 0 & 0 & 0 & 0 & t_{2,x} & t_{2,y} &
    0 & 0 & 0 \\    
    0 & 0 & 0 & 0 & 0 & 0 & 0 & 0 & 0 & 0 & 0 & 1 \\
    0 & -n_{3,x} & -n_{3,y} & 0 & n_{3,x} & n_{3,y} & 0 & 0 & 0 &
    0 & 0 & 0 \\
    0 & -t_{3,x} & -t_{3,y} & 0 & t_{3,x} & t_{3,y} & 0 & 0 & 0 &
    0 & 0 & 0 \\
  \end{bmatrix}
  \begin{bmatrix}
    \delta_{\bfv_1} \\
    \delta^\bfx_{\bfv_1} \\
    \delta^\bfy_{\bfv_1} \\
    \delta_{\bfv_2} \\
    \delta^\bfx_{\bfv_2} \\
    \delta^\bfy_{\bfv_2} \\
    \delta_{\bfv_3} \\
    \delta^\bfx_{\bfv_3} \\
    \delta^\bfy_{\bfv_3} \\ \hline
    \mu^{\bfn_1\bfn_1}_{\bfe_1} \\
    \mu^{\bfn_2\bfn_2}_{\bfe_2} \\
    \mu^{\bfn_3\bfn_3}_{\bfe_3} \\
  \end{bmatrix}.}
\end{equation}

The matrix $V^C$ is obtained by relating the push-forwards of the nodal completion to their reference counterparts.\\

We can convert between the Cartesian and other orthogonal coordinate
systems (e.g. normal/tangential) representations as follows.  Given a
pair of orthogonal unit vectors $\bfn$ and $\bft$, we can define an
orthogonal matrix $G$ by:
\begin{equation}
  G = \begin{bmatrix} \bfn & \bft \end{bmatrix}^T.
\end{equation}
In particular, we will use $G_i$ to have the normal and tangential
vectors to edge $i$ of triangle $K$ and $\widehat{G}_i$ those for
triangle $\widehat{K}$. The multivariate chain rule readily shows that
\begin{equation}
  \label{eq:gradtont}
  \nabla_x = G^T \nabla^{\bfn\bft}_\bfx.
\end{equation}
Similarly, letting $\bfn = \begin{bmatrix} n_x & n_y\end{bmatrix}^T$ and
$\bft = \begin{bmatrix} t_x & t_y \end{bmatrix}^T$, we define the
matrix $\Gamma$ by
\begin{equation}
  \Gamma = \begin{bmatrix}
    n_x^2 & 2 n_x t_x & t_x^2 \\
    n_x n_y & n_x t_y + n_y t_x & t_x t_y \\
    n_y^2 & 2 n_y t_y & t_y^2 \end{bmatrix},
\end{equation}
and the chain rule gives
\begin{equation}
  \bigtriangleup_x = \Gamma \bigtriangleup_x^{\bfn\bft}.
\end{equation}
Although $G$ is an orthogonal matrix, $\Gamma$ is not.
A similar calculation also shows gives that:
$\bigtriangleup^{\bfn\bft}_\bfx = \Gamma^{-1} \bigtriangleup_\bfx$,
where
\begin{equation}
  \Gamma^{-1} = \begin{bmatrix}
    n_x^2 & 2 n_x n_y & n_y^2 \\
    n_x t_x & n_x t_y + n_y t_x & n_y t_y \\
    t_x^2 & 2 t_x t_y & t_y^2 \end{bmatrix}.
\end{equation}

We will also need to transform derivatives under pull-back.  Using the
chain rule, 
\begin{equation}
\nabla (\hat{\psi} \circ F) = J^T \hat{\nabla} \hat{\psi} \circ F.
\end{equation}
Combining this with~\eqref{eq:gradtont} lets us relate the normal and
tangential derivatives in physical space to the normal and tangential
derivatives in reference space.
\begin{equation}
  \label{eq:transformnortangrad}
  \nabla_{\bfx}^{\bfn\bft} = G J^T \widehat{G}^T \hat{\nabla}^{\hat{\bfn}\hat{\bft}}_{\hat{\bfx}}.
\end{equation}

We can perform a similar calculation for second derivatives.  With
the entries of the Jacobian matrix as:
\begin{equation}
  J = \begin{bmatrix} \tfrac{\partial x}{\partial \hat{x}} &
    \tfrac{\partial x}{\partial \hat{y}} \\
    \tfrac{\partial y}{\partial \hat{x}} &
    \tfrac{\partial y}{\partial \hat{y}}
  \end{bmatrix},
\end{equation}
we define the matrix 
\begin{equation}
  \Theta
  = \begin{bmatrix}
    \left( \tfrac{\partial \hat{x}}{\partial x} \right)^2
    & 2 \tfrac{\partial \hat{x}}{\partial x} \tfrac{\partial
      \hat{y}}{\partial x}
    & \left( \tfrac{\partial \hat{y}}{\partial x} \right)^2 \\
    \tfrac{\partial \hat{x}}{\partial y}
    \tfrac{\partial \hat{x}}{\partial x}
    &
    \tfrac{\partial{\hat{x}}}{\partial y}
    \tfrac{\partial\hat{y}}{\partial x}
    + \tfrac{\partial \hat{x}}{\partial x}
    \tfrac{\partial \hat{y}}{\partial y}
    & \tfrac{\partial \hat{y}}{\partial x} \tfrac{\partial
      \hat{y}}{\partial y} \\
    \left(\tfrac{\partial \hat{x}}{\partial y} \right)^2
    & 2 \tfrac{\partial \hat{x}}{\partial y} \tfrac{\partial
      \hat{y}}{\partial y}
    & \left( \tfrac{\partial \hat{y}}{\partial y} \right)^2
    \end{bmatrix},
\end{equation}
so that for $\bfx = F(\hat{\bfx})$,
\begin{equation}
  \bigtriangleup_\bfx = \Theta \hat{\bigtriangleup}_{\hat{\bfx}}.
\end{equation}
The inverse of $\Theta$ follows by reversing
the roles of reference and physical variables:
\begin{equation}
  \Theta^{-1}
  = \begin{bmatrix}
    \left( \tfrac{\partial x}{\partial \hat{x}} \right)^2
    & 2 \tfrac{\partial x}{\partial \hat{x}}
    \tfrac{\partial y}{\partial \hat{x}}
    & \left( \tfrac{\partial y}{\partial \hat{x}} \right)^2 \\
    \tfrac{\partial x}{\partial \hat{y}}
    \tfrac{\partial x}{\partial \hat{x}}
    &
    \tfrac{\partial{x}}{\partial \hat{y}}
    \tfrac{\partial y}{\partial \hat{x}}
    + \tfrac{\partial x}{\partial \hat{x}}
    \tfrac{\partial y}{\partial \hat{y}}
    & \tfrac{\partial y}{\partial \hat{x}} \tfrac{\partial
      y}{\partial \hat{y}} \\
    \left(\tfrac{\partial x}{\partial \hat{y}} \right)^2
    & 2 \tfrac{\partial x}{\partial \hat{y}} \tfrac{\partial
      y}{\partial \hat{y}}
    & \left( \tfrac{\partial y}{\partial \hat{y}} \right)^2
    \end{bmatrix}
\end{equation}

We can also relate the second-order derivatives in normal/tangential
coordinates under pullback by
\begin{equation}
  \label{eq:pb2ndnt}
  \bigtriangleup^{\bfn\bft}_\bfx = \Gamma \Theta \hat{\Gamma}^{-1} \bigtriangleup_{\hat{\bfx}}^{\hat{\bfn}\hat{\bft}}.
\end{equation}
From here, we will let $G_i$ and $\hat{G}_i$ denote the matrices
containing normal and tangent vectors to edge $\bfe_i$ of a generic
triangle $T$ and the reference triangle $\hat{T}$, respectively, with
similar convention for the other geometric quantities
$\Gamma$ and $\Theta$. For any vector $\bfs$,
edge $\bfe$, and smooth function $f = f \circ F$, we have
\begin{equation}
\int_\bfe \bfs^T \nabla f ds =
\int_\bfe \bfs^T \hat\nabla f \circ F ds 
= \int_{\hat{\bfe}} \bfs^T \hat\nabla f J_{\bfe,\hat{\bfe}} d\hat{s},
\end{equation}
where the Jacobian $J_{\bfe,\hat{\bfe}}$ is just the ratio of the
length of $\bfe$ to that of the corresponding reference element edge
$\hat{\bfe}$. Applying this to the normal and tangential moments and
using~\eqref{eq:transformnortangrad}, we have that:
\begin{equation}
\mathcal{M}_{1,i}
= |\bfe_i| G_i J^T \hat{G}_{i}^{-1} \widehat{\mathcal{M}}_{1, i},
\end{equation}
where the factor of $|\hat{\bfe}_i|$ in the denominator of the
Jacobian is merged with the reference element moments to produce
$\widehat{\mathcal{M}_{1,i}}$.  Hence, the slight modification of
reference element nodes avoids extra data structures or logic in
identifying reference element edge numbers. Then, we can use~\eqref{eq:pb2ndnt} 
to express each $\mathcal{M}_{2, i}$ in terms of the reference element nodes
\begin{equation}
  \mathcal{M}_{2,i} = |\bfe| \Gamma_i \Theta \hat{\Gamma}_{i}^{-1}\widehat{\mathcal{M}}_{2,i}.
\end{equation}

We define vectors
\begin{equation}
  B^{1,i} = \tfrac{1}{|\bfe_i|} \hat{G}_i J^{-T} G_i^T, \ \ \
  B^{2,i} = \tfrac{1}{|\bfe_i|} \hat{\Gamma}_i \Theta^{-1} \Gamma_i^{-1},
\end{equation}
and hence $V^C$ is the block-diagonal matrix
\begin{equation}
  V^C = \begin{bmatrix}
    1 & & & & & & & & \\
    & J^{-T} & & & & & & & \\
    & & 1 & & & & & & \\
    & & & J^{-T} & & & & & \\
    & & & & 1 & & & & \\
    & & & & & J^{-T} & & \\
    & & & & & & B^{2,1} & & \\
    & & & & & & & B^{2,2} & \\
    & & & & & & & & B^{2,3}
    \end{bmatrix},
\end{equation}
with zeros of the appropriate shapes in the off-diagonal blocks.
The extraction matrix $E$ is just the $12 \times 18$ Boolean matrix selecting the members of $N$ from $N^C$:
\begin{equation}
E =
\begin{bmatrix}
1 & 0 & 0 & 0 & 0 & 0 & 0 & 0 & 0 & 0 & 0 & 0 & 0 & 0 & 0 & 0 & 0 & 0 \\
0 & 1 & 0 & 0 & 0 & 0 & 0 & 0 & 0 & 0 & 0 & 0 & 0 & 0 & 0 & 0 & 0 & 0 \\
0 & 0 & 1 & 0 & 0 & 0 & 0 & 0 & 0 & 0 & 0 & 0 & 0 & 0 & 0 & 0 & 0 & 0 \\
0 & 0 & 0 & 1 & 0 & 0 & 0 & 0 & 0 & 0 & 0 & 0 & 0 & 0 & 0 & 0 & 0 & 0 \\
0 & 0 & 0 & 0 & 1 & 0 & 0 & 0 & 0 & 0 & 0 & 0 & 0 & 0 & 0 & 0 & 0 & 0 \\
0 & 0 & 0 & 0 & 0 & 1 & 0 & 0 & 0 & 0 & 0 & 0 & 0 & 0 & 0 & 0 & 0 & 0 \\
0 & 0 & 0 & 0 & 0 & 0 & 1 & 0 & 0 & 0 & 0 & 0 & 0 & 0 & 0 & 0 & 0 & 0 \\
0 & 0 & 0 & 0 & 0 & 0 & 0 & 1 & 0 & 0 & 0 & 0 & 0 & 0 & 0 & 0 & 0 & 0 \\
0 & 0 & 0 & 0 & 0 & 0 & 0 & 0 & 1 & 0 & 0 & 0 & 0 & 0 & 0 & 0 & 0 & 0 \\
0 & 0 & 0 & 0 & 0 & 0 & 0 & 0 & 0 & 1 & 0 & 0 & 0 & 0 & 0 & 0 & 0 & 0 \\
0 & 0 & 0 & 0 & 0 & 0 & 0 & 0 & 0 & 0 & 0 & 0 & 1 & 0 & 0 & 0 & 0 & 0 \\
0 & 0 & 0 & 0 & 0 & 0 & 0 & 0 & 0 & 0 & 0 & 0 & 0 & 0 & 0 & 1 & 0 & 0 \\
\end{bmatrix}
\end{equation}

Multiplying $EV^CD$ out and defining
\begin{equation}
\label{eq:beta}
  \begin{split}
    \beta^{i,x} & = n^i_x B^{2,i}_{12} + t^i_x B^{2,i}_{13}, \\
    \beta^{i,y} & = n^i_y B^{2,i}_{12} + t^i_y B^{2,i}_{13},
  \end{split}
\end{equation}
we obtain for $V$
\begin{equation}
{\tiny
V =
\begin{bmatrix}
1 & 0 & 0 & 0 & 0 & 0 & 0 & 0 & 0 & 0 & 0 & 0\\
0 & \tfrac{\partial x}{\partial\hat{x}} & \tfrac{\partial y}{\partial \hat{x}} & 0 & 0 & 0 & 0 & 0 & 0 & 0 & 0 & 0\\
0 & \tfrac{\partial x}{\partial \hat{y}} & \tfrac{\partial y}{\partial \hat{y}} & 0 & 0 & 0 & 0 & 0 & 0 & 0 & 0 & 0\\
0 & 0 & 0 & 1 & 0 & 0 & 0 & 0 & 0 & 0 & 0 & 0\\
0 & 0 & 0 & 0 & \tfrac{\partial x}{\partial\hat{x}} & \tfrac{\partial y}{\partial\hat{x}} & 0 & 0 & 0 & 0 & 0 & 0\\
0 & 0 & 0 & 0 & \tfrac{\partial x}{\partial\hat{y}} & \tfrac{\partial y}{\partial\hat{y}} & 0 & 0 & 0 & 0 & 0 & 0\\
0 & 0 & 0 & 0 & 0 & 0 & 1 & 0 & 0 & 0 & 0 & 0\\
0 & 0 & 0 & 0 & 0 & 0 & 0 & \tfrac{\partial x}{\partial\hat{x}} & \tfrac{\partial y}{\partial\hat{x}} & 0 & 0 & 0\\
0 & 0 & 0 & 0 & 0 & 0 & 0 & \tfrac{\partial x}{\partial\hat{y}} & \tfrac{\partial y}{\partial\hat{y}} & 0 & 0 & 0\\
0 & 0 & 0 & 0 & -\beta^{1,x} & -\beta^{1,y} & 0 & \beta^{1, x}  & \beta^{1,y} & B^{2,1}_{11} & 0 & 0\\
0 & -\beta^{2, x} & -\beta^{2,y} & 0 & 0 & 0 & 0 & \beta^{2,x} & \beta^{2,y} & 0 & B^{2,2}_{11} & 0\\
0 & -\beta^{3,x} & -\beta^{3,y} & 0 & \beta^{3,x} & \beta^{3,y} & 0 & 0 & 0 & 0 & 0 & B^{2,3}_{11}\\
\end{bmatrix}.}
\end{equation}

The same considerations lead to a similar derivation of $E$, $V^c$, and $D$ for the robust element, resulting in 
{\setlength\arraycolsep{3pt}

\begin{equation}
  {\tiny
V = \begin{bmatrix}
1 & 0 & 0 & 0 & 0 & 0 & 0 & 0 & 0 & 0 & 0 & 0 & 0 & 0 & 0\\
0 & \tfrac{\partial x}{\partial\hat{x}} & \tfrac{\partial y}{\partial\hat{x}} & 0 & 0 & 0 & 0 & 0 & 0 & 0 & 0 & 0 & 0 & 0 & 0\\
0 & \tfrac{\partial x}{\partial\hat{y}} & \tfrac{\partial y}{\partial\hat{y}} & 0 & 0 & 0 & 0 & 0 & 0 & 0 & 0 & 0 & 0 & 0 & 0\\
0 & 0 & 0 & 1 & 0 & 0 & 0 & 0 & 0 & 0 & 0 & 0 & 0 & 0 & 0\\
0 & 0 & 0 & 0 & \tfrac{\partial x}{\partial\hat{x}} & \tfrac{\partial y}{\partial\hat{x}} & 0 & 0 & 0 & 0 & 0 & 0 & 0 & 0 & 0\\
0 & 0 & 0 & 0 & \tfrac{\partial x}{\partial\hat{y}} & \tfrac{\partial y}{\partial\hat{y}} & 0 & 0 & 0 & 0 & 0 & 0 & 0 & 0 & 0\\
0 & 0 & 0 & 0 & 0 & 0 & 1 & 0 & 0 & 0 & 0 & 0 & 0 & 0 & 0\\
0 & 0 & 0 & 0 & 0 & 0 & 0 & \tfrac{\partial x}{\partial\hat{x}} & \tfrac{\partial y}{\partial\hat{x}} & 0 & 0 & 0 & 0 & 0 & 0\\
0 & 0 & 0 & 0 & 0 & 0 & 0 & \tfrac{\partial x}{\partial\hat{y}} & \tfrac{\partial y}{\partial\hat{y}} & 0 & 0 & 0 & 0 & 0 & 0\\
0 & 0 & 0 & -B^{1,1}_{12} & 0 & 0 & B^{1,1}_{12} & 0 & 0 & B^{1,1}_{11} & 0 & 0 & 0 & 0 & 0\\
-B^{1,2}_{12} & 0 & 0 & 0 & 0 & 0 & B^{1,2}_{12} & 0 & 0 & 0 & B^{1,2}_{11} & 0 & 0 & 0 & 0\\
-B^{1,3}_{12} & 0 & 0 & B^{1,3}_{12} & 0 & 0 & 0 & 0 & 0 & 0 & 0 & B^{1,3}_{11} & 0 & 0 & 0\\
0 & 0 & 0 & 0 & -\beta^{1,x} & -\beta^{1,y} & 0 & \beta^{1,x} & \beta^{1,y} & 0 & 0 & 0 & B^{2,1}_{11} & 0 & 0\\
0 & -\beta^{2,x}  & -\beta^{2,y} & 0 & 0 & 0 & 0 & \beta^{2,x} &\beta^{2,y} & 0 & 0 & 0 & 0 & B^{2,2}_{11} & 0\\
0 & -\beta^{3,x} & -\beta^{3,y} & 0 & \beta^{3,x} & \beta^{3,y} & 0 & 0 & 0 & 0 & 0 & 0 & 0 & 0 & B^{2,3}_{11}\\
\end {bmatrix}
  }
\end{equation}
}
for $V$, where $\beta$ is as defined in~\eqref{eq:beta}.

\section{Discretisation}\label{sec:discretisation}

We now describe the discretisations of the Hamiltonian system \eqref{eq:hamiltonseqs_reduced}
using a function space introduced in the previous section.
Smooth solutions to \eqref{eq:hamiltonseqs_reduced:u} generates the following curve of
diffeomorphisms:
\begin{equation}
\dot{\varphi}_t = u_t \circ\varphi_t, \quad \varphi_0=\idop,
\end{equation}
where the domain of $\varphi_t$ is $\Omega_0$. This subsumes the left action on the curve $q_0$ in 
\eqref{eq:hamiltonseqs_reduced:q}. Our approach is therefore to solve 
\eqref{eq:hamiltonseqs_reduced:u} for an \emph{outer metric} in tandem 
with integrating the diffeomorphism defined over the entire domain
and moving the mesh, thereby automatically providing a solution to a
discrete analogue of \eqref{eq:hamiltonseqs_reduced:q}. We denote by
$\mathcal{T}_0$ denote a shape-regular, quasi-uniform triangulation
of the template domain $\Omega_0$. Let $\meshskel$ denote the \emph{mesh
skeleton} of $\mathcal{T}_0$ and $\meshskelint$ the subset of $\meshskel$
whose elements do not intersect $\partial\Omega_0$. We place the following
assumption on the initial triangulation $\mathcal{T}_0$:
\begin{assumption}\label{assumption:embedded}
$\mathcal{T}_0$ is constructed such that the range of $q_0$ is described by a subset of $\meshskelint$.
\end{assumption}

Using the definition in \eqref{eq:wxK} we define the vector-valued
Wu-Xu space defined over $\Omega_0$:
\[
V(\Omega_0) = \{ v \in L^2(\Omega)^2 \,|\, v_i|_K \in \wx(K),\, K \in \mathcal{T}_0,\, i=1,2\}.
\]
Further, let $0=t_0, t_{\Delta T}, \ldots t_{T-1}=1$ denote $T$ uniformly distributed
points and use an Euler discretisation of the time derivate in \eqref{eq:varphi},
where we let $\varphi_{t_k} \in V(\Omega_0)$, $u_{t_k} \in V(\Omega_0)$:
\begin{equation}\label{eq:varphi:disc}
\varphi_{t_{k+1}} = \varphi_{t_k} + u_{t_k}\circ\varphi_{t_k} \Delta T.
\end{equation}
For sufficiently small $\Delta T$, $\varphi_{t_k}$ is a diffeomorphism
of $\Omega$ \cite{allaire2007conception}. Using the notation $\Omega_{t_k} =
\varphi_{t_k} \circ \Omega_0$, $V(\Omega_{t_k}) := \{f \,|\, f \circ \varphi_{t_k}\inv
\in V(\Omega_0)\}$ and by noting that $q_{t_k} = \varphi_{t_k} \circ q_0$ we obtain
a discrete analogue of \eqref{eq:hamiltonseqs_reduced} where $\hat u_{t_k}
\in V(\Omega_{t_k})$:
\begin{subequations}\label{eq:hamiltonseqs_reduced:disc}
\begin{align}
&  a_{\Omega_{t_k}}(\hat u_{t_k}, \hat v) = \int_{S^1} \nabla\varphi_{t_k}\invtransp \circ q_0
\mathbf{n}_{q_0} \tilde{p}_0 \cdot \hat v \circ q_0 \diff{s}, & \forall \hat v \in V(\Omega_{t_k}), \label{eq:hamiltonseqs_reduced:u:disc}\\
& \varphi_{t_{k+1}} = \varphi_{t_k} + \hat u_{t_k} \Delta T,
\end{align}
\end{subequations}
for $k=0,\ldots, T-1$, where $\varphi_{t_k}\circ\partial\Omega_0 = \idop$ owing to
the homogeneous Dirichlet boundary conditions implied by \eqref{eq:hamiltonseqs_reduced:u:disc}.
At each time step $k$ after the solution of \eqref{eq:hamiltonseqs_reduced:u:disc}, 
the mesh is moved according to \eqref{eq:varphi:disc} upon which the equation \eqref{eq:hamiltonseqs_reduced:q:disc}:
\begin{equation}
q_{t_{k+1}} = q_{t_k} + u_{t_k}\circ q_{t_k} \Delta T, \label{eq:hamiltonseqs_reduced:q:disc}
\end{equation}
is automatically satisfied. The underlying coordinate field of the mesh itself
is chosen to be a Lagrange subspace of $V(\Omega_0)$, so that the map 
$q_0 \mapsto \varphi_{t_k} \circ q_0$ is a diffeomorphism. At $k=0$, the assembly of the right-hand
side is in practice done by integration over $q_0\circ S^1$, which means that we can supply an 
initial ``momentum'' signal $\mathbf{p}_0\in \tilde{p}_0\circ q_0\inv\in L^2(q_0\circ S^1)$ (now
defined over initial curve) to encode the entire geodesic flow of $\varphi$, and thereby of the 
embedded curve. Figure \ref{fig:ivps} show examples of forward integration of this system for
various $\mathbf{p}_0$ and $q_0\circ S^1$ (the initial meshes were generated using \texttt{gmsh}
\cite{geuzaine2009gmsh}). Note that the norm of the velocity present in
\eqref{eq:hamiltonseqs_reduced} is confined to certain energy levels determined
by the initial momentum as the system is integrated. In the fully discrete
analogue we can only hope to establish approximate conservation of the Hamiltonian.
The importance of this nebulous since we only integrate over fixed time intervals,
and is subject to future work.\\

The computational cost of integrating \eqref{eq:hamiltonseqs_reduced:disc} is dominated
by the inversion of the discrete bilinear form. Mesh-based methods readily facilitate 
parallel computations (e.g. matrix-vector products in a Krylov subspace method), which
along with preconditioning strategies are competitive with fast multipole methods. They
also offer flexibility in choosing bilinear form (which can be altered according to an 
informed modelling choice or application). Finally, mesh adaptivity is also an
option. For the application at hand a \emph{graded} mesh with a fine resolution in
the vicinity of the curve and coarser elements closer to the boundary can both 
increase accuracy and the computational burden of the method.

\begin{figure}[ht!]
    \centering
        \includegraphics[width=0.25\textwidth]{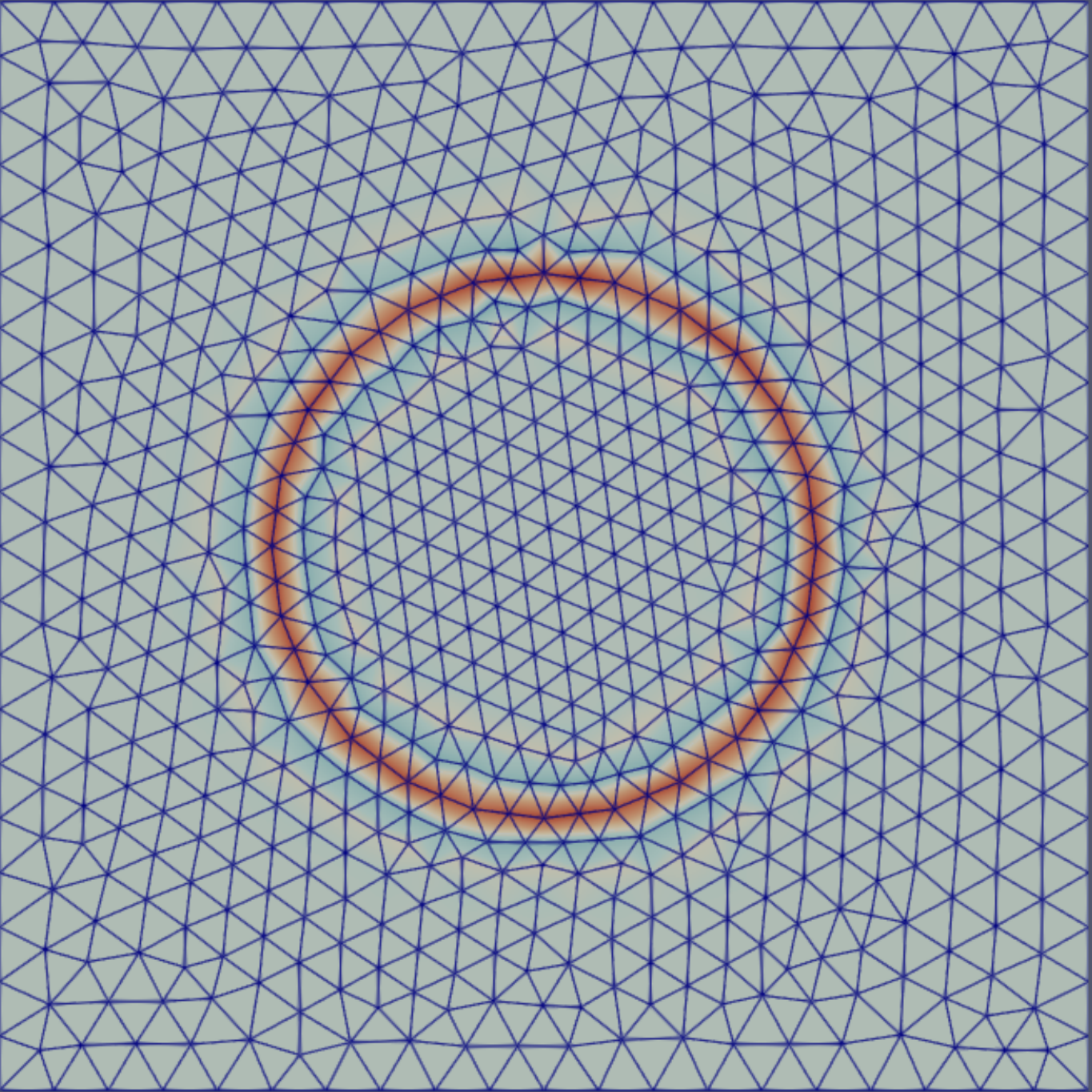}
        \includegraphics[width=0.25\textwidth]{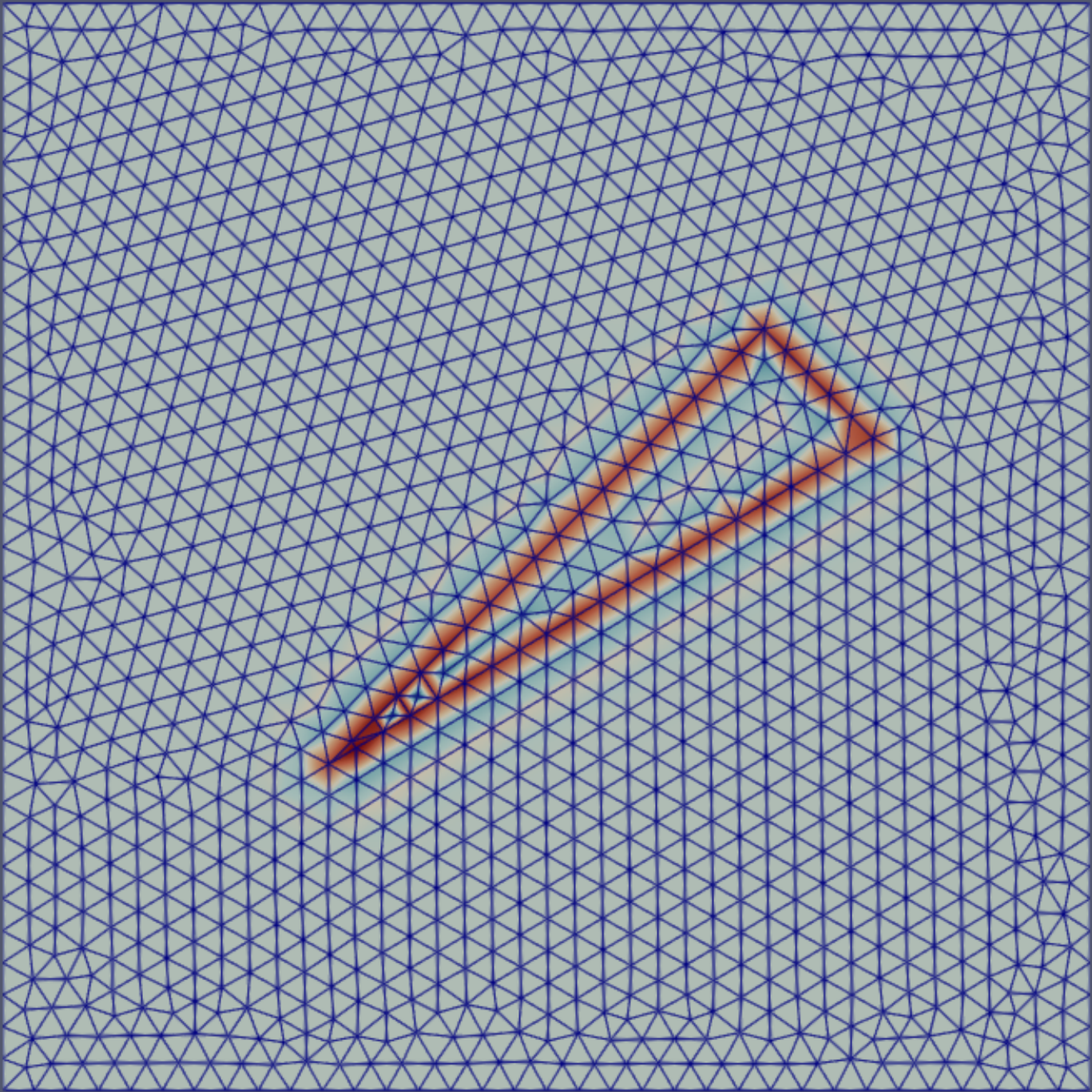}\\[2pt]
        \includegraphics[width=0.25\textwidth]{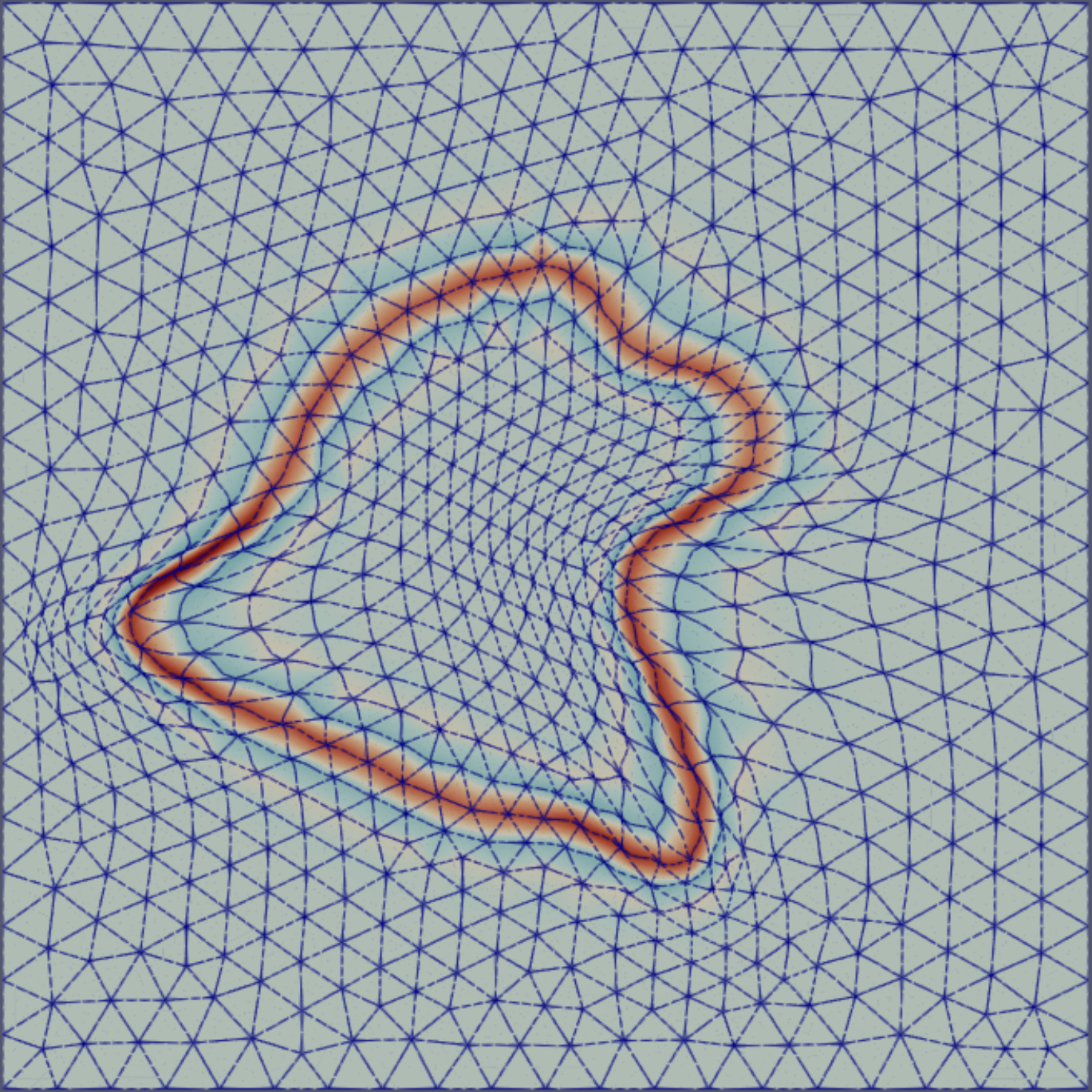}
        \includegraphics[width=0.25\textwidth]{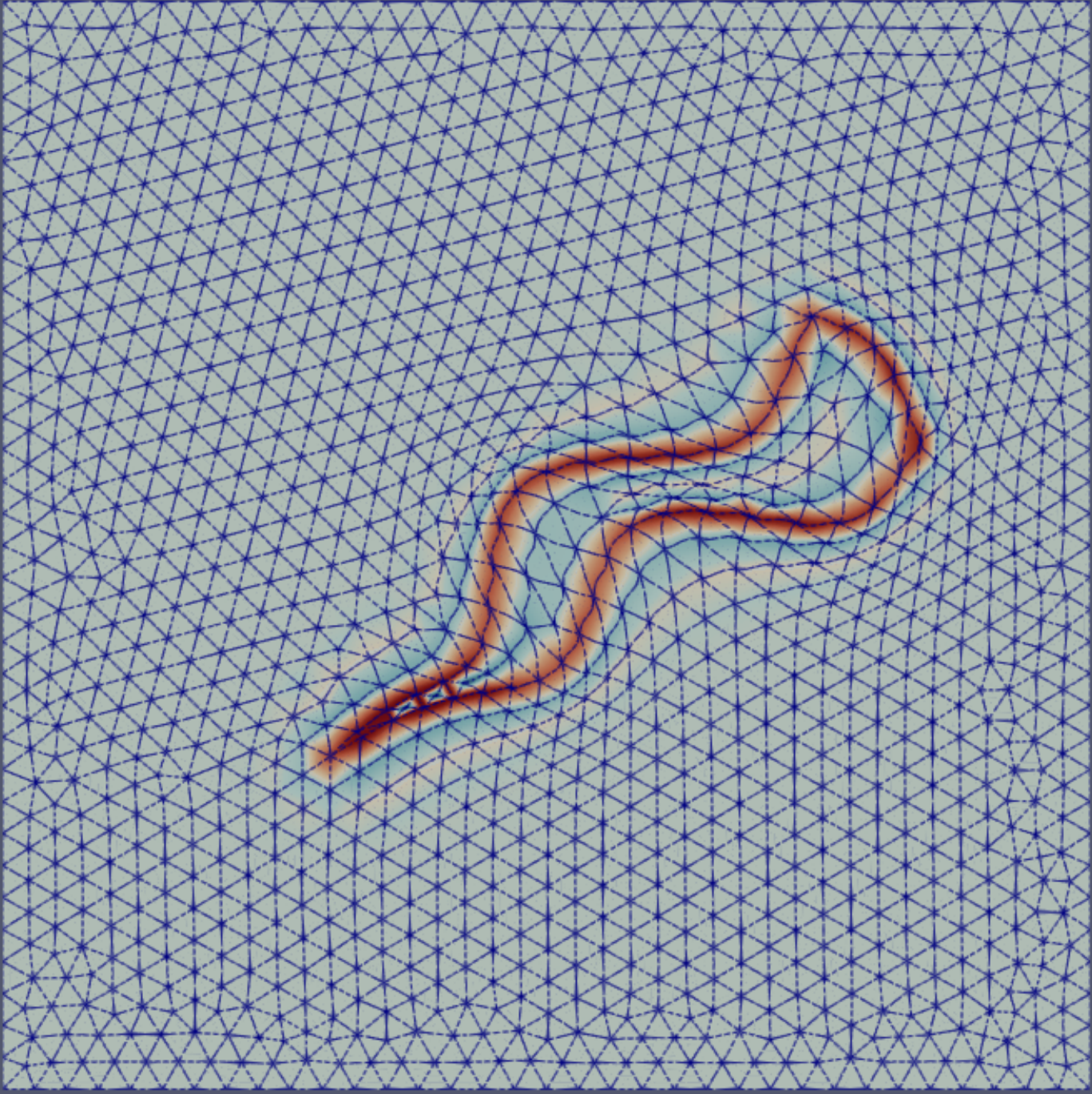}
        \caption{Top row: template domains $\Omega_0$ with curves highlighted. 
        Bottom row: $\varphi_1 \circ \Omega_1$ computed by integrating \eqref{eq:hamiltonseqs_reduced:disc}
        from two templates with different initial momenta.}
    \label{fig:ivps}
\end{figure}

\section{Inverse problem}\label{sec:inverse_problem}

We now consider the matching problem using the data misfit functional in \eqref{eq:mismatch}.
We wish to estimate the momentum $\momentum_0 := \tilde{p}_0\circ q_0\inv\in\MomentumSpace$
that generates the curve $t\mapsto q_t$. That is, $\momentum_0$ is the momentum object
defined on the computational domain $q_0 \circ S^1$. We drop explicit dependence on the template 
as it remains fixed during computation as well as the time dimension of the initial
momentum. To ease the notation we use boldface $\bq$ to represent the smoothed version of the
indicator function on the interior of a curve $q$ the i.e.:
\[
\bq = \mathcal{C}\inv\mathbb{1}_q.
\]
We define the \emph{forward operator}:
\begin{equation}\label{eq:forward_operator}
\momentum \mapsto \mathcal{F}(\momentum) = \bq := \mathcal{C}\inv\mathbb{1}_{q_1},
\end{equation}
where $q_1$ is the solution at $t=1$ given by solving \eqref{eq:hamiltonseqs_reduced:disc}
using $q_0$ and $\momentum$ as initial conditions, i.e. the time-$1$ flow map of the initial
curve. Given a target shape $\qtarget$ the inverse problem of interest is therefore to
recover the momentum $\momentum^*$ such that:
\[
\bqtarget \approx \mathcal{F}(\momentum^*) + \xi,
\]
where the noise $\xi \in \mathcal{N}(0, \mathcal{C})$ is a Gaussian measure with mean
zero and covariance operator $\mathcal{C}$ defined later on. 

To tackle this inverse problem, we use an Ensemble Kalman iteration. We let $N$ denote the number
of ensemble members and let $\momentum^j$, $j=1,\ldots,N$ denote the momenta corresponding
to ensemble member $j$. The ensemble mean momentum and the mean predicted shape are:
\begin{align}\label{eq:means}
\AvgMomentum := N\inv \sum_{j=1}^N \momentum_j, \qquad \AvgPoint := N\inv \sum_{j=1}^N \bq^j,
\end{align}
where $\bq^j = \mathcal{C}\inv\mathbb{1}_{\mathcal{F}(\momentum^j)}$. The Kalman update operator is defined by:
\begin{align}\label{eqs:kalmanops}
& \kalmanp = \cov_{\MomentumSpace Q} [\cov_{QQ} + \xi I]^{-1},
\end{align}
where $\xi$ is a regularisation parameter described later and $I$ is the identity operator.
The actions above are given by:
\begin{subequations}
\begin{align}
& \cov_{QQ}[\cdot] = \frac{1}{N - 1}\sum_{j=1}^N (\bq^j - \AvgPoint) \langle \bq^j   - \AvgPoint, \cdot \rangle_{L^2},\label{CovQQ}\\
& \cov_{\MomentumSpace Q}[\cdot] = \frac{1}{N - 1}\sum_{j=1}^N (\momentum^j - \AvgMomentum) \langle \bq^j   - \AvgPoint, \cdot \rangle_{L^2}.\label{CovPQ}
\end{align}
\end{subequations}
The data misfit at iteration $k$ of the EKI is defined as:
\begin{equation}\label{eq:mismatch:C}
\mathfrak{E}^k = \| \bqtarget - \AvgPoint \|_{L^2(\Omega)}^2.
\end{equation}
The prediction and analysis steps are summarised below:
\begin{enumerate}
    \item \emph{Prediction}: For each ensemble member $j$, compute $\bq^j = \mathcal{C}\inv\mathbb{1}_{\mathcal{F}(\momentum^j)}$ and the average $\AvgPoint$ using \eqref{eq:means}.
    \item \emph{Analysis}: Update each ensemble momentum:
        \[
        \momentum^{j+1} = \momentum^j + \kalmanp (\bqtarget - \bq^j).
        \]
\end{enumerate}
\section{Numerical experiments}\label{sec:applications}

We present numerical experiments showing that the ensemble Kalman
inversion (EKI) algorithm is able to find suitable approximations of a
target given a random initial ensemble. Section \ref{sec:synthetic_data}
describes how we generate the synthetic target that we will use as 
matching targets. Section \ref{sec:experimental_setup} summarises 
the parameters that we have chosen to in our experiments to match
the synthetic data, and section \ref{sec:results} contains the
numerical results.

\subsection{Synthetic data}\label{sec:synthetic_data}

For simplicity we fix the template curve throughout our experiments
and choose the unit circle. The initial mesh is that shown in the
top left vignette of Figure \ref{fig:ivps}. The computation domain $\Omega_0$
is a triangulation of $[-10,10]^2$ with mesh resolution\footnote{This is
the maximum diameter $h_K$ of any triangle $K$ in the triangulation.}
$h=1$. We have taken $\alpha=0.5$ in \eqref{eq:bilinear_form}, $T=15$ time steps and 
have used piecewise constant finite elements on the mesh skeleton to represent
$\MomentumSpace$ (although we compute only with functions supported over 
the submanifold $q_0 \circ S^1\subset\meshskel$). We use the forward
operator described previously to generate synthetic targets for 
this set of parameters. Applying the forward operator $\mathcal{F}$ to the
momenta in \eqref{eq:manufactured_momentum} below we produce the targets
seen in Figure \ref{fig:synthetic_targets}.
\begin{subequations}\label{eq:manufactured_momentum}
\begin{align}
& \momentum^\dagger_\text{contract} = -1.38\pi,\\
& \momentum^\dagger_\text{squeeze} = \begin{cases} 
      0.83\pi e^{-y^2/5} & x< -0.3 \\
      \frac{5}{3} \pi \sin(x / 5)|y| & \textnormal{otherwise}
   \end{cases},\\
& \momentum^\dagger_\text{star} = 2.6\pi \cos(2\pi x/5),\\
& \momentum^\dagger_\text{teardrop} = \begin{cases} 
      -3\pi\,\textnormal{sign}(y) & y<0 \\
      3\pi e^{-x^2/5} & \textnormal{otherwise}
   \end{cases}.
\end{align}
\end{subequations}

\begin{figure}
    \centering
    \includegraphics[scale=0.32]{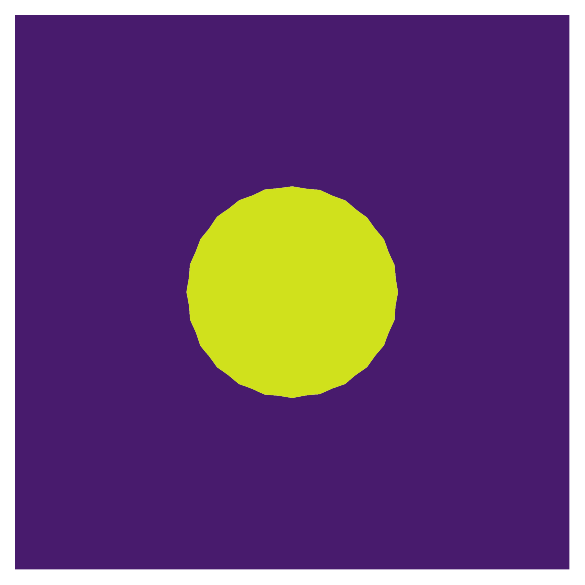}
    \includegraphics[scale=0.32]{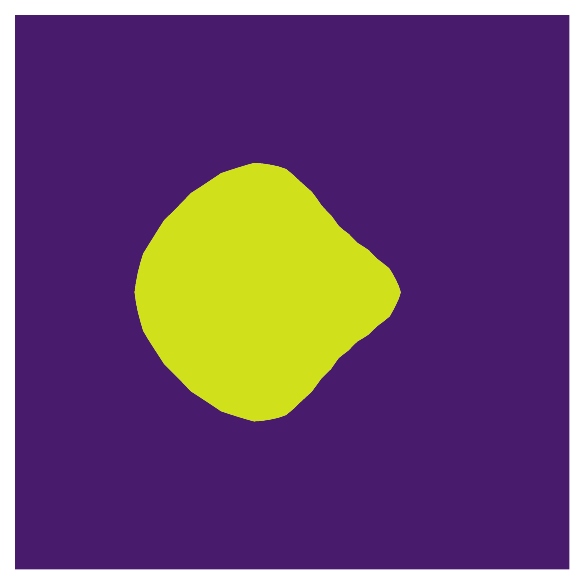}
    \includegraphics[scale=0.32]{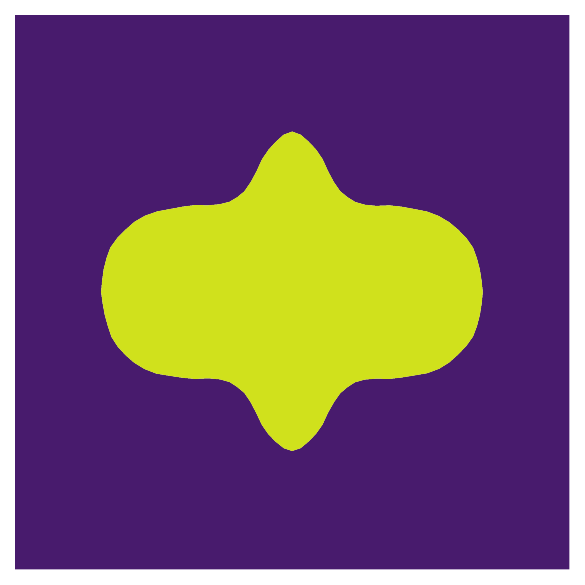}
    \includegraphics[scale=0.32]{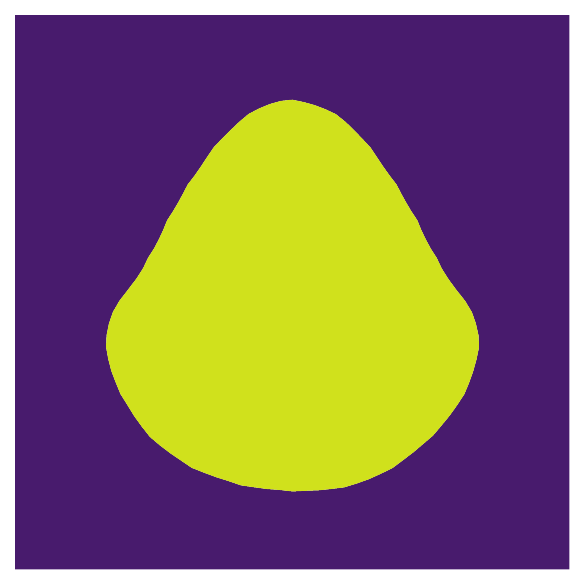}
    \caption{Synthetic targets generated using the momentum in \eqref{eq:manufactured_momentum}. A simple contraction of the shape, a squeezed figure with a right bias, a star and a teardrop shape).}
    \label{fig:synthetic_targets}
\end{figure}

With $\momentum^\dagger$ we associate the following relative error at each iterate $k$:
\begin{equation}\label{eq:relative_error}
\mathcal{R}^k = \| \bar{\momentum}^k  - \momentum^\dagger \|_{L^2(q_0\circ S^1)} / \| \momentum^\dagger \|_{L^2(q_0\circ S^1)}.
\end{equation}
The \emph{consensus deviation} $\mathcal{S}^k$ of an ensemble at iteration
$k$ in equation \eqref{eq:consensus} is defined below:
\begin{equation}\label{eq:consensus}
\mathcal{S}^k = N\inv\sum_{j=1}^N \| \momentum^{j,k} - \bar{\momentum}^k\|_{L^2(q_0\circ S^1)},
\end{equation}
where $\momentum^{j,k}$ denotes the momentum of ensemble member $j$ at
iteration $k$. This quantity is a useful diagnostic which measures the 
information remaining in the ensemble after iteration $k$. Since EKI
relies on estimates of the forecast covariance, consensus is reached when
$\mathcal{S}^k$ approaches zero, at which point the algorithm can be
stopped.\\

In all our simulations we invert the system in \eqref{eq:hamiltonseqs_reduced:u:disc}
using the direct solver MUMPS \cite{amestoy2000mumps}; investigating a preconditioned 
iterative solver is subject to future work. For details on the validation of the
implementation of the Wu-Xu element in Firedrake, see \cite[Appendix B]{bock2020inverse}
and the Zenodo entry \cite{zenodo/Firedrake-20200518.0}.

\subsection{Experimental setup}\label{sec:experimental_setup}

We now describe the setup we have used to test the EKI.
Firstly, we have taken $T=10$ and $\alpha=1$
so the parameters differ from those used to generate the synthetic
targets. Recall that EKI requires an initial ensemble, in this
case of momenta. The basis coefficients of the momenta was sampled 
from a uniform distribution over the interval $[-25,25]$, with different
realisations for each ensemble member. The parameter $\xi$ in
\eqref{eqs:kalmanops} determines the ratio between the influence 
of the prediction covariance on the Kalman gain. We set $\xi=10^{-3}$
in \eqref{eqs:kalmanops}, although adaptive tuning of this
parameter to avoid overfitting is possible; an early termination rule
is suggested in \cite[Equation 10]{iglesias2016regularizing}.
We choose $\mathcal{C} = (\idop - \kappa \Delta)\inv$, $\kappa = 10$
in \eqref{eq:mismatch:C}, as this smoothes out the mismatch sufficiently
for our computational domain.
The quality of the matching is directly related to the size and
variance of the ensemble as the solution is sought as a linear
combination of its members. We conduct experiments
for two ensemble sizes, $N=20$, $N=40$ and $N=80$. These
were chosen since, with the parameter set as above, $\dim \MomentumSpace
= 48$ in order to develop an understanding of how EKI performs when
the ensemble size is smaller than, similar to and larger than the dimension
of the state, while still keeping the overall computational cost such
that the experiments can be done in a reasonable amount of time.
The case where $N<<\dim\MomentumSpace$ is the de facto situation
for ensemble methods as the MC approximation allows for a computationally 
feasible method. In general, small ensemble sizes can lead to filter 
inbreeding (the forecast covariance is underestimated), filter divergence (the gain
does not adequately correct the ensemble), or spurious correlations
 \cite{petrie2008localization}. We comment on each of these later.

\subsection{Results}\label{sec:results}

We have run EKI $10$ times for each value of $N$ with different draws
for the initial ensemble to assess the robustness of the algorithm
with respect to the starting point. Figure \ref{fig:shapes} shows examples of the numerical
results we obtain for curve matching using EKI. Note that only five 
iterations of EKI were necessary to produce the results shown 
in this section to reach a relative tolerance below $3\%$. Qualitatively
a larger ensemble size leads to a better match, which is to be expected.
Ensemble methods such as the EKI offer a practical advantage to gradient
methods given their inherent parallelisability. Indeed, the \emph{prediction} 
step discussed in Section \ref{sec:inverse_problem} can be done in parallel
for each ensemble member. We therefore start $N$ processes corresponding to
each member, and used a Message Passing Interface
(MPI) \cite{gropp1999using} implementation to exchange information between
them (the MPI \emph{reduce} operation, specifically). Thanks to this
parallelisation, five iterations of EKI takes less than two minutes for
$N=20$, five minutes for $N=40$ and 14 minutes for $N=80$ on a 2021 MacBook
Pro\footnote{Apple M1 Pro chip, 16 GB of memory.}.\\ 

\begin{figure}
    \centering
    \includegraphics[scale=0.32]{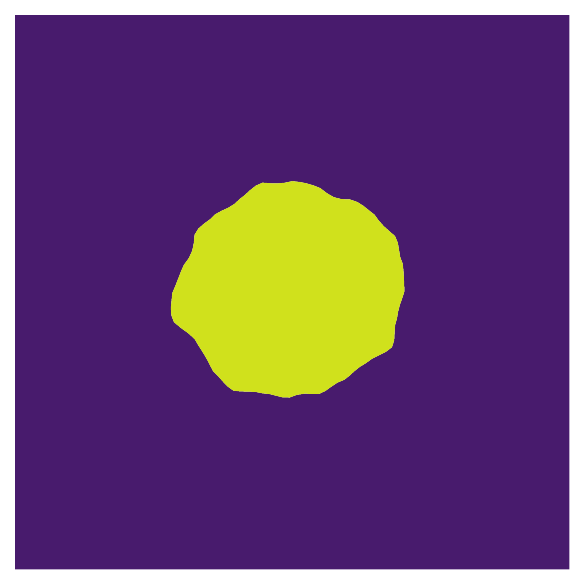}
    \includegraphics[scale=0.32]{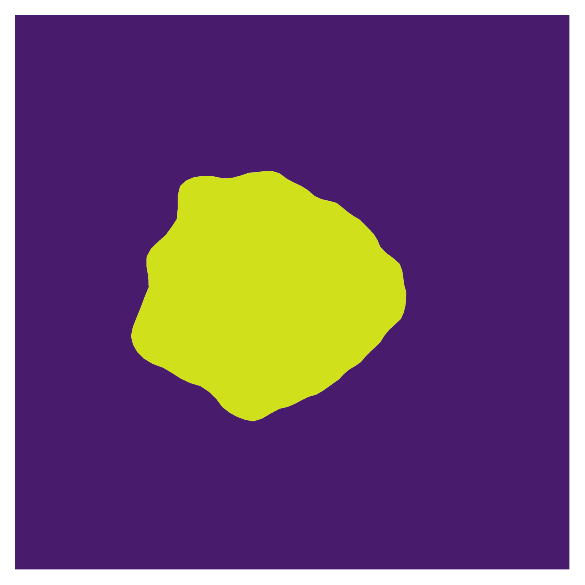}
    \includegraphics[scale=0.32]{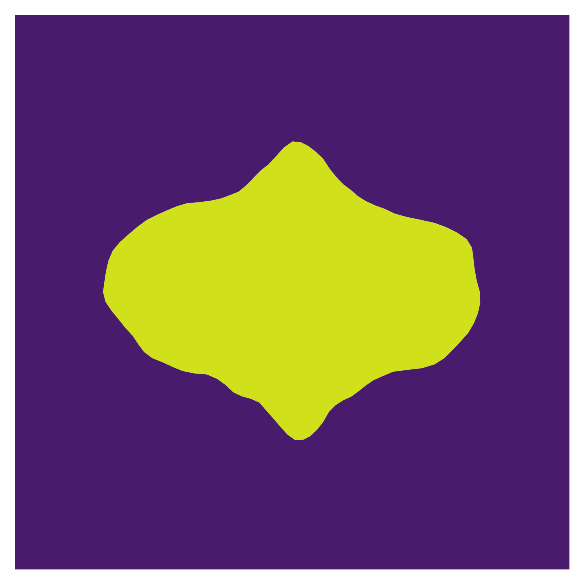}
    \includegraphics[scale=0.32]{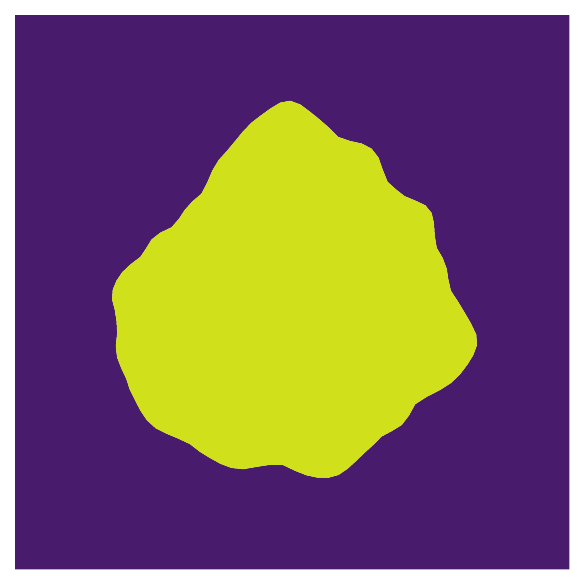}\\
    \includegraphics[scale=0.32]{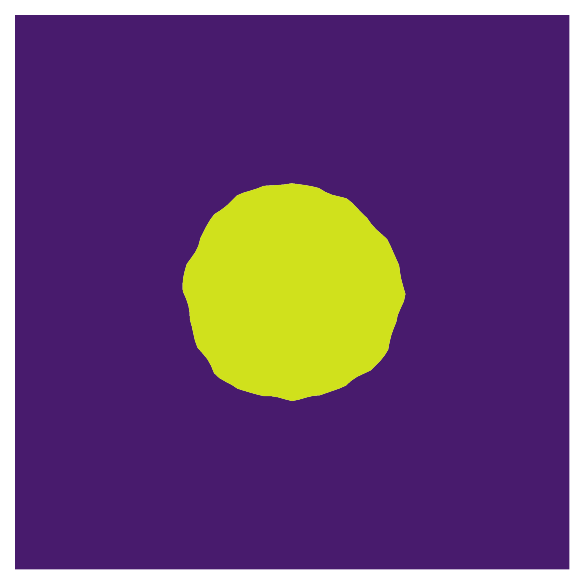}
    \includegraphics[scale=0.32]{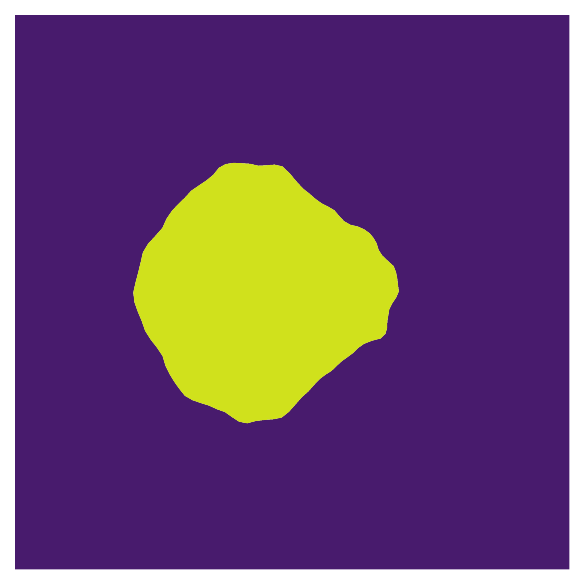}
    \includegraphics[scale=0.32]{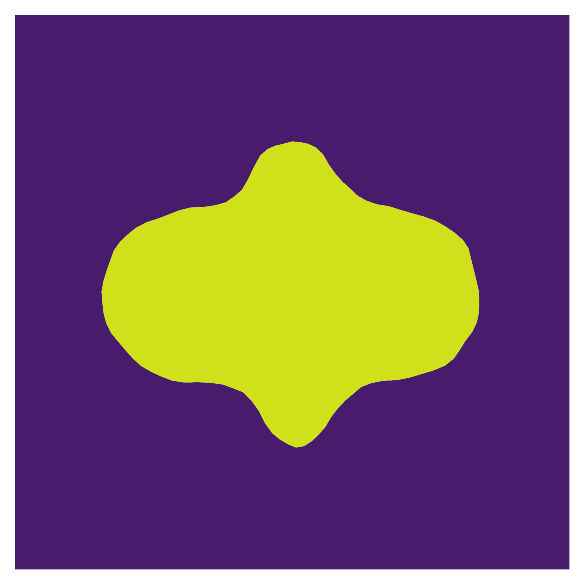}
    \includegraphics[scale=0.32]{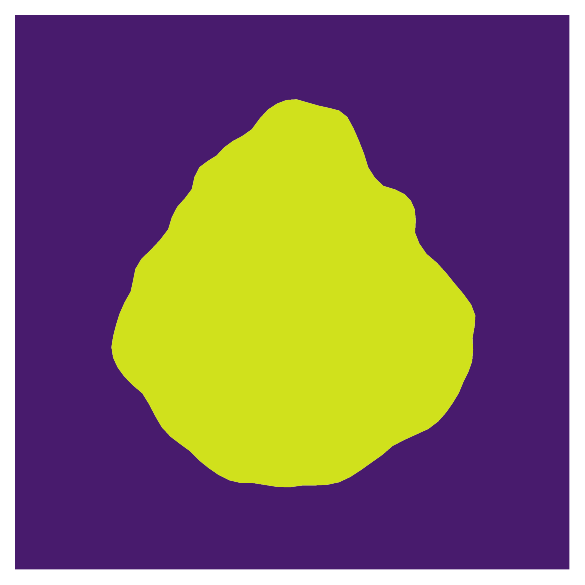}\\
    \includegraphics[scale=0.32]{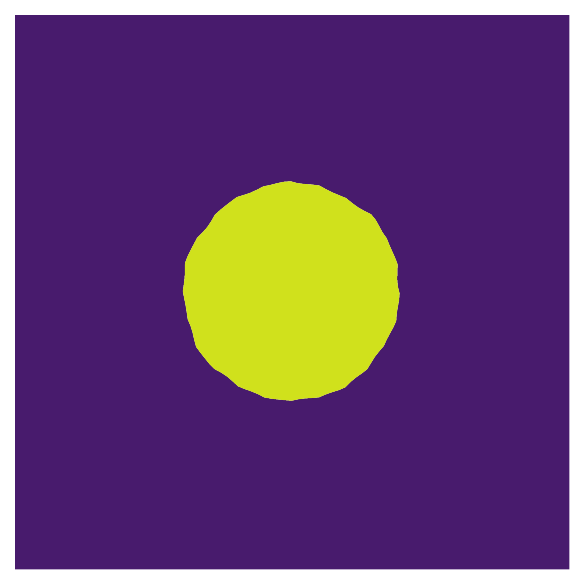}
    \includegraphics[scale=0.32]{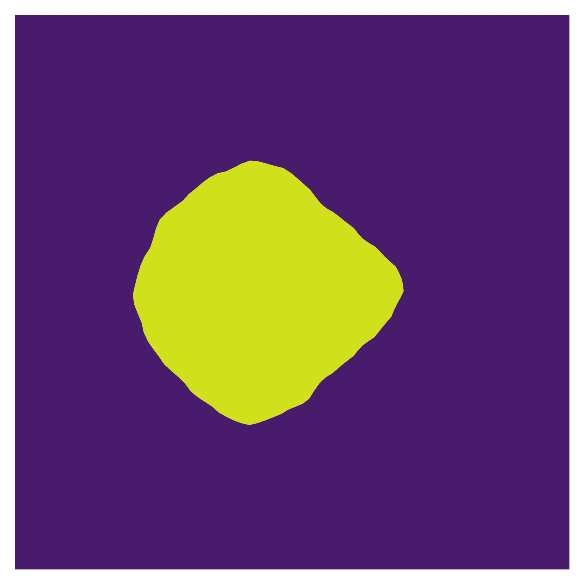}
    \includegraphics[scale=0.32]{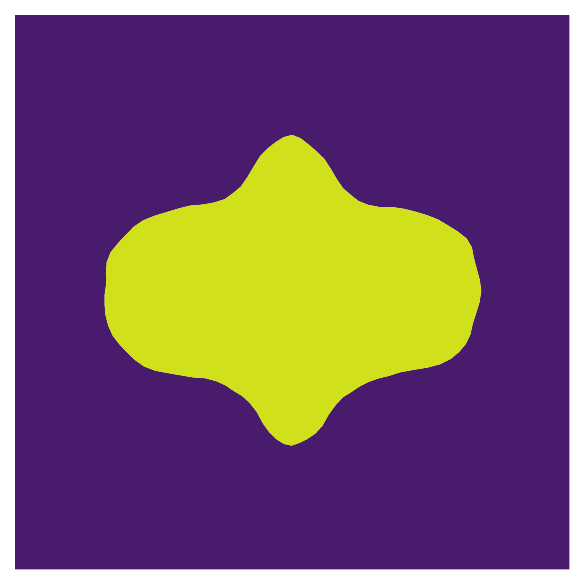}
    \includegraphics[scale=0.32]{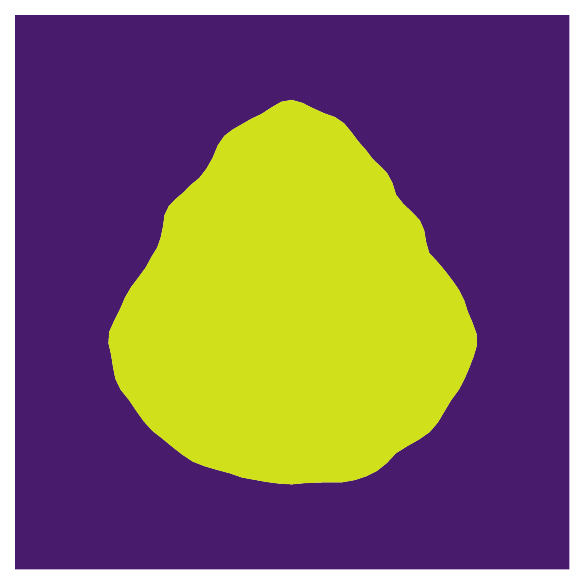}
    \caption{Final reconstruction of the targets in figures \ref{fig:synthetic_targets}
    produced by EKI for $N=20$ (top row, $N=40$ (middle row) and $N=80$ (bottom row).}
    \label{fig:shapes}
\end{figure}

Figure \ref{fig:relative_error}, \ref{fig:data_misfits} and 
\ref{fig:consensus} show the relative errors, data misfits and
consensus deviations for our experiments across the selected
targets and ensemble sizes. These all decrease at various rates in the early
iterations after which they stagnate. As the Kalman gain corrects the
ensemble members, and therefore the motion of their respective curves,
the data misfit decreases meaning that each member improves its prediction.
This increases consensus in the ensemble, which explains what is
seen in figure \ref{fig:consensus}. We notice from the data misfits and
the momentum consensus that higher values of $N$ provides a more accurate 
approximation of the true momentum, which explains the accuracy of the
matches seen in figure \ref{fig:shapes}. Note that the relative error, which
is a surrogate for posterior consistency, also decreases (albeit not with
a clear pattern across shapes and ensemble sizes). Since the forward operator
in use is heavily nonlinear, theoretical convergence results are not readily 
obtained at this stage. We highlight that the Kalman gain is very efficient
in correcting the ensemble momenta, with the consensus decreasing exponentially
in the early stages of the algorithm. We comment on this later. The conclusion
is therefore that even at a modest ensemble size, EKI performs well. It is 
not certain that the same behaviour that we see above (i.e. few iterations are
needed) will scale with $N$ and the size of the problem, but the results are
promising for research in this direction.\\

Higher values of $\xi$ were found to slow the convergence of the algorithm compared to the
selected value, which is consistent with the behaviour for landmark-based EKI
\cite{bock2021learning}, and we do not comment on it further. We noticed that
the value of $\kappa$ also influences the convergence of the EKI; for small
values the operator $\idop - \kappa\Delta$ approaches the identity, and since
the mismatch $X$ is computed from point evaluations of the finite element
function, information can be lost if the grid is not sufficiently refined.
A larger value of $\kappa$ ``spreads out'' the mismatch which improves convergence
for coarser grids.\\

\begin{figure}
    \centering
    \includegraphics[scale=0.20]{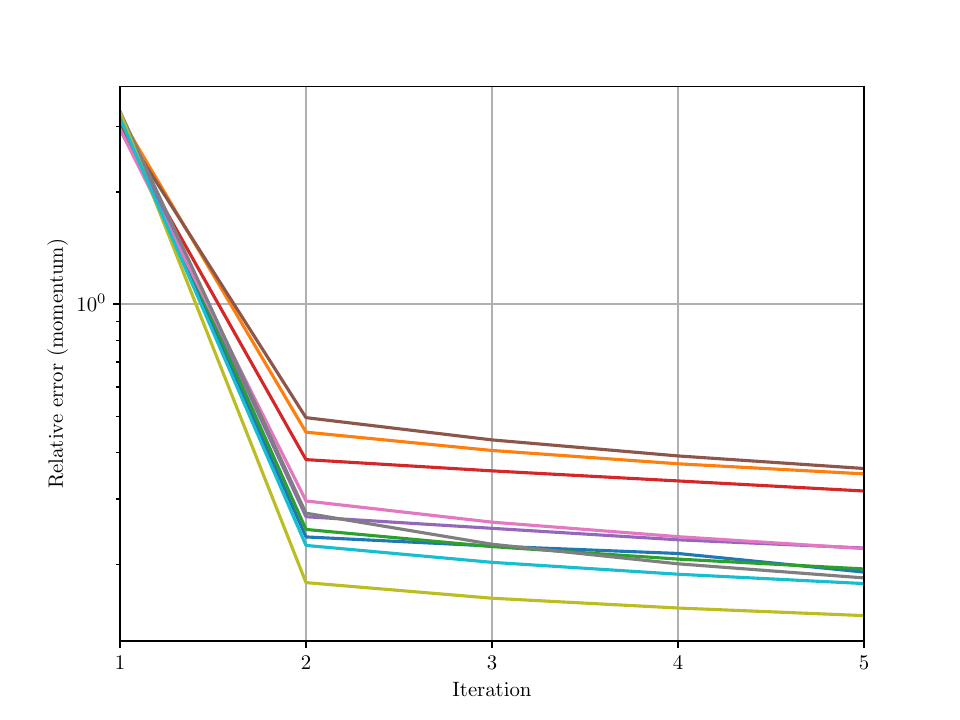}
    \includegraphics[scale=0.20]{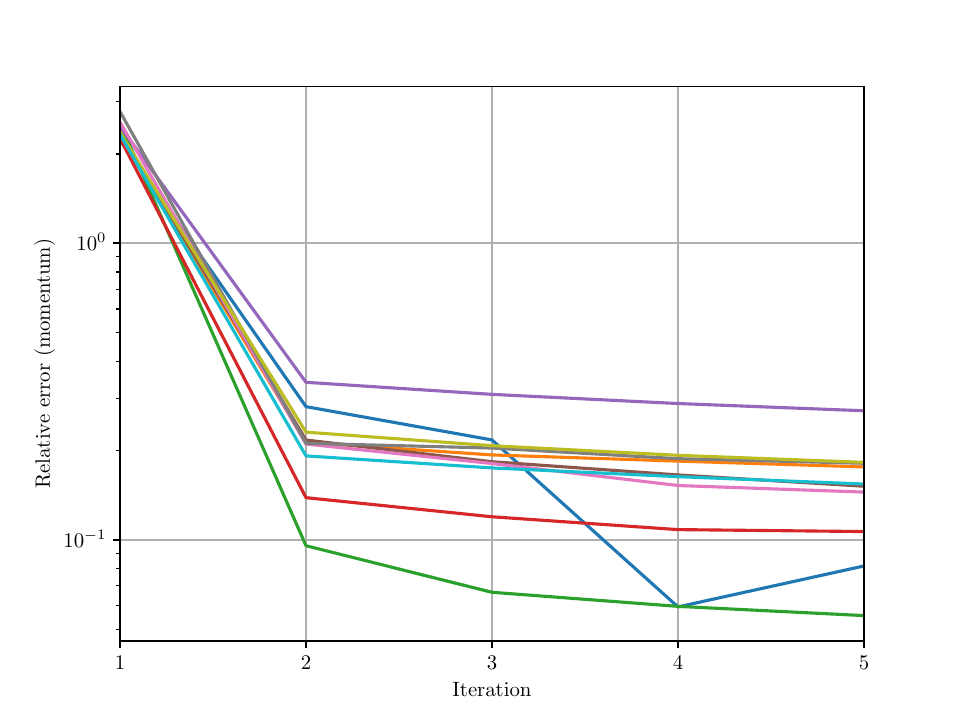}
    \includegraphics[scale=0.20]{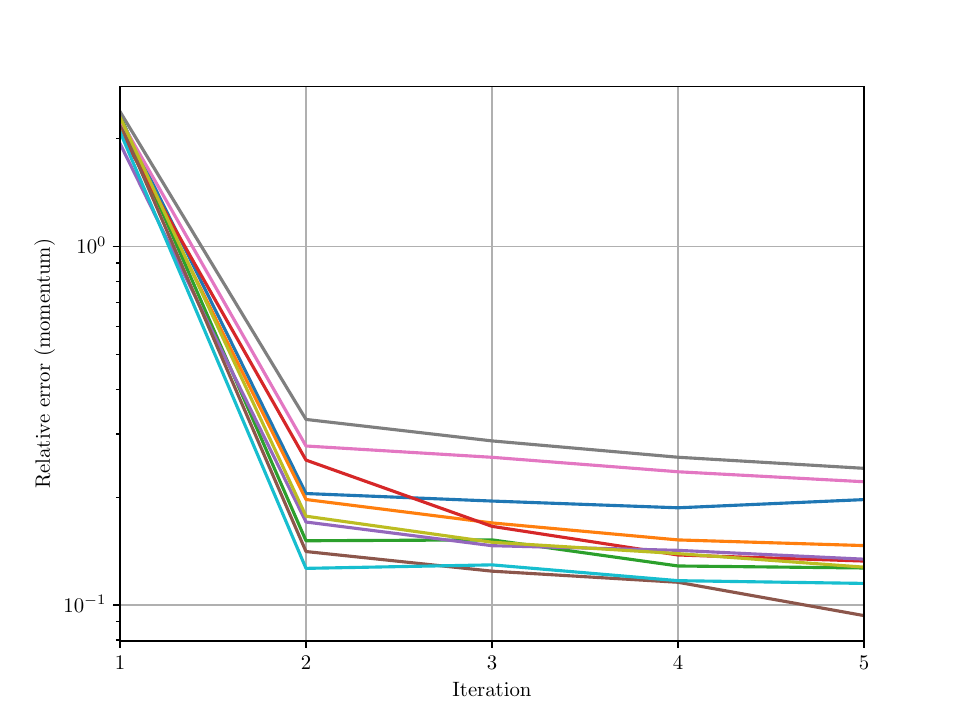}
    \includegraphics[scale=0.20]{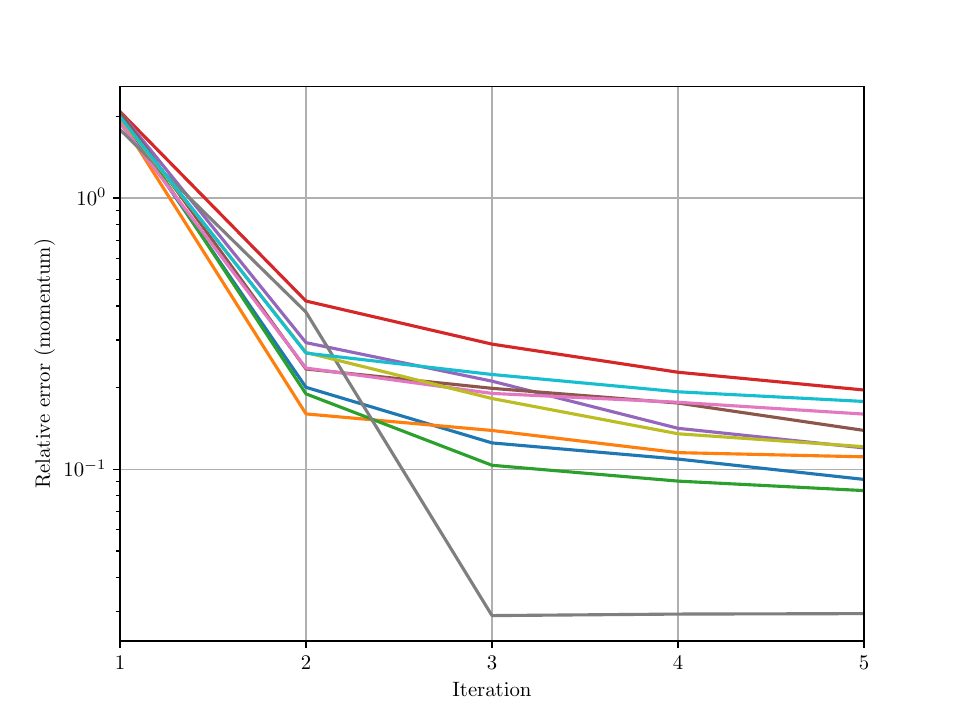}\\
    \includegraphics[scale=0.20]{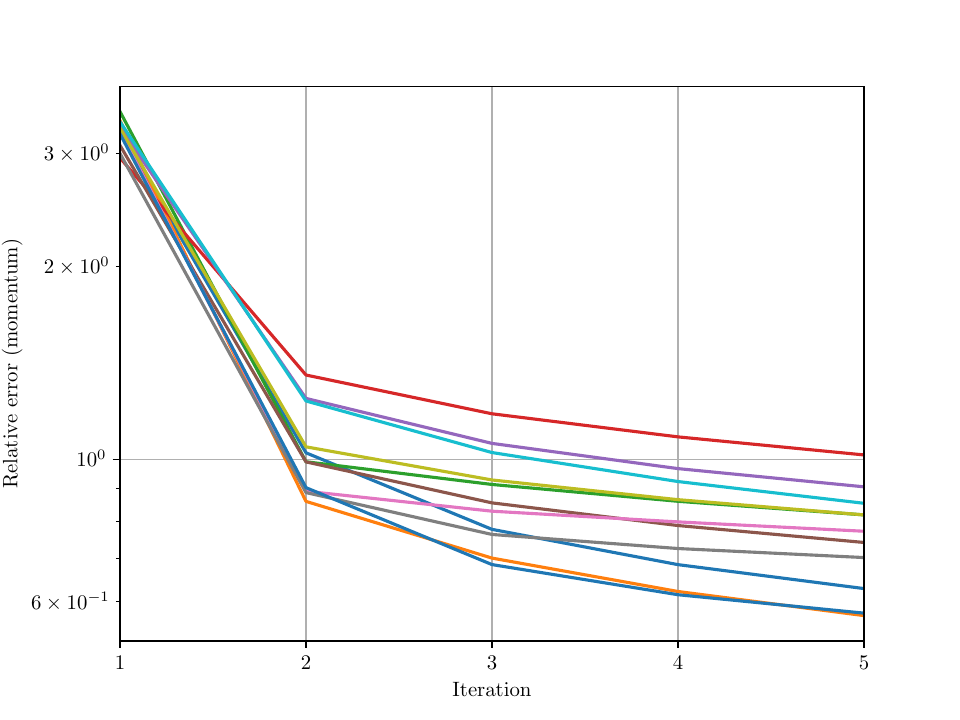}
    \includegraphics[scale=0.20]{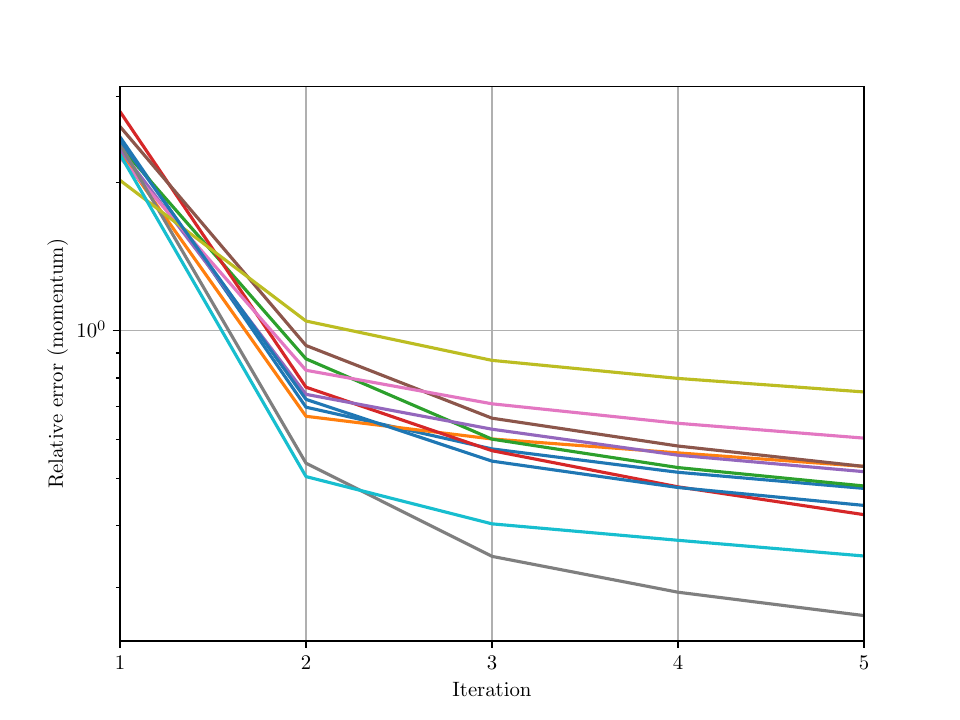}
    \includegraphics[scale=0.20]{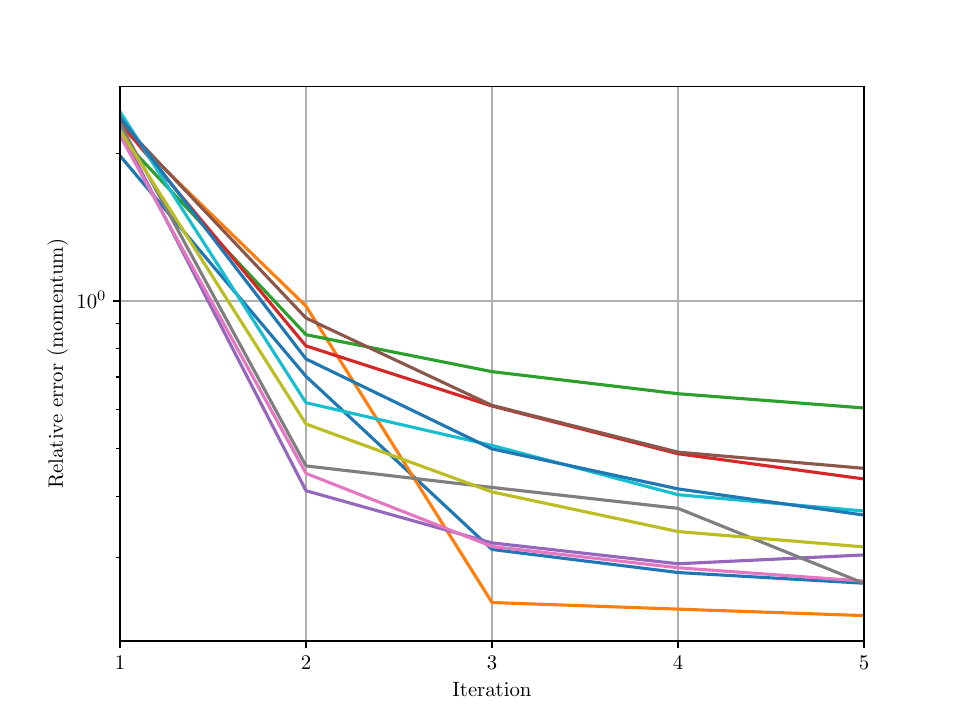}
    \includegraphics[scale=0.20]{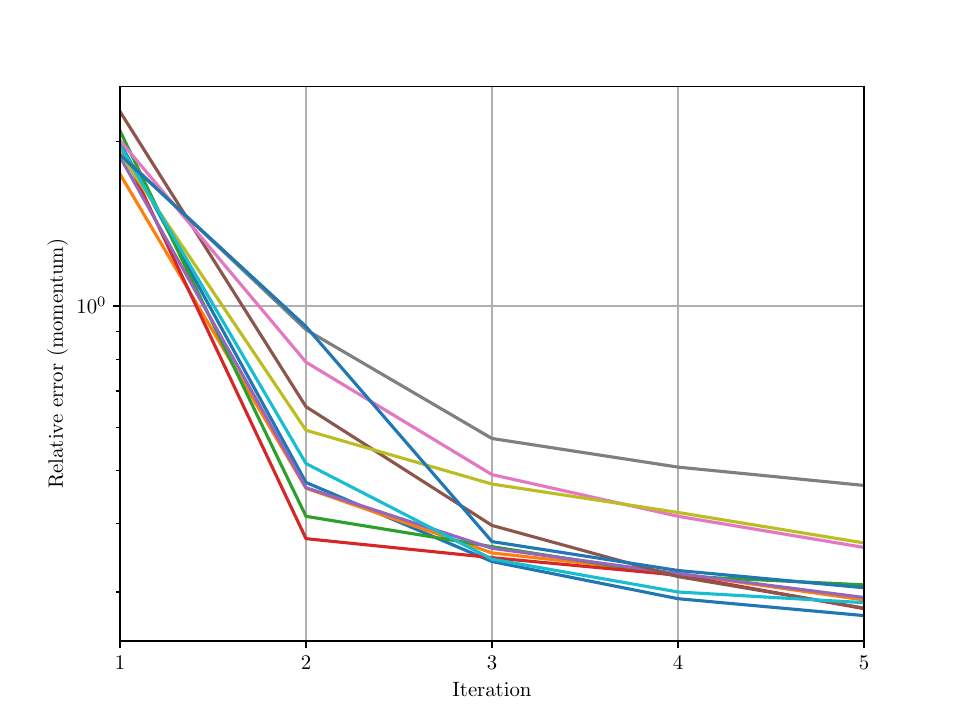}\\
    \includegraphics[scale=0.20]{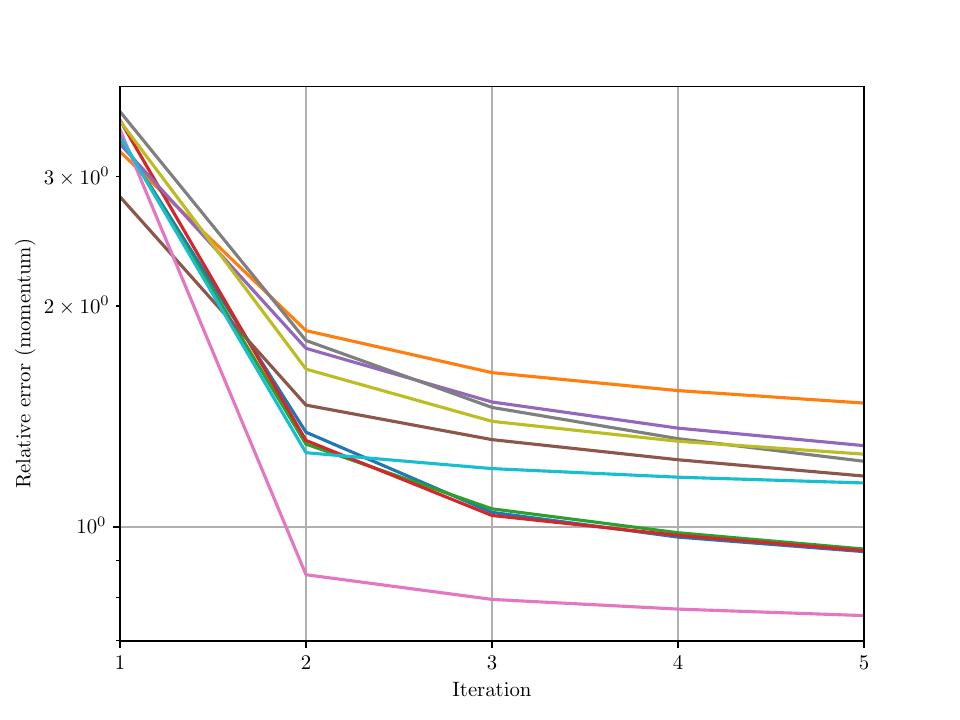}
    \includegraphics[scale=0.20]{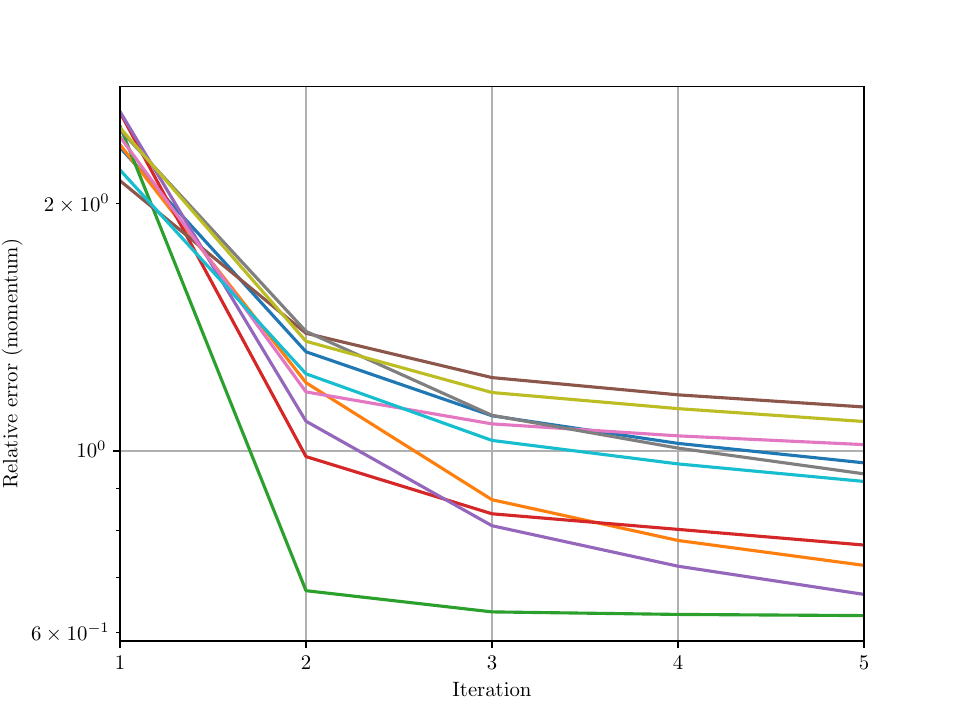}
    \includegraphics[scale=0.20]{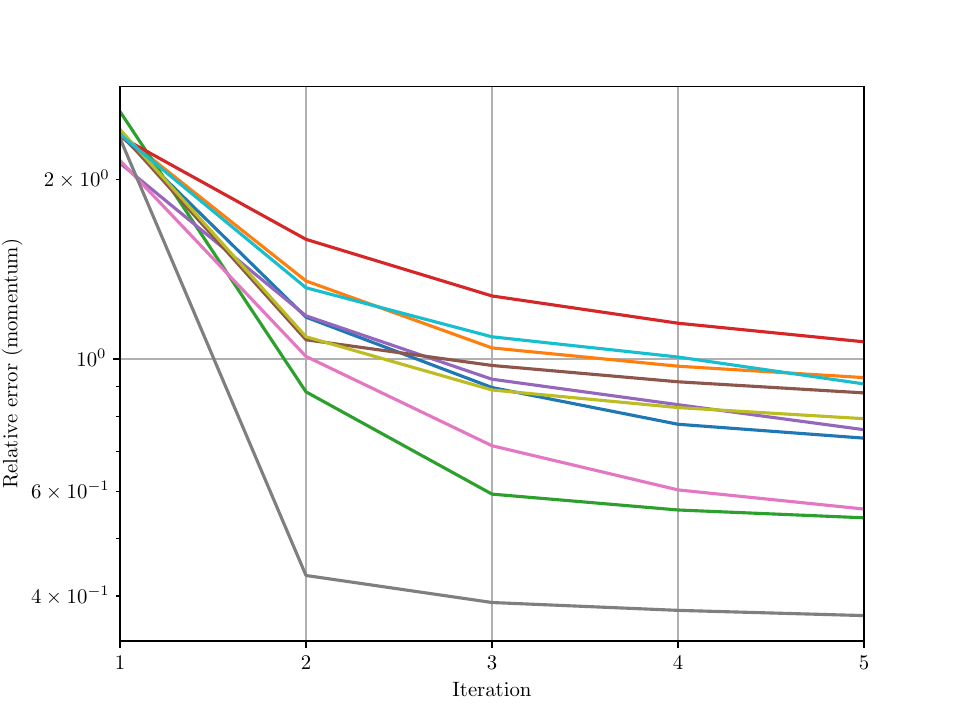}
    \includegraphics[scale=0.20]{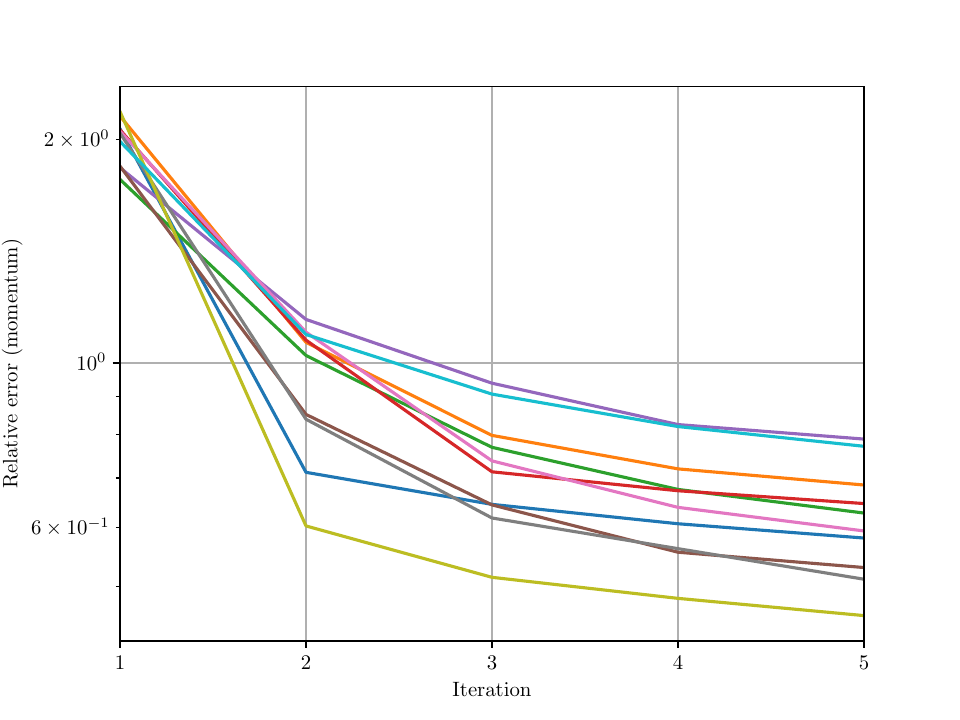}
    \caption{Relative error (equation \eqref{eq:relative_error}). The rows corresponds
    to $N=20$, $N=40$ and $N=80$ and the columns to the targets in figure \ref{fig:synthetic_targets}.}
    \label{fig:relative_error}
\end{figure}

\begin{figure}
    \centering
    \includegraphics[scale=0.20]{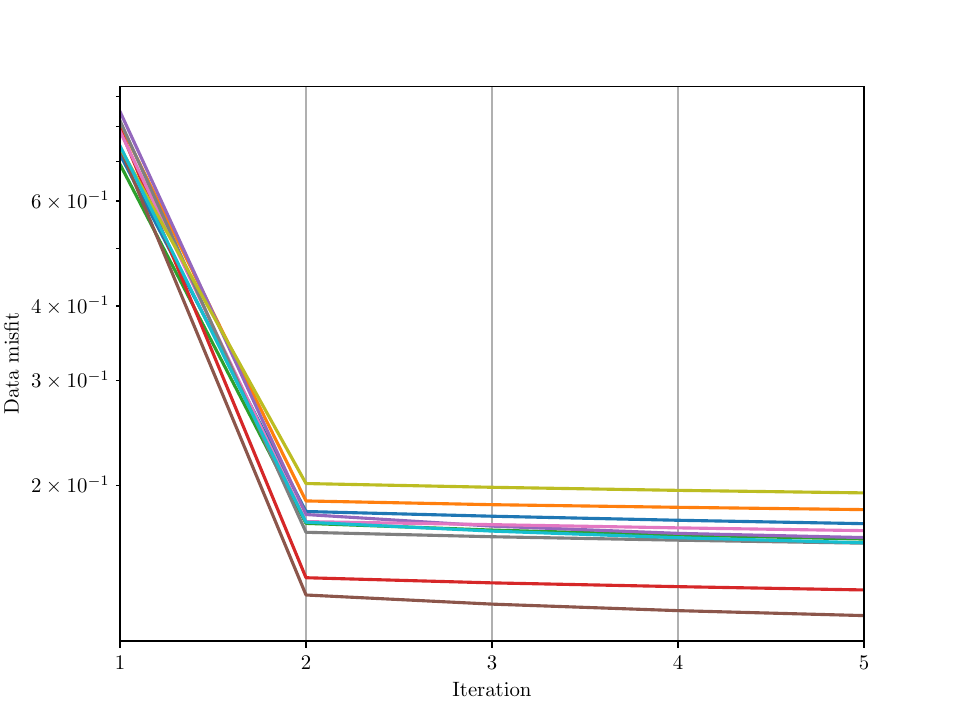}
    \includegraphics[scale=0.20]{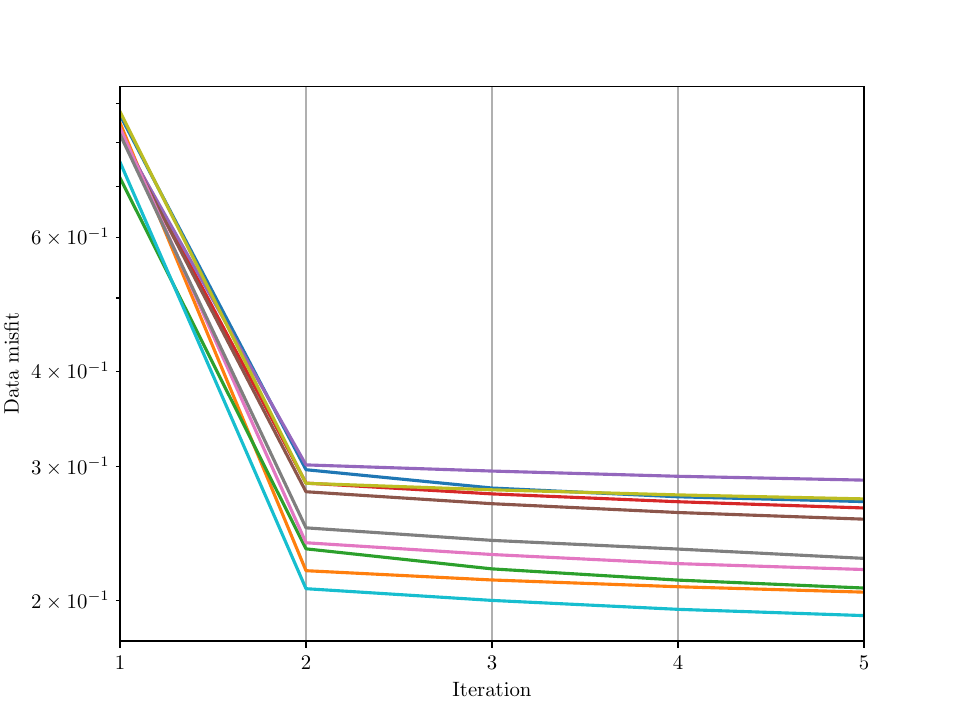}
    \includegraphics[scale=0.20]{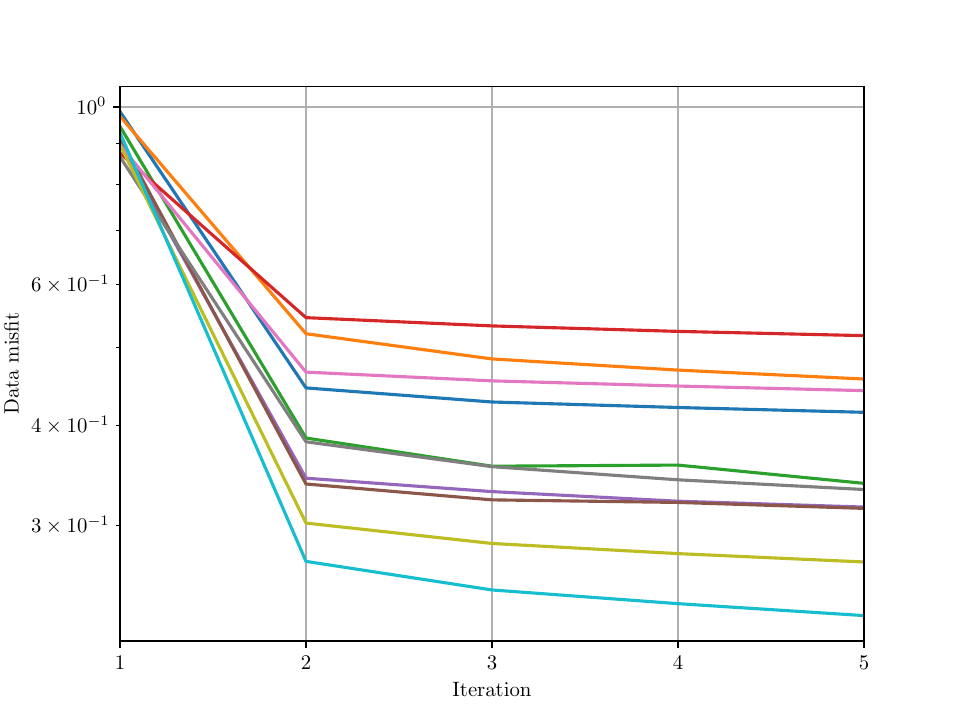}
    \includegraphics[scale=0.20]{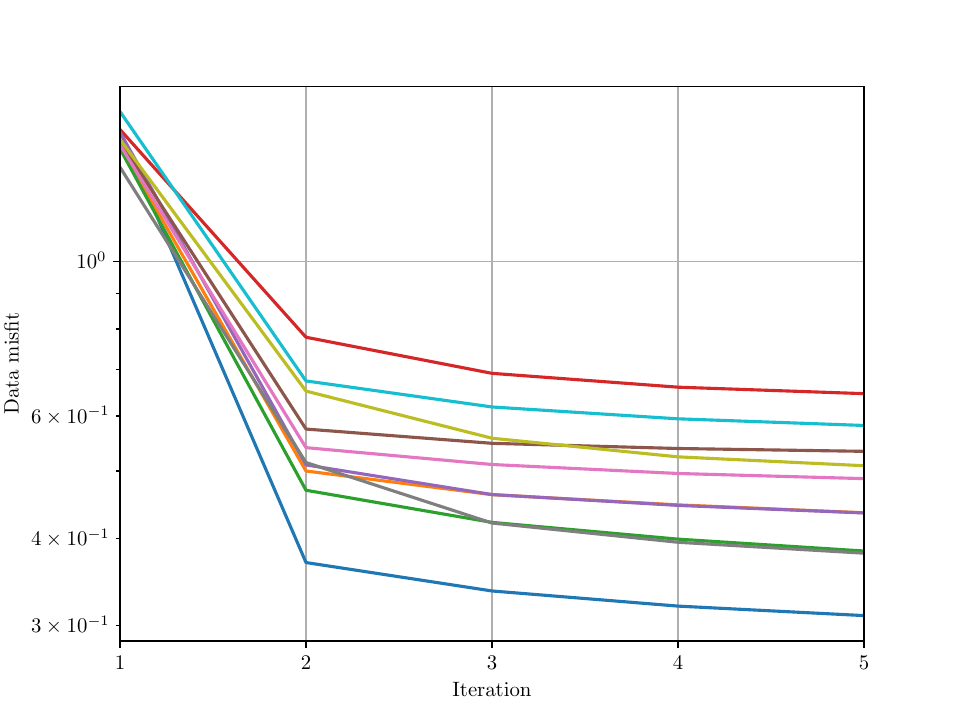}\\
    \includegraphics[scale=0.20]{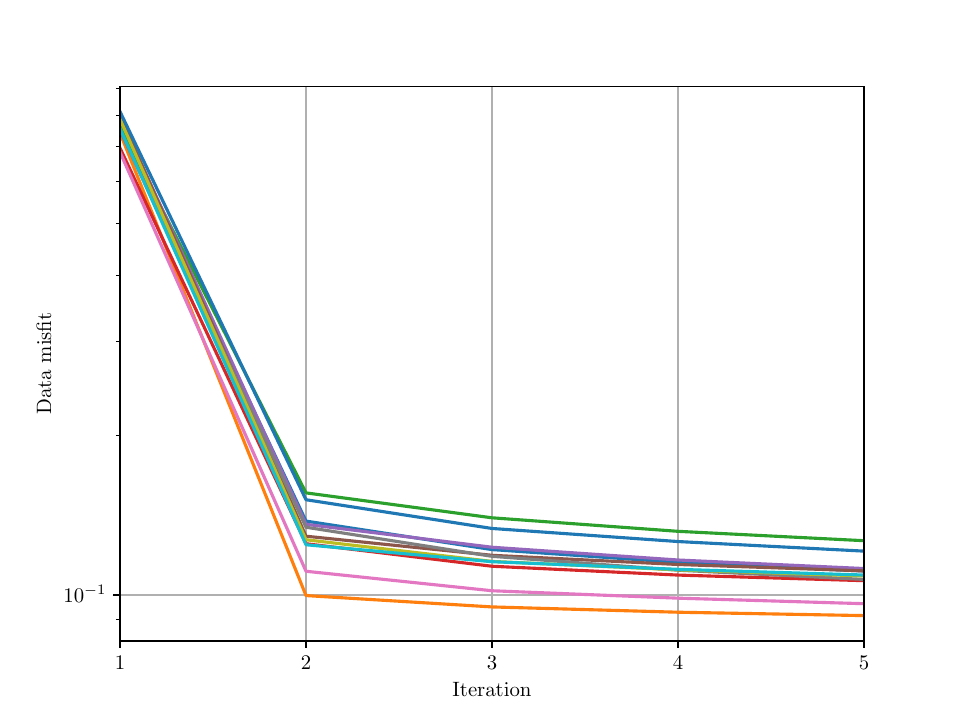}
    \includegraphics[scale=0.20]{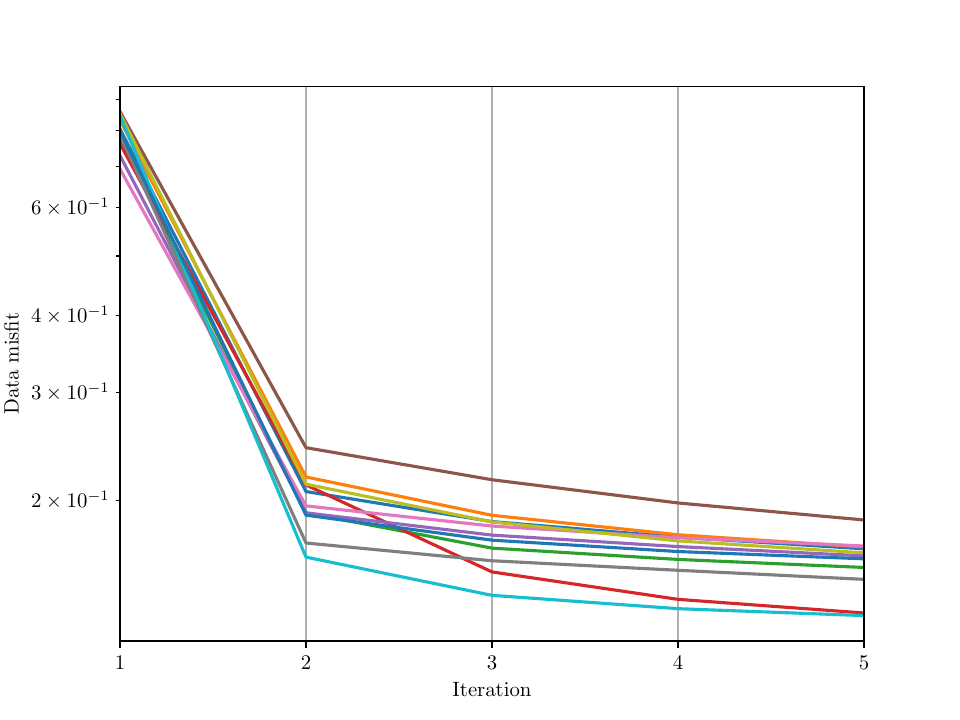}
    \includegraphics[scale=0.20]{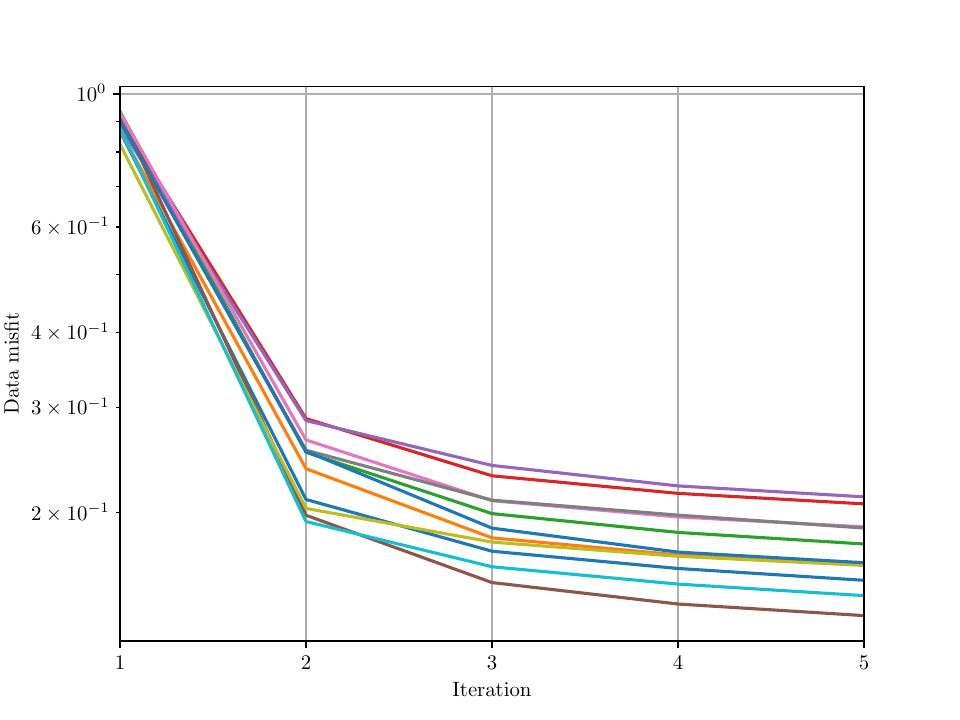}
    \includegraphics[scale=0.20]{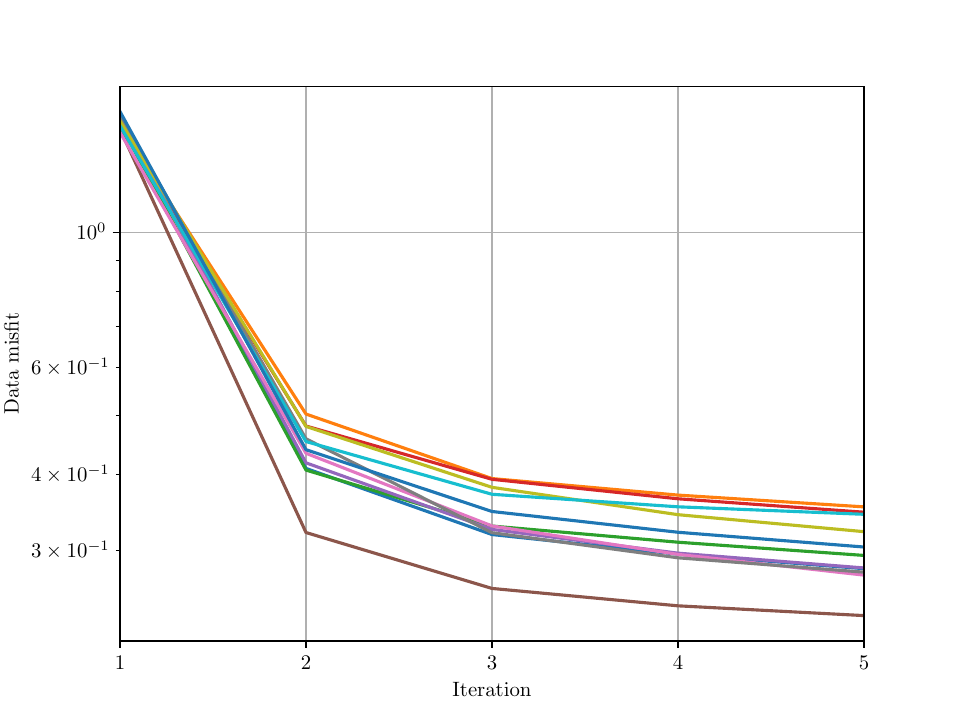}\\
    \includegraphics[scale=0.20]{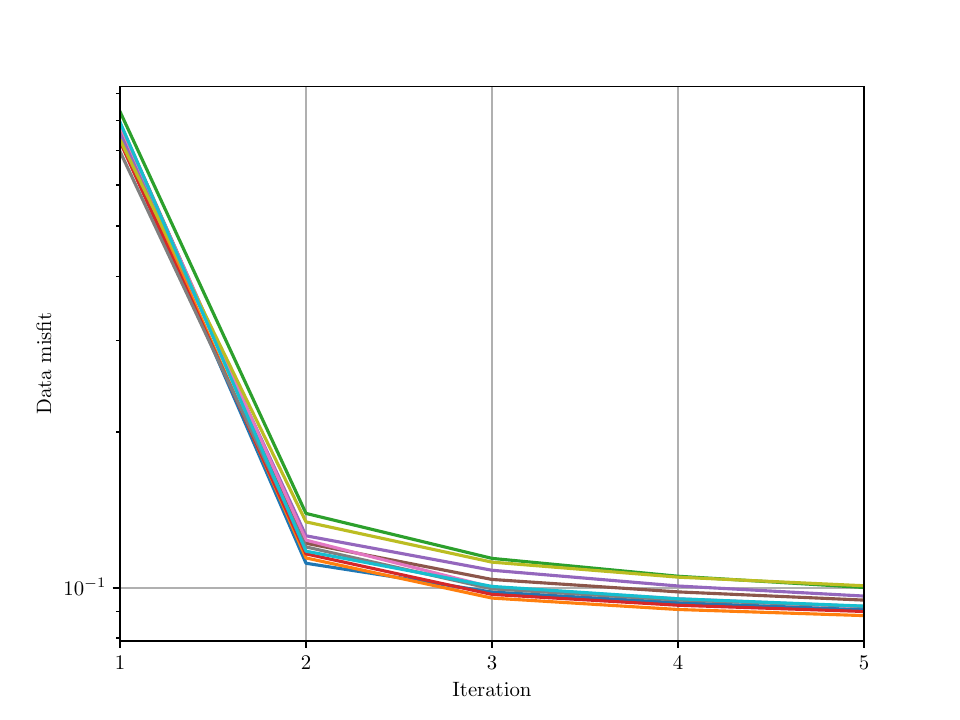}
    \includegraphics[scale=0.20]{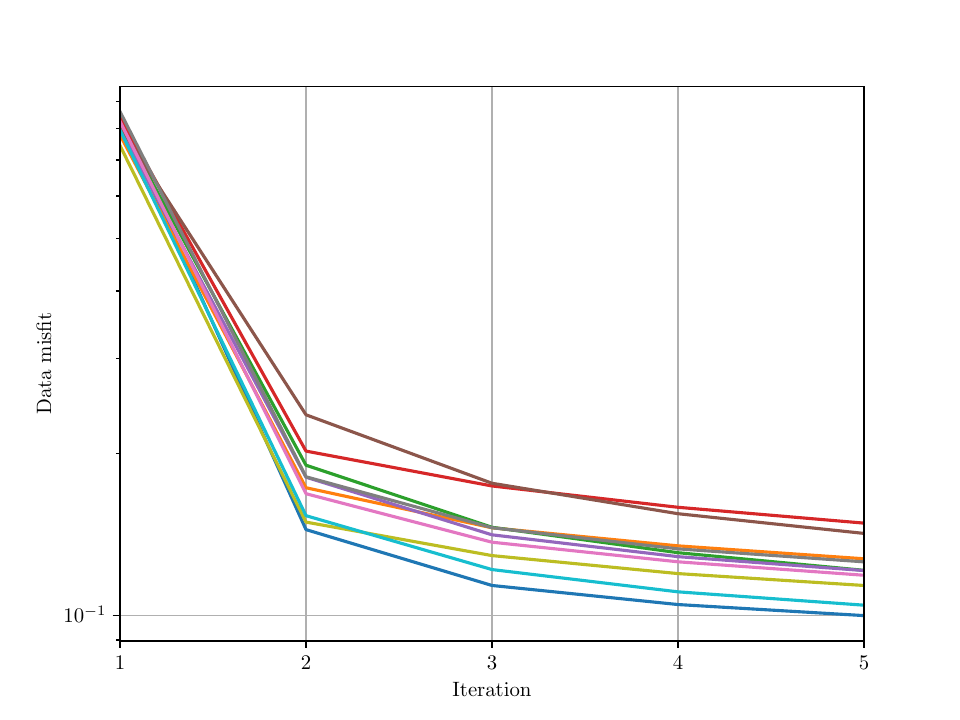}
    \includegraphics[scale=0.20]{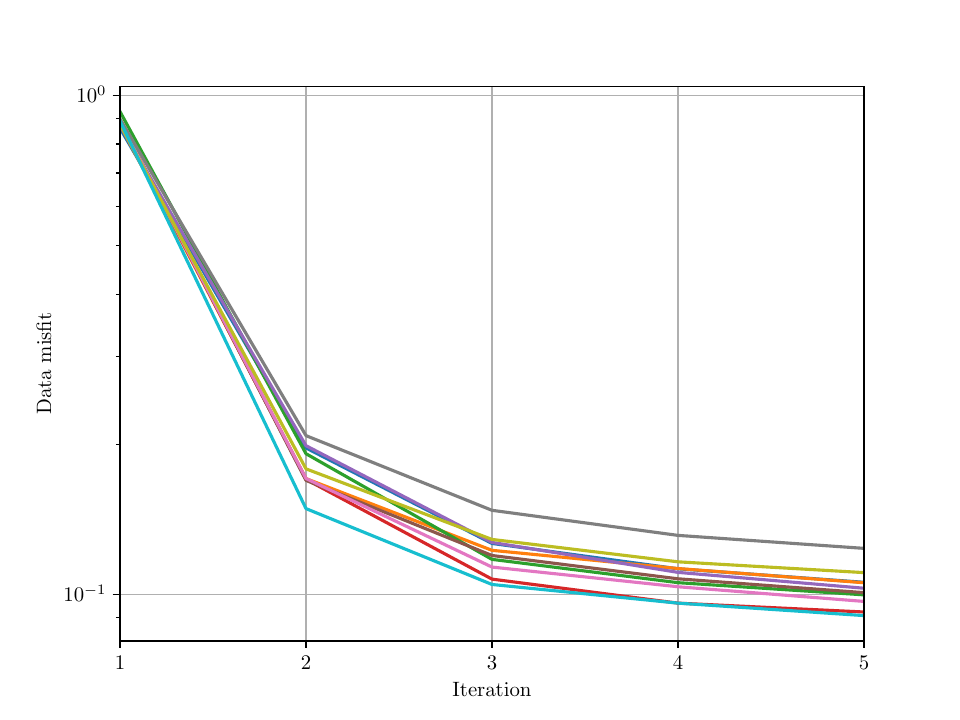}
    \includegraphics[scale=0.20]{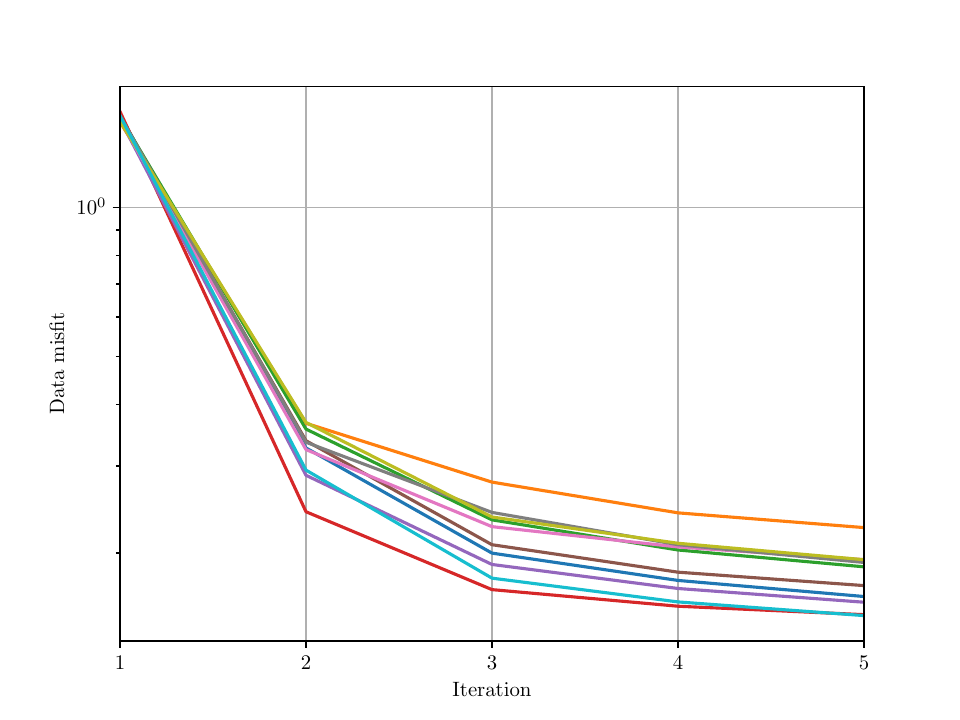}
    \caption{Data misfits  (equation \eqref{eq:mismatch:C}). The rows corresponds
    to $N=20$, $N=40$ and $N=80$ and the columns to the targets in figure
     \ref{fig:synthetic_targets}.}
    \label{fig:data_misfits}
\end{figure}

\begin{figure}
    \centering
    \includegraphics[scale=0.20]{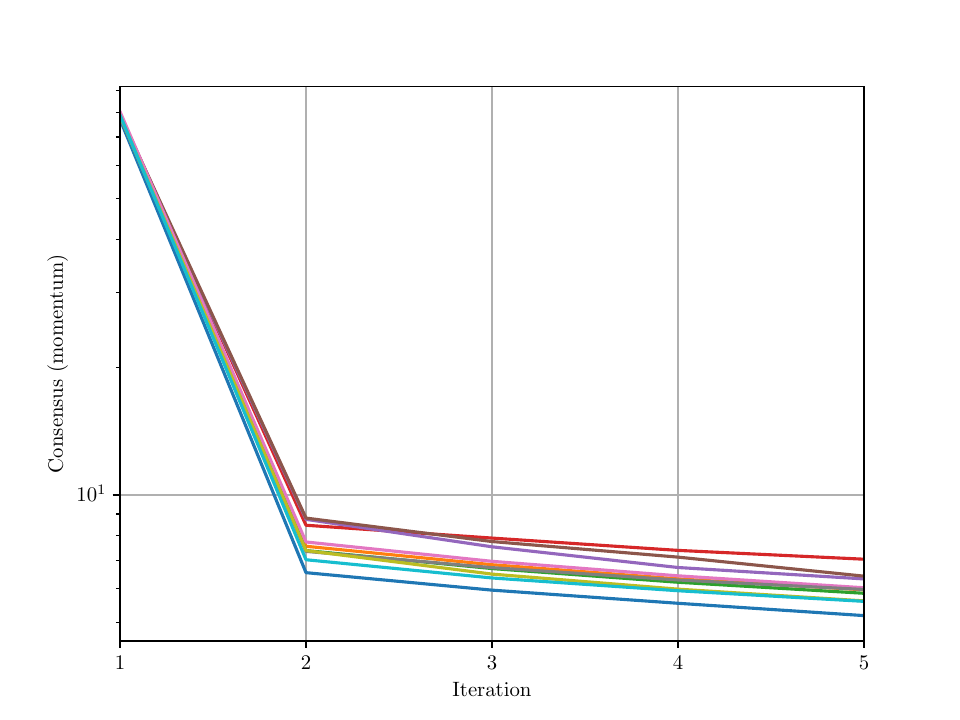}
    \includegraphics[scale=0.20]{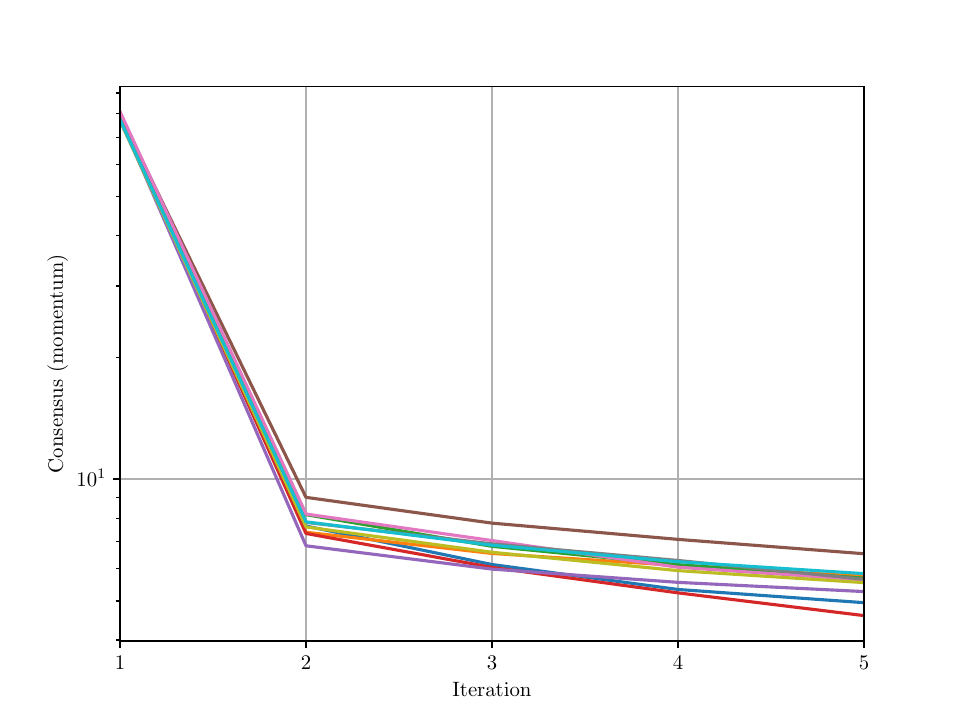}
    \includegraphics[scale=0.20]{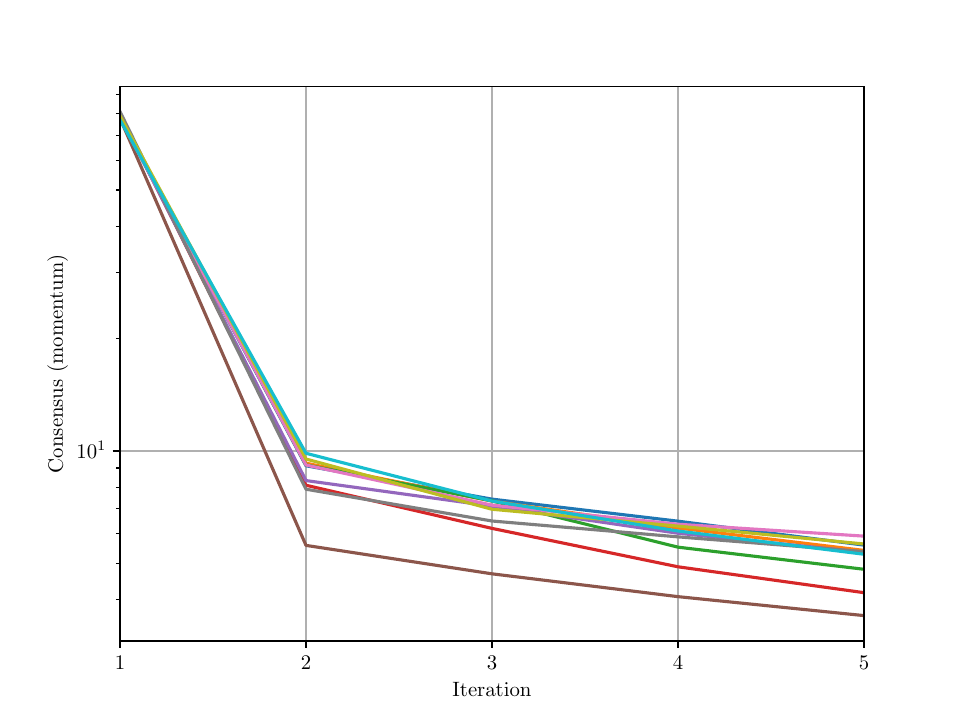}
    \includegraphics[scale=0.20]{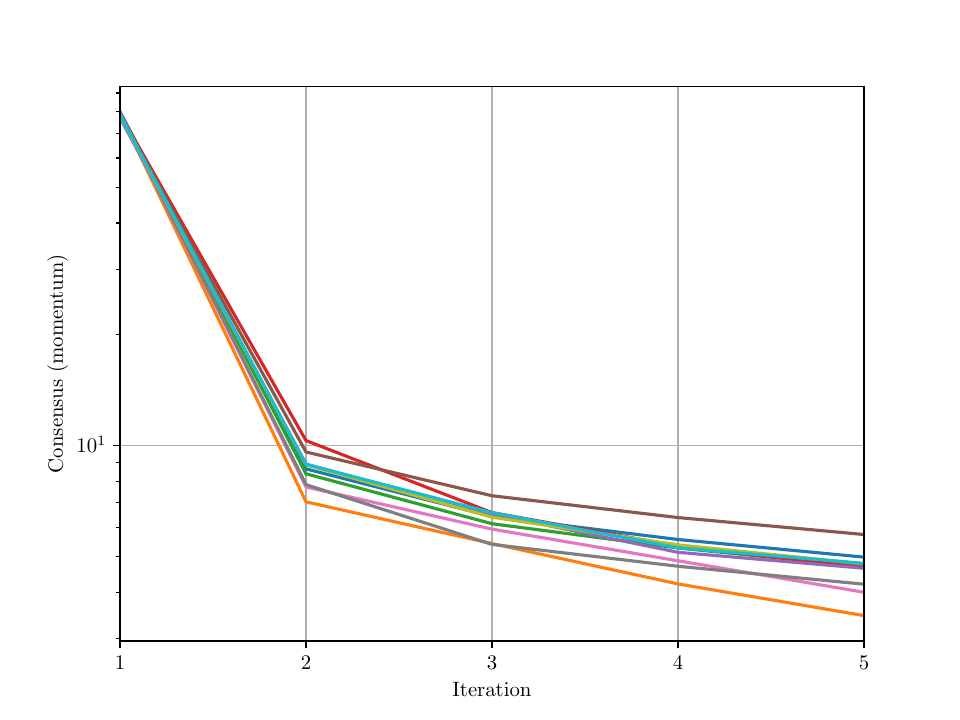}\\
    \includegraphics[scale=0.20]{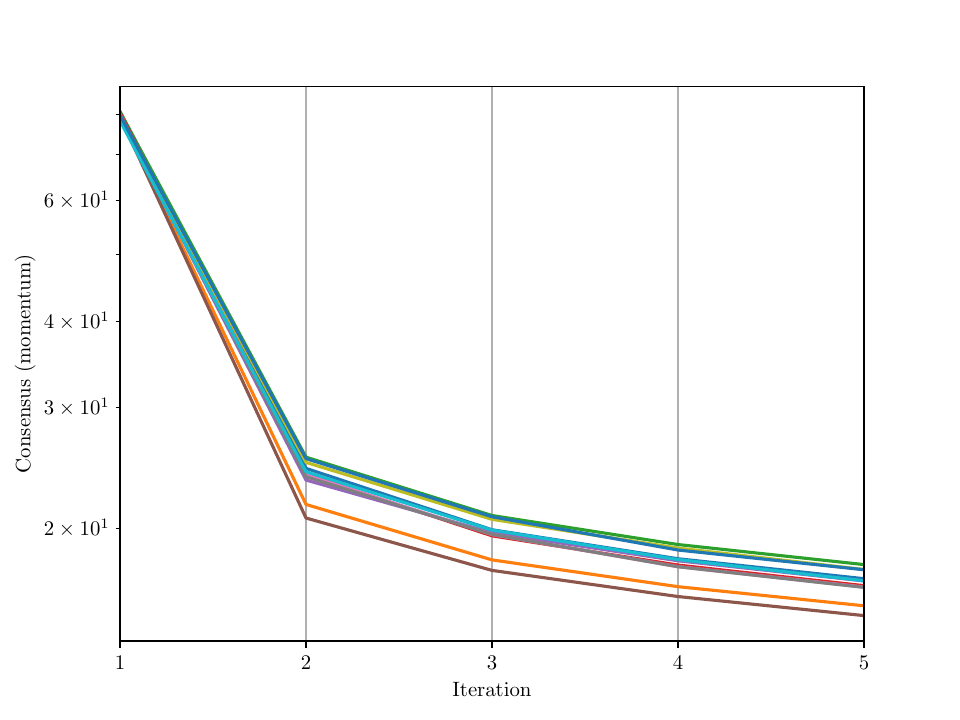}
    \includegraphics[scale=0.20]{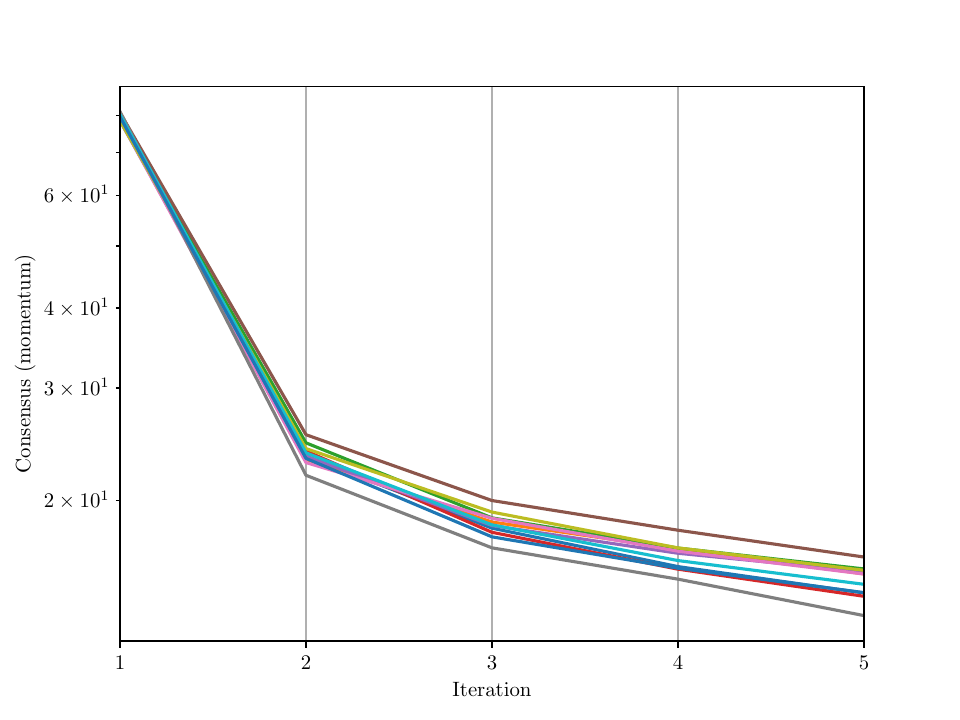}
    \includegraphics[scale=0.20]{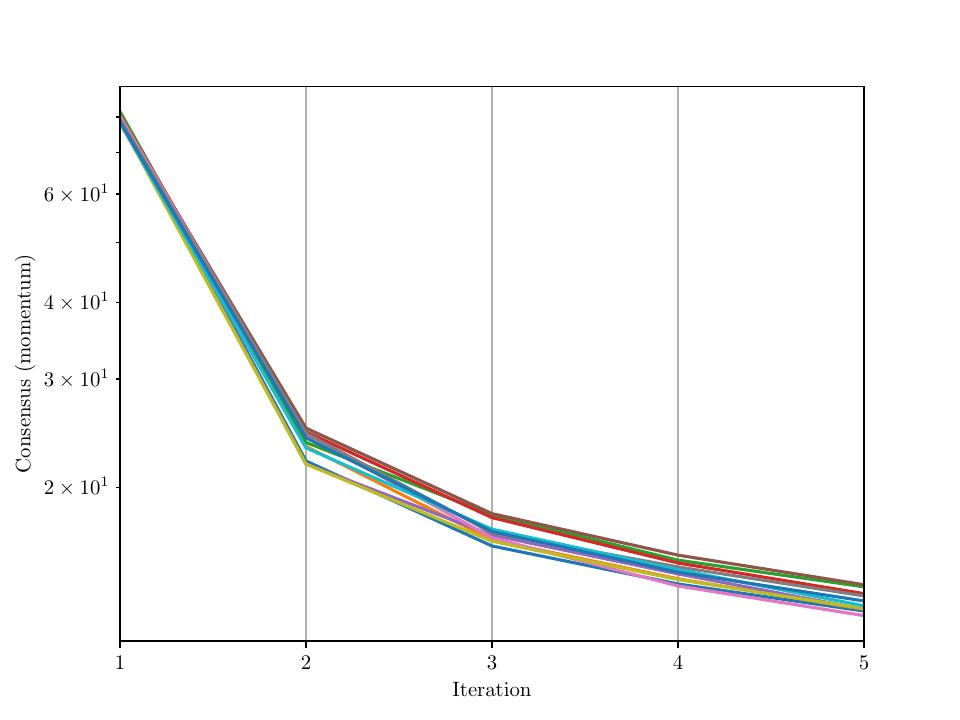}
    \includegraphics[scale=0.20]{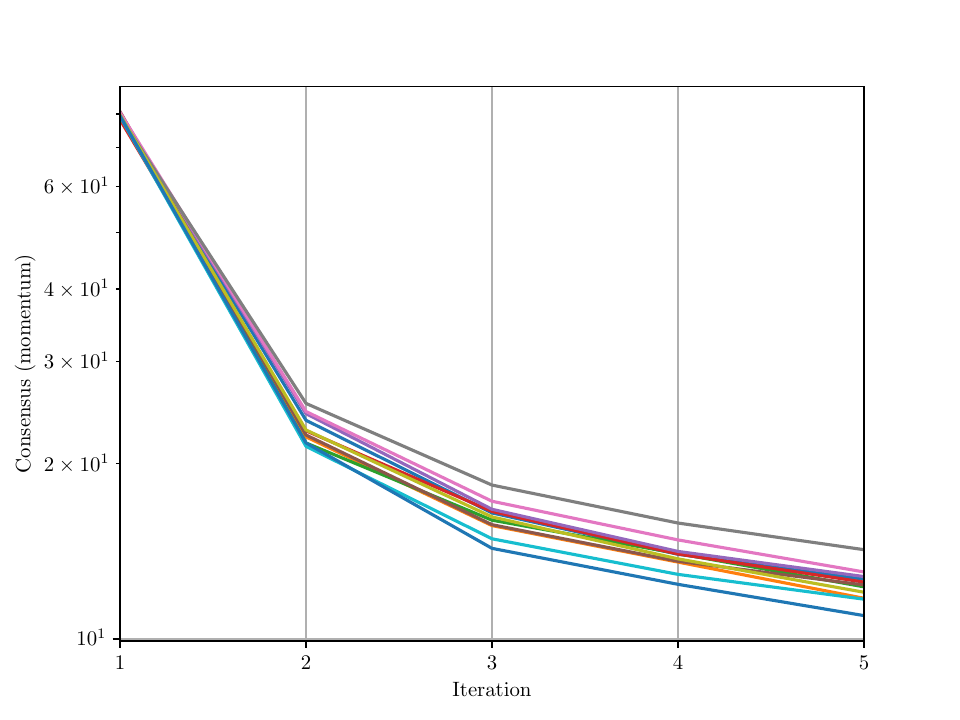}\\
    \includegraphics[scale=0.20]{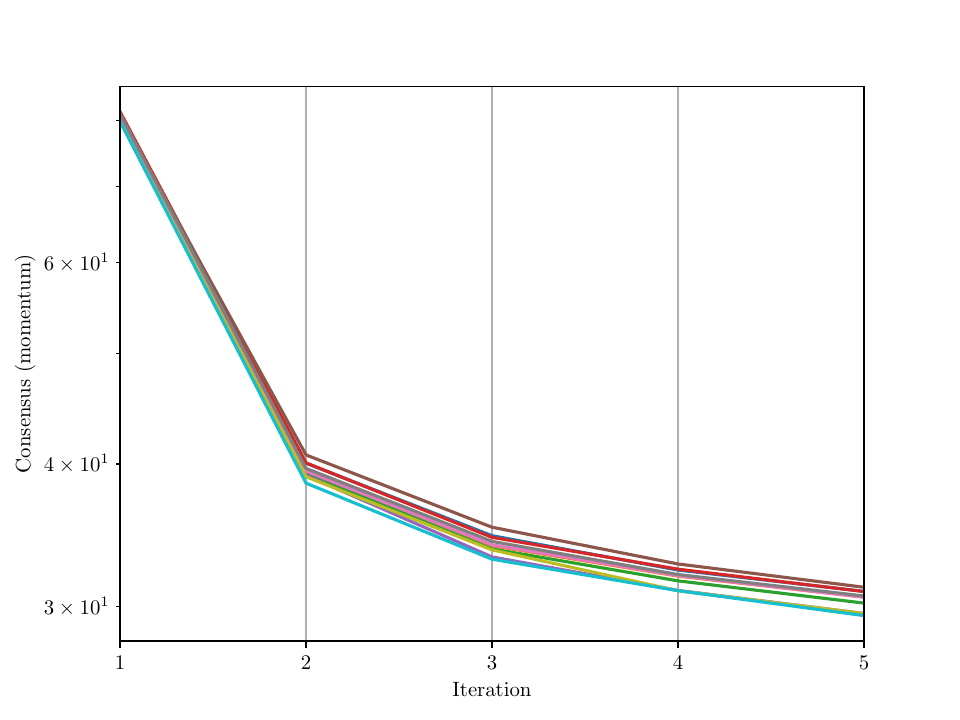}
    \includegraphics[scale=0.20]{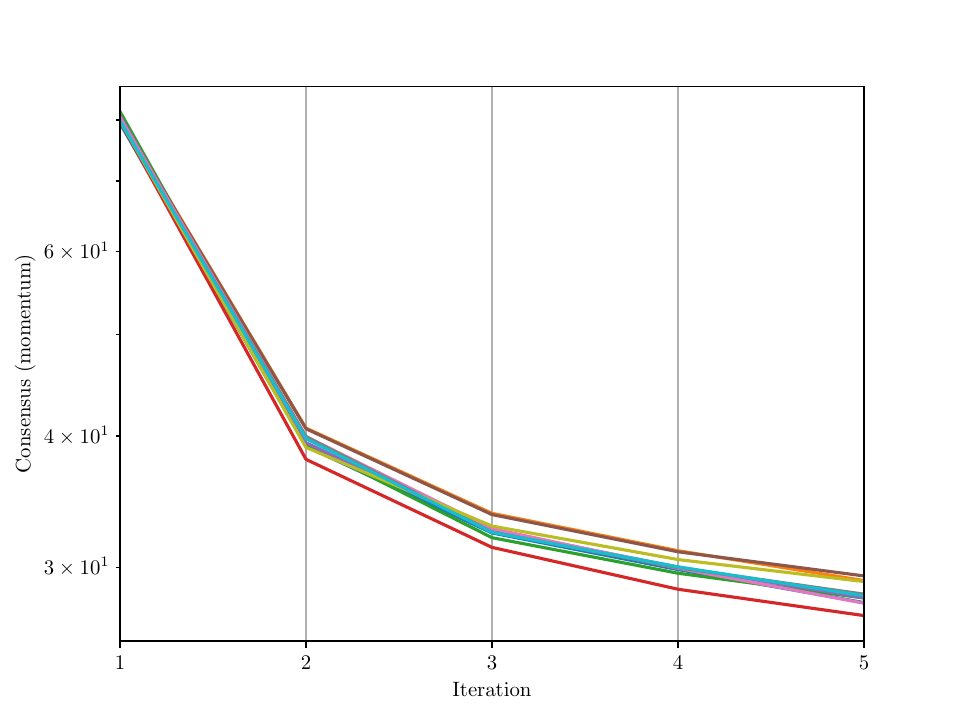}
    \includegraphics[scale=0.20]{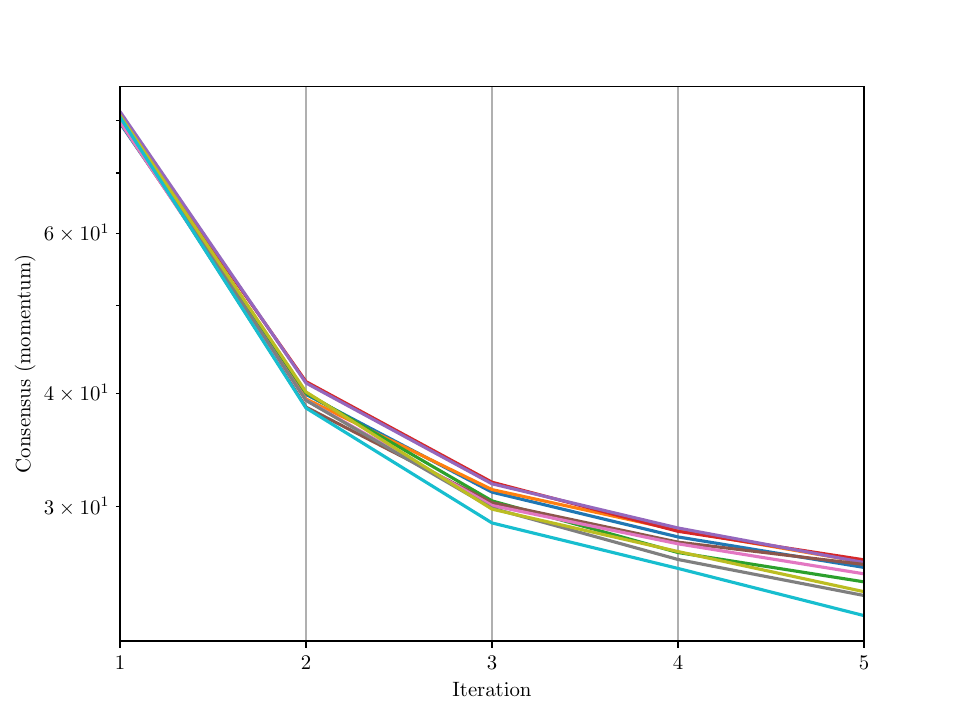}
    \includegraphics[scale=0.20]{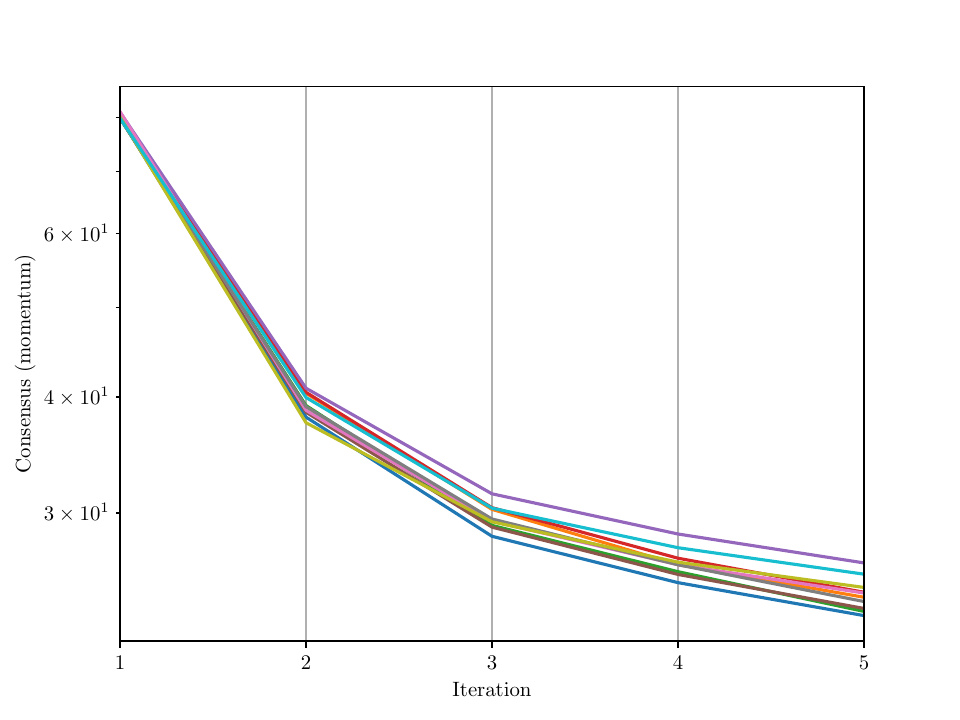}
    \caption{Momentum consensus deviation (equation \eqref{eq:consensus}). The rows corresponds
    to $N=20$, $N=40$ and $N=80$ and the columns to the targets in figure
     \ref{fig:synthetic_targets}.}
    \label{fig:consensus}
\end{figure}

\section{Summary and outlook}\label{sec:conclusion}

In this paper we have presented a parameterisation- and derivative-free
method for matching closed planar curves. A moving mesh discretisation
of Hamilton's equations for curves was described
using the induced diffeomorphism of a vector field occupying
the Wu-Xu finite element space. We also describe a
transformation theory for this element facilitating a computationally 
performant forward model for use in the associated inverse problem.
Finding the momentum encoding the
forward motion of the template matches a desired curve was treated
as a Bayesian inverse problem in section \ref{sec:inverse_problem} and
EKI was used to approximate its solution. The numerical results presented in
section \ref{sec:applications} suggests that the method shows great
promise. Not only does is it easy to implement, the EKI is shown to
quickly reach ensemble consensus meaning that it is efficient in exploring
the subspace spanned by the initial ensemble. This is in part
thanks to the momentum being a one-dimensional signal on the template.
Treating the mismatch term in a negative Sobolev norm was shown to increase
both accuracy of our results and robustness to mesh resolution. We also
showed that the method is
robust to the choice of initial ensemble even when the ensemble size is
less than half the dimension of the forward problem. Further, assuming
the forward operator is scalable as the mesh is refined (large-scale PDE
solves are common in many areas of scientific computing, and the inverse
needed in the Kalman gain scales cubically in $N$ \cite{mandel2006efficient}).\\

Future work includes proving convergence of the finite element 
discretisation for \eqref{eq:hamiltonseqs_reduced:disc} and 
subsequently using these error estimates to quantify error in a rigorous
treatment of the Bayesian inverse problem
\cite{cotter2010approximation}. As indicated in \cite{bock2021note},
some challenges exist for nonconforming finite element methods with
singular source terms.
The template considered in this paper is a piece-wise linear curve. An
obvious extension would be to apply isoparametric methods to cater for
piece-wise higher-order polynomial curves. The effect of this would only
affect the right-hand side and would not affect regularity results for 
the velocity. An advantage of the finite element method for curves is also
that it allows for adaptivity e.g. refinement of the mesh only 
in the vicinity of the embedded template.
We considered problems of modest size to illustrate the discretisation
and the EKI. As the mesh is refined, it is likely the case that the
dimension of the forward operator dwarfs the size of the ensemble and effects
of the MC approximation are more pronounced. This is the case for ensemble methods
for e.g. numerical weather prediction and several techniques exist to counter
these effects \cite{petrie2008localization} (e.g. localisation or covariance
inflation). In particular, localisation methods may be suitable to assume conditional
independence between separated states (i.e. parts of the shape that are distant
in physical space) so as to counter spurious correlations.

\appendix
\section{Proof of Theorem \ref{thm:momchar}}\label{app:momchar}
The momentum satisfies
\[
\dot{p}_t + \nabla u_t\transp\circ q_t p_t = 0
\]
Using the ansatz we verify:
\begin{align*}
\dot{p}_t + \nabla u_t\transp\circ q_t p_t & = \dot{J_t\invtransp} p_0 + \nabla u_t\transp
J_t\invtransp p_0\\
& = - J_t\invtransp d(J_t\transp) J_t\invtransp  p_0 + \nabla u_t\transp\circ q_t J_t\invtransp p_0\\
& = - J_t\invtransp (d J_t)\transp J_t\invtransp  p_0 + \nabla u_t\transp\circ q_t J_t\invtransp p_0\\
& = - J_t\invtransp (\nabla u_t\circ q_t J_t)\transp J_t\invtransp  p_0 + \nabla u_t\transp\circ q_t J_t\invtransp p_0\\
& = - J_t\invtransp J_t\transp \nabla u_t\transp\circ q_t J_t\invtransp  p_0 + \nabla u_t\transp\circ q_t J_t\invtransp p_0\\
& = - \nabla u_t\transp J_t\invtransp  p_0 + \nabla u_t\transp\circ q_t J_t\invtransp p_0\\
& = 0\,.
\end{align*}

\bibliographystyle{elsarticle-num} 
\bibliography{main.bib}

\end{document}